\def\eqref#1{equation~\ref{#1}}
\def\1{\bm{1}}
\def\vtheta{{\bm{\theta}}}
\def\va{{\bm{a}}}
\def\vb{{\bm{b}}}
\def\vh{{\bm{h}}}
\def\vv{{\bm{v}}}
\def\vw{{\bm{w}}}
\def\vx{{\bm{x}}}
\def\vz{{\bm{z}}}
\def\mW{{\bm{W}}}
\def\mX{{\bm{X}}}
\def\mY{{\bm{Y}}}
\def\mZ{{\bm{Z}}}
\DeclareMathAlphabet{\mathsfit}{\encodingdefault}{\sfdefault}{m}{sl}
\SetMathAlphabet{\mathsfit}{bold}{\encodingdefault}{\sfdefault}{bx}{n}
\def\gB{{\mathcal{B}}}
\def\gC{{\mathcal{C}}}
\def\gD{{\mathcal{D}}}
\def\gE{{\mathcal{E}}}
\def\gF{{\mathcal{F}}}
\def\gG{{\mathcal{G}}}
\def\gL{{\mathcal{L}}}
\def\gN{{\mathcal{N}}}
\def\gP{{\mathcal{P}}}
\def\gR{{\mathcal{R}}}
\def\gS{{\mathcal{S}}}
\def\gT{{\mathcal{T}}}
\def\gV{{\mathcal{V}}}
\def\gX{{\mathcal{X}}}
\newcommand{\E}{\mathbb{E}}
\newcommand{\KL}{D_{\mathrm{KL}}}
\DeclareMathOperator*{\argmin}{arg\,min}
\theoremstyle{plain}
\newtheorem{theorem}{Theorem}[section]
\newtheorem{proposition}[theorem]{Proposition}
\newtheorem{lemma}[theorem]{Lemma}
\newtheorem{illustration}[theorem]{Illustration}
\theoremstyle{definition}
\newtheorem{definition}[theorem]{Definition}
\theoremstyle{remark}
\definecolor{Gray}{gray}{0.95}
\definecolor{Green}{HTML}{A4E28E} 
\definecolor{LightGreen}{HTML}{E0F6DE} 
\definecolor{Blue}{HTML}{BFC0FF} 
\definecolor{LightBlue}{HTML}{E7E6FF} 
\begin{document}

\twocolumn[
    \icmltitle{Towards Graph Foundation Models: \\Learning Generalities Across Graphs via Task-Trees}



    \icmlsetsymbol{equal}{*}

    \begin{icmlauthorlist}
        \icmlauthor{Zehong Wang}{nd}
        \icmlauthor{Zheyuan Zhang}{nd}
        \icmlauthor{Tianyi Ma}{nd}
        \icmlauthor{Nitesh V Chawla}{nd}
        \icmlauthor{Chuxu Zhang}{uconn}
        \icmlauthor{Yanfang Ye}{nd}\textsuperscript{*}
    \end{icmlauthorlist}

    \icmlaffiliation{nd}{University of Notre Dame}
    \icmlaffiliation{uconn}{University of Connecticut}

    \icmlcorrespondingauthor{Zehong Wang}{zwang43@nd.edu}
    \icmlcorrespondingauthor{Yanfang Ye}{yye7@nd.edu}

    \icmlkeywords{Machine Learning, ICML}

    \vskip 0.3in
]



\printAffiliationsAndNotice{}  

\begin{abstract}

    Foundation models are pretrained on large-scale corpora to learn generalizable patterns across domains and tasks---such as contours, textures, and edges in images, or tokens and sentences in text. In contrast, discovering such generalities in graph-structured data, especially across heterogeneous graph tasks, remains an open challenge. To address this, we propose a novel approach to cross-task generalization in graphs via \textit{task-trees}, which serve as unified learning instances aligning node-, edge-, and graph-level tasks. We theoretically analyze the stability, transferability, and generalization properties of task-trees, showing that pretraining a graph neural network (GNN) on diverse task-trees with a reconstruction objective induces transferable knowledge. This enables efficient adaptation to downstream tasks with minimal fine-tuning. To validate our framework, we introduce \underline{G}raph Generality \underline{I}dentifier on Task-\underline{T}rees (GIT), a graph foundation model that demonstrates strong performance on over 30 graphs across five domains via fine-tuning, in-context learning, and zero-shot generalization. Code and data are available at \url{https://github.com/Zehong-Wang/GIT}.
\end{abstract}

\section{Introduction}

Foundation models have emerged as a cornerstone of general-purpose machine learning, enabling cross-task and cross-domain generalization. Representative examples include large language models (LLMs) for text \citep{achiam2023gpt,touvron2023llama} and large vision models (LVMs) for images \citep{he2022masked,yuan2021florence}. Pretrained on massive datasets, these models capture transferable patterns---such as contours and textures in images, or tokens and sentences in text---that reflect modality-specific generalities. This broad knowledge base allows for efficient adaptation to downstream tasks via in-context learning \citep{xie2022an, chen2024fastgas} and zero-shot generalization \citep{wei2021finetuned}.

Despite the success of foundation models in text and vision, their extension to graph-structured data remains nascent \citep{liu2024one}, primarily due to the high variability across graph datasets \citep{mao2024graph}. Graphs from different domains often encode distinct phenomena---e.g., social networks model human relationships \citep{freeman2004development}, whereas molecular graphs represent chemical structures \citep{zeng2022deep}---leading to both feature \citep{wang2024gft} and structural heterogeneity \citep{qiu2020gcc,wang2024tackling}. Crucially, graph tasks operate on different learning units, such as nodes, edges, or entire graphs, limiting cross-task compatibility within a unified model \citep{wang2024gft}. These challenges hinder the development of graph foundation models capable of capturing transferable generalities. In this work, we specifically address the challenge of task heterogeneity.

\textit{Is it possible to identify cross-task generalities across graphs?} Despite inherent challenges, prior work has explored this question via two main approaches. (1) A graph-theoretic perspective employs the concept of graphons \citep{ruiz2020graphon} to model transferable patterns across graphs. If graphs are sampled from the same graphon, they are expected to share structural properties, enabling effective transfer \citep{ruiz2020graphon,cao2023pre}. However, graphon-based methods rely on strong generative assumptions that rarely hold in real-world settings \citep{levie2021transferability}, and inferring a shared graphon from diverse graphs remains computationally intractable. (2) A substructure-based perspective seeks recurring motifs---such as triangles, stars, or $k$-cliques---across domains \citep{zhao2023gimlet,mao2024graph}. These motifs appear in various contexts (e.g., social, citation, and molecular networks), motivating methods that sample subgraphs consisting of substructures and encode them via GNNs \citep{sun2023all,liu2024one}. However, message-passing GNNs are fundamentally limited in capturing such substructures \citep{garg2020generalization,esser2021learning,zhang2024beyond}, restricting their efficacy in learning transferable subgraph representations.

Given the limitations of prior approaches, we introduce a novel perspective centered on the learning dynamics of message-passing GNNs \citep{kipf2017semisupervised,hamilton2017inductive}. In such models, predictions are made based on \textbf{\textit{task-relevant nodes}}: the target node in node-level tasks, edge endpoints in edge-level tasks, and all nodes in graph-level tasks \citep{srinivasan2020equivalence}. Regardless of task type, GNNs aggregate embeddings over these task-relevant nodes, which can be conceptualized as introducing a virtual \textbf{\textit{task node}} connected to all task-relevant nodes. We define the computation tree rooted at this virtual node as a \textbf{\textit{task-tree}} (Figure~\ref{fig:task align}). Task-trees offer three key advantages: (1) \textit{Learnability}: tree-structured information can be effectively captured by message-passing GNNs \citep{gupta2024mirage}; (2) \textit{Uniformity}: task-trees apply seamlessly across node-, edge-, and graph-level tasks, mitigating task heterogeneity; (3) \textit{Efficiency}: encoding task-trees operationally equals to encoding the virtual nodes appended to original graphs. Analogous to images in vision or sentences in language, task-trees serve as unified learning instances and may encode transferable patterns across graph tasks, leading to the assumption:

\vspace{1mm}
\noindent\textbf{\textit{Task-Tree Generality Assumption.}} \textit{The generalities shared across graphs are (at least partially) preserved within the task-trees of the involved graphs.}

\vspace{1mm}
To evaluate this assumption, we conduct a theoretical analysis of task-trees with respect to stability, transferability, and generalization. Our main result shows that pretraining a GNN on diverse task-trees via a reconstruction objective yields transferable representations that adapt well to downstream tasks with moderate fine-tuning. Furthermore, the model can be specialized to specific domains via post-training \citep{wei2021finetuned} on domain-specific task-trees.

To empirically validate our theoretical insights, we introduce \underline{G}raph Generality \underline{I}dentifier on Task-\underline{T}rees (GIT), a graph foundation model pretrained on task-trees extracted from diverse graphs spanning multiple domains and tasks. GIT is evaluated on 32 graphs across 5 domains under three paradigms: fine-tuning, in-context learning (few-shot without fine-tuning), and zero-shot learning. Results show that pretraining on a small set of graphs significantly improves performance on a broad range of downstream tasks, supporting the hypothesis that task-trees capture transferable generalities. Additionally, we propose an instruction tuning method to adapt the general model to specific domains, yielding performance comparable to or exceeding domain-specific expert models. Our key contributions are:
\begin{itemize}
    \item We introduce \textit{task-trees} as unified learning instances for aligning heterogeneous graph tasks, demonstrating advantages over conventional units such as subgraphs.
    \item We present the first theoretical framework addressing task heterogeneity in graph learning, establishing the effectiveness of task-trees for cross-task generalization.
    \item We propose GIT, a graph foundation model pretrained on task-trees to acquire generalizable knowledge and support domain specialization.
    \item Extensive experiments across 32 graphs and five domains validate the effectiveness of GIT under fine-tuning, in-context, and zero-shot settings.
\end{itemize}

\section{Task-Trees: Rethinking Basic Learning Instances on Graphs}

\subsection{Preliminary}

We begin with a brief introduction to message-passing GNNs and some related concepts. Let $\gG = (\gV, \gE)$ represent a graph with node set $\gV$ and edge set $\gE$, where each node $v \in \gV$ is associated with a feature vector $\mathbf{x} \in \mathbb{R}^d$. A GNN encoder $\phi$ takes the graph as input and performs message passing to learn node embeddings $\mZ = \phi(\gV, \gE)$. Specifically, a GNN encoder can be defined as:
\begin{equation}
    \textstyle \vz_i^{(l)} = \sigma \Bigl(\mW_1 \vz_{i}^{(l-1)} + \mW_2 \rho \Bigl(\sum_{j \in \gN(i)} g(\vz_j^{(l-1)}) \Bigr) \Bigr),
\end{equation}
where $\gN_i$ denotes the 1-hop neighbors of node $i$, $\vz^{(l)}$ represents the node embedding at the $l$-th GNN layer with $\vz^{(0)} = \vx$, and $\mW_1, \mW_2$ are learnable matrices. The functions $\sigma$, $\rho$, and $g$ are the activation function, aggregation function and update function, respectively. To simplify the analysis, we assume $\rho$ is an averaging operation and $g$ is the identity function. Without loss of generality (WLOG), these functions can be replaced with any permutation-invariant and Lipschitz-continuous functions, respectively, without affecting the analysis in the paper.

\begin{definition}[Task-Relevant Nodes]
    Graph tasks can be roughly categorized into node-level, edge-level, and graph-level tasks, where the basic learning instances are nodes, edges, and entire graphs, respectively. For node classification, the task-relevant node $v^t_i$ is the node to be classified. In edge classification, the task-relevant nodes are the start and end nodes $\{v^t_i, v^t_j\}$ of the target edge $e_{ij}$. For graph classification, the task-relevant nodes $\{v^t_i\}_{i=1}^{|\gV|}$ include all nodes in the target graph $\gG$.
\end{definition}

For any graph task instance, the prediction relies solely on the embeddings of the corresponding task-relevant nodes. These node embeddings capture the surrounding subtree structures, which are also known as computation trees.

\begin{definition}[Computation Trees \citep{chuang2022tree}]
    Given a node $v$ in graph $\gG$, the $L$-layer computation tree $T_v^L$ is constructed by recursively expanding the subtrees of its neighboring nodes, starting with $T_v^1 = v$.
\end{definition}

\subsection{Task-Tree Construction and Encoding}

The learning process of message-passing GNNs can be interpreted as recursive aggregation over computation trees, where the representation of a node $v$ produced by an $L$-layer GNN corresponds to the embedding of its $L$-hop computation tree $T_v^L$. Since predictions in graph tasks rely exclusively on the embeddings of task-relevant nodes, and those embeddings are determined by their respective computation trees, we construct a unified \textit{task-tree} for each learning instance—whether a node, edge, or entire graph—by merging the relevant computation trees, as illustrated in Figure~\ref{fig:task align}.

\begin{definition}[Task-Trees]
    For any graph instance---whether a node, edge, or graph---we have a set of task-relevant nodes $\{v_1^t, ..., v_n^t\}$ and their corresponding $L$-layer computation trees $\{T_1, ..., T_n\}$. These computation trees can be reformulated into a larger task-tree $T^t$ by introducing a virtual node that connects all task-relevant nodes.
\end{definition}

To encode a task-tree, we adopt a simple yet effective aggregation strategy. Given a task-tree $T^t$ composed of a virtual node $v^t$ and task-relevant nodes $\{v_1^t, \dots, v_n^t\}$, we compute its representation using a MEAN aggregator over the embeddings of the individual computation trees:
\begin{equation}
    \textstyle \vz^t = \phi(T^t) = \frac{1}{n} \sum_{i=1}^{n} \phi(T_i),
\end{equation}
where $T_i$ denotes the computation tree rooted at $v_i^t$, and $\phi$ is a shared GNN encoder. This representation serves as the input for downstream objectives such as reconstruction, classification, or alignment.

\subsection{Comparison to Existing Works}

Unlike our proposed task-trees, several existing approaches \citep{qiu2020gcc,sun2023all,huang2023prodigy,liu2024one,he2024unigraph} utilize $k$-hop subgraphs extracted from graphs as the basic learning instances. For instance, in node classification, ego-graphs are constructed around each node, where the label of the central node is assigned to the induced subgraph, effectively reformulating node classification as a subgraph classification task. A similar transformation can be applied to edge-level and graph-level tasks by converting them into subgraph-level learning problems. This method involves: (1) extracting ego-graphs centered around task-relevant nodes and (2) applying GNNs to learn graph-level embeddings for classification. However, this subgraph extraction process incurs substantial computational overhead, increasing both time and memory requirements due to the necessity of storing and processing induced subgraphs. Moreover, information within these subgraphs is not always effectively captured by message-passing GNNs, as GNNs may struggle to learn essential substructures preserved in graphs \citep{garg2020generalization,chen2020can,zhang2024beyond}, thereby limiting the efficacy of subgraphs as learning instances.

In contrast, our proposed task-trees offer both greater efficiency and improved learnability (Table \ref{tab:compare align method}). Specifically, encoding task-trees for node, link, or graph-level tasks involves: (1) augmenting the original graph by adding virtual task nodes and connecting them to task-relevant nodes and (2) applying GNNs over the augmented graph to encode the embeddings of these virtual nodes for prediction. For example, in node classification, we first introduce virtual nodes connected to each node in the original graph and subsequently apply GNNs to this augmented structure, allowing virtual node embeddings to be learned for classification. Consequently, our method requires only the addition of nodes and edges to the existing graph, making it significantly more efficient than extracting and storing subgraphs. Furthermore, encoding task-trees is equivalent to directly encoding virtual nodes through message passing, ensuring that task-tree information remains fully learnable by standard GNNs. Empirically, task-trees consistently outperform subgraphs in both effectiveness and efficiency (Section \ref{sec:task-tree vs subgraphs} and Appendix \ref{app:efficiency}).

\begin{figure}[!t]
    \centering
    \includegraphics[width=\linewidth]{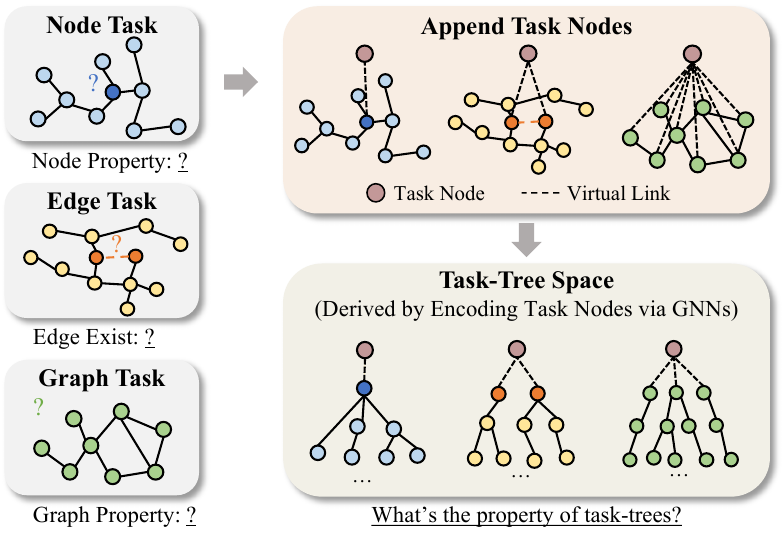}
    \caption{The formulation of task-trees.}
    \label{fig:task align}
\end{figure}

The most closely related work to our task-tree framework is GFT \citep{wang2024gft}, which introduces computation trees to align heterogeneous graph tasks. While both approaches share the core intuition of structuring task-specific trees, GFT adopts a model-centric perspective, featuring a learnable vocabulary, a multi-faceted reconstruction objective, and specialized adaptation classifiers. In contrast, our task-tree framework is theory-driven and emphasizes the design of learning instances rather than model complexity. Notably, GFT empirically demonstrates the potential of computation trees for transferability, while our work complements it with a formal theoretical foundation, offering a principled understanding of task alignment in graph learning. Together, these works represent complementary advances toward general-purpose graph foundation models. We present additional discussions on related work in Appendix \ref{app:related work}.

\begin{table*}[!t]
    \centering
    \caption{Comparison of task alignment methods on graphs.}
    \label{tab:compare align method}
    \resizebox{\linewidth}{!}{
        \begin{tabular}{l >{\centering\arraybackslash}p{0.33\textwidth} >{\centering\arraybackslash}p{0.33\textwidth} >{\centering\arraybackslash}p{0.33\textwidth}}
            \toprule
                                       & \textbf{Without Task Alignment}                               & \textbf{Alignment via Subgraph Tasks}                                                    & \textbf{Alignment via Task-Tree Tasks (Ours)}                                                                        \\
                                       & \includegraphics[width=\linewidth]{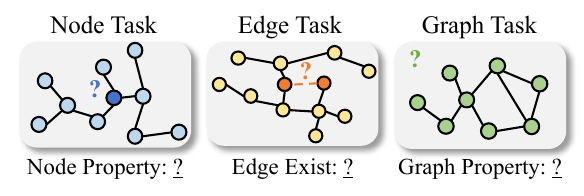}    & \includegraphics[width=\linewidth]{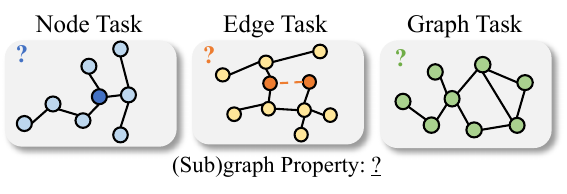}                              & \includegraphics[width=\linewidth]{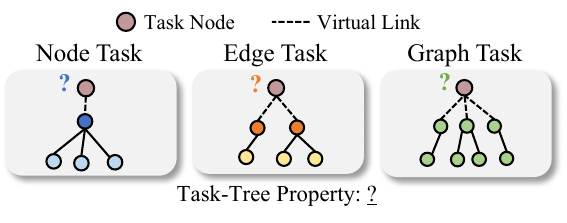}                                                          \\ \midrule\midrule
            \textbf{Input}             & Original graphs                                               & Extracted subgraphs                                                                      & Original graphs with task nodes                                                                                      \\ \midrule
            \textbf{Pre-processing}    & N/A                                                           & Extract subgraphs for each learning instance (node/link/graph)                           & Append task nodes to the original graphs, connecting each to task-relevant nodes                                     \\ \midrule
            \textbf{Encoding}          & Apply GNNs to learn node embeddings                           & Apply GNNs on induced subgraphs to obtain subgraph embeddings                            & Apply GNNs directly on original graphs to learn task node embeddings                                                 \\ \midrule
            \textbf{Prediction}        & Use node embeddings to derive node/link/graph representations & Use subgraph embeddings as representations for corresponding instances (node/link/graph) & Use task node (task-tree) embeddings as instance representations (node/link/graph) for predictions                   \\ \midrule\midrule
            \textbf{Learnability}      & -                                                             & GNNs struggle to capture fundamental substructures preserved in (sub)graphs              & GNNs inherently learn from tree structures, such as computation trees                                                \\ \midrule
            \textbf{Efficiency}        & -                                                             & Storing and encoding subgraphs introduce additional computational overhead               & Encoding task-trees involves encoding augmented virtual nodes in the original graphs with minimal computational cost \\ \midrule
            \textbf{Theoretical Basis} & -                                                             & Lack of a well-established theoretical foundation                                        & Supported by a rigorous theoretical framework                                                                        \\
            \bottomrule
        \end{tabular}
    }
\end{table*}

\section{Theoretical Analysis of Task-Trees}

In this section, we present a theoretical analysis of task-trees, focusing on their stability, transferability, and generalization as foundational learning instances. This analysis provides formal support for the \textit{Task-Tree Generality Assumption}, which posits that transferable patterns across graph tasks are preserved within task-tree structures.

Our goal is not to assert the universal superiority of task-trees over other learning units such as subgraphs, but rather to establish the theoretical plausibility of using task-trees to capture cross-task generalities. By grounding the construction and use of task-trees in formal guarantees, we lay the foundation for principled pretraining and transfer learning across heterogeneous graph tasks.

We begin by examining the stability of GNNs in learning task-tree representations, showing that task-trees with similar subtree structures produce analogous embeddings. To facilitate this analysis, we first define the notation for describing subtree information:
\begin{equation}
    \vx_i^{(l)} = \frac{1}{ | \gN_i| } \sum_{j \in \gN_i} \vx_j^{(l-1)},
\end{equation}
where $\vx_i^{(0)} = \vx_i$ denotes the original node feature, and $x^{(l)}$ denotes the subtree information of nodes in $l$-th layer, as illustrated in Figure \ref{fig:subtree}. In this figure, for $l=1$, only the nodes in the first layer of the tree are considered, and for $l=2$, only the nodes in the second layer are considered.

\begin{theorem}[Stability on Task-Trees]
    \label{thm:stability}
    Given two $L$-layer task-trees $T_t^1$ and $T_t^2$, with task-relevant nodes $\{v_1, ..., v_n\}$ and $\{v_1, ..., v_m\}$, respectively. The distance between task-trees is defined as $\Delta := \| \phi(T_1^t) - \phi(T_2^t) \|$ with
    \begin{align}
        \nonumber \Delta & = \| \phi(T_1^t) - \phi(T_2^t) \| = \| \frac{1}{n} \sum_{i=1}^n \phi(T_i) - \frac{1}{m} \sum_{j=1}^m \phi(T_j) \|                    \\
                         & \leq \frac{1}{n  m} \sum_{i=1}^n \sum_{j=1}^m \Bigl( \gC_1 \|  \vx_i^{(0)} -   \vx_j^{(0)} \|                                  + ... \\
        \nonumber        & + \gC_1 \gC_2^{L-1}  \| \vx_i^{(L-1)} - \vx_j^{(L-1)} \| \Bigr) \leq 2\gB_\vx \cdot \gC_1 \frac{\gC_2^L - 1}{\gC_2 - 1},
    \end{align}
    where $\phi$ is the GNN encoder, $T_i$ is the computation tree corresponding to node $i$, and $\gC_1, \gC_2$ are constants related to the encoder, and $\gB_\vx$ represents the bounded norm of $\vx$.
\end{theorem}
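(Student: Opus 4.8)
\textbf{Proof plan for Theorem~\ref{thm:stability}.}

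The plan is to reduce the task-tree distance to a sum of computation-tree distances via the triangle inequality, then bound each computation-tree distance layer by layer using Lipschitz continuity of the GNN, and finally collapse the geometric-type sum into the closed-form bound. First I would write $\phi(T_1^t) - \phi(T_2^t) = \frac{1}{n}\sum_{i=1}^n \phi(T_i) - \frac{1}{m}\sum_{j=1}^m \phi(T_j)$ and rewrite this as the double average $\frac{1}{nm}\sum_{i=1}^n\sum_{j=1}^m (\phi(T_i) - \phi(T_j))$, which is legitimate since $\frac{1}{n}\sum_i \phi(T_i) = \frac{1}{nm}\sum_i\sum_j \phi(T_i)$ and symmetrically for the second term. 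Applying the triangle inequality gives $\Delta \le \frac{1}{nm}\sum_{i=1}^n\sum_{j=1}^m \|\phi(T_i) - \phi(T_j)\|$, so it suffices to bound $\|\phi(T_i) - \phi(T_j)\|$ for a single pair of computation trees.

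The core step is the per-pair bound $\|\phi(T_i) - \phi(T_j)\| \le \sum_{l=0}^{L-1} \gC_1 \gC_2^{l} \|\vx_i^{(l)} - \vx_j^{(l)}\|$, which I would prove by induction on the number of layers $L$. The base case $L=1$ follows because a one-layer GNN output is (a Lipschitz function of) the raw feature $\vx^{(0)}$, giving the constant $\gC_1$ from the Lipschitz constant of $\sigma$ composed with $\mW_1$. For the inductive step, I would expand one layer of message passing: the $l$-layer embedding at node $i$ depends on $\vz_i^{(l-1)}$ through $\mW_1$ and on the average of neighbor embeddings through $\mW_2$ (using the assumptions that $\rho$ is averaging and $g$ is identity). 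Using Lipschitz continuity of $\sigma$, boundedness of $\|\mW_1\|, \|\mW_2\|$, and the fact that averaging over neighbors is $1$-Lipschitz, I would peel off one factor of $\gC_2$ (which absorbs $\|\mW_1\| + \|\mW_2\|$ times the Lipschitz constant of $\sigma$) per layer, while the deepest layer contributes the raw feature difference $\|\vx_i^{(L-1)} - \vx_j^{(L-1)}\|$ — here $\vx^{(l)}$ as defined in the displayed recursion is exactly the layerwise neighbor-averaged feature, so it matches the subtree information appearing in the statement. Summing the contributions from all depths yields the telescoping sum $\gC_1\|\vx_i^{(0)}-\vx_j^{(0)}\| + \dots + \gC_1\gC_2^{L-1}\|\vx_i^{(L-1)}-\vx_j^{(L-1)}\|$.

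For the final inequality, I would use that each feature (and hence each neighbor-averaged feature $\vx^{(l)}$, since averaging does not increase norm) is bounded by $\gB_\vx$, so $\|\vx_i^{(l)} - \vx_j^{(l)}\| \le 2\gB_\vx$ for every $l$. Substituting gives $\Delta \le \frac{1}{nm}\sum_{i,j} \sum_{l=0}^{L-1} 2\gB_\vx \gC_1 \gC_2^l = 2\gB_\vx \gC_1 \sum_{l=0}^{L-1} \gC_2^l = 2\gB_\vx \gC_1 \frac{\gC_2^L - 1}{\gC_2 - 1}$, where the double average disappears because the summand no longer depends on $i,j$. The main obstacle I anticipate is making the inductive peeling clean: one has to be careful that the constant $\gC_2$ genuinely dominates the per-layer amplification for \emph{all} layers simultaneously (so that the exponents line up as $\gC_2^0, \dots, \gC_2^{L-1}$ rather than a messier product of layer-dependent constants), and that the normalization by neighbor-degree interacts correctly with the averaging assumption on $\rho$ — essentially one needs $\|\frac{1}{|\gN_i|}\sum_{k\in\gN_i}\vz_k - \frac{1}{|\gN_j|}\sum_{k\in\gN_j}\vz_k\|$ to be controlled by the layerwise subtree-feature differences, which is where the definition of $\vx^{(l)}$ as the recursively averaged feature is doing the real bookkeeping work.
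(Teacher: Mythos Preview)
Your framing—double-average trick, then triangle inequality, then per-pair layer-by-layer bound, then geometric sum—is natural, and the first and last steps are fine. The gap is in the per-pair induction: it does not close in the form you claim.

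After one unfolding you correctly get
\[
\|\phi(T_i^L) - \phi(T_j^L)\| \le \gC_1\|\vx_i^{(0)}-\vx_j^{(0)}\| + \gC_\sigma\gB_{\mW_2}\,\Bigl\|\tfrac{1}{|\gN_i|}\textstyle\sum_{k\in\gN_i}\phi(T_k^{L-1}) - \tfrac{1}{|\gN_j|}\sum_{k'\in\gN_j}\phi(T_{k'}^{L-1})\Bigr\|.
\]
The residual is a difference of averages over two \emph{different} neighbor sets, not a single pair, so your per-pair inductive hypothesis does not apply to it. If you convert it back to pairs via the double-average lemma and then invoke the $(L{-}1)$-layer hypothesis on each $(k,k')$, the resulting bound involves $\frac{1}{|\gN_i||\gN_j|}\sum_{k,k'}\|\vx_k^{(l)}-\vx_{k'}^{(l)}\|$, which in general \emph{exceeds} $\|\vx_i^{(l+1)}-\vx_j^{(l+1)}\|$—the lemma's inequality runs the wrong direction for what you need. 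You would still recover the final constant $2\gB_\vx\,\gC_1\frac{\gC_2^L-1}{\gC_2-1}$ since every layerwise term is crudely bounded by $2\gB_\vx$, but you would not obtain the intermediate inequality in the form the theorem states.

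The paper handles this by reversing your order of operations: it unfolds the layers \emph{first}, keeping the outer averages over task-relevant nodes intact throughout. One peel of the recursion yields a self-term $\gC_1\bigl\|\frac{1}{n}\sum_i\vx_i - \frac{1}{m}\sum_j\vx_j\bigr\|$ and a neighbor term that is again a difference of two outer averages of $(L{-}1)$-layer tree embeddings. Because the definition $\vx^{(l)} = \frac{1}{|\gN|}\sum\vx^{(l-1)}$ mirrors the message-passing average exactly, each successive self-term collapses to $\Delta_l := \bigl\|\frac{1}{n}\sum_i\vx_i^{(l-1)} - \frac{1}{m}\sum_j\vx_j^{(l-1)}\bigr\|$ with coefficient $\gC_1\gC_2^{l-1}$. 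Only \emph{after} obtaining $\Delta \le \sum_l\gC_1\gC_2^{l-1}\Delta_l$ does the paper apply the double-average lemma to each $\Delta_l$ separately, producing the pairwise form $\frac{1}{nm}\sum_{i,j}\|\vx_i^{(l-1)}-\vx_j^{(l-1)}\|$. This is precisely the obstacle you flag in your last paragraph: the bookkeeping works because the recursion is carried at the level of averages-of-sets, not individual pairs. (Side note: in the paper's architecture the self-term uses the raw feature $\vx_i$, so $\gC_2 = \gC_\sigma\gB_{\mW_2}$ alone, not $\gC_\sigma(\gB_{\mW_1}+\gB_{\mW_2})$.)
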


Theorem \ref{thm:stability} (proved in Appendix \ref{app:proof stability}) suggests that two task-trees are likely to have similar representations if their subtrees are similar. This theorem highlights the significance of similarity between pairs of subtrees, while downplaying the impact of the number of subtrees (i.e., the width of the task-trees), despite having more subtrees could potentially increase diversity and thus magnify discrepancy. The theorem also implies that increasing the number of GNN layers may lead to a loose bound, which aligns with previous analyses \citep{garg2020generalization,ju2023generalization}.
\begin{illustration}
    This theorem provides theoretical support for using task-trees as basic learning instances in graph tasks. Consider two task-trees: one representing a node (with a single subtree) and the other representing a graph (with multiple subtrees). While the widths of these task-trees differ significantly, if their subtrees share some degree of similarity, they can produce similar representations. Thus, this theorem ensures that task-trees of nodes, edges, or graphs can potentially be similar, making it possible to use a GNN encoder to capture the shared patterns among them.
\end{illustration}

\begin{figure}[!t]
    \centering
    \includegraphics[width=0.8\linewidth]{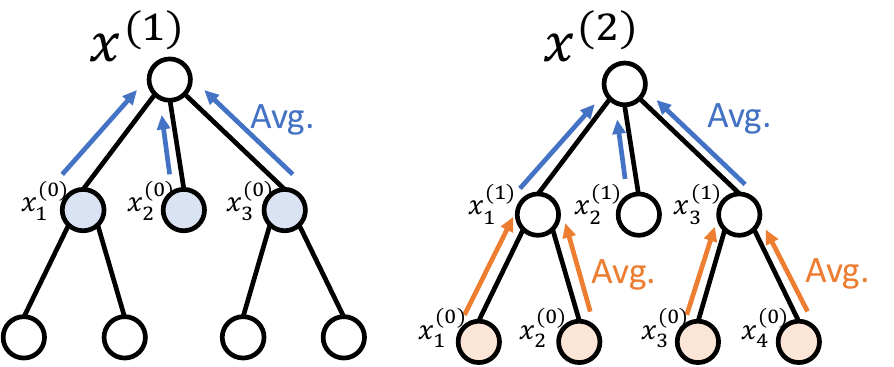}
    \caption{subtree information examples. }
    \label{fig:subtree}
\end{figure}

We now examine the transferability of task-trees. Specifically, assuming a model is pretrained on a task-tree reconstruction task\footnote{The scope of reconstruction task is large. We consider contrastive learning is also a kind of reconstruction.}, we aim to quantify how the knowledge acquired during pretraining can be transferred to downstream tasks. The pretraining objective is defined as $\gL_\gP(g \circ \phi) := \E_{(\hat{T}, T) \sim \gP} \| g(\phi(\hat{T})) - \phi(T) \|^2$, where $\gP$ represents the task-tree distribution used for pretraining, $\phi \in \Phi$ and $g \in G$ are the GNN encoder and the reconstruction head, respectively. $T$ denotes the task-tree and $\hat{T}$ is the corrupted version of $T$, generated using arbitrary augmentations. Note that the reconstruction head $g$ is used only during pretraining and is discarded during fine-tuning. Then, we define the risk on downstream task as $\gR_\gT(f \circ \phi) := \E_{(T, y) \sim \gT} \kappa (f(\phi(T)), y)$, where $f \in \gF$ is a linear head for predictions, $\gT$ represents the downstream task distribution with task-tree $T$ and label $y$, and $\kappa$ denotes the loss function.

\begin{theorem}[Transferability on Task-Trees]
    \label{thm:transferability}
    Given two task-tree encoders $\phi, \phi' \in \Phi$, we have
    \begin{align}
        \nonumber \min_{f\in\gF} & \gR_\gT(f \circ \phi) - \min_{f'\in\gF} \gR_\gT(f' \circ \phi')                                                     \\
                                 & \leq \gC_\delta \Bigl(\min_{g\in G} \gL_\gP(g \circ \phi) - \min_{g' \in G} \gL_\gP(g' \circ \phi')\Bigr)^{\delta},
    \end{align}
    where $\gC_\delta \approx O(1)$ and $\delta = \frac{1}{2}$.
\end{theorem}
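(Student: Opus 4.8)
The plan is to convert the comparison of two encoders into the construction of a single explicit prediction head for $\phi$ that imitates the optimal head of $\phi'$, and then to pay for the imitation error using the gap in pretraining losses. The argument needs the regularity conditions that accompany such a bound, which I take to be in force: the reconstruction class $G$ is rich enough that $\min_{g\in G}\gL_\gP(g\circ\psi)$ is attained at the conditional mean $\E[\psi(T)\mid\psi(\hat T)]$; the downstream loss $\kappa(\cdot,y)$ is Lipschitz in its first argument and the heads in $\gF$ are linear with controlled operator norm; task-tree embeddings are bounded as in Theorem~\ref{thm:stability}; the pretraining and downstream distributions are compatible, i.e. the corruption forming $\hat T$ retains label information so that a reconstruction-optimal representation is (near-)sufficient for $y$; and $\phi'$ is the reference encoder (e.g. the minimizer of $\gL_\gP$ over $\Phi$), so that the parenthesized quantity on the right-hand side is nonnegative and the bound is meaningful.

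First I would record, for $\psi\in\{\phi,\phi'\}$, the identity $\ell_\gP(\psi):=\min_{g\in G}\gL_\gP(g\circ\psi)=\E\,\|\psi(T)-\E[\psi(T)\mid\psi(\hat T)]\|^{2}$, the part of the clean embedding not recoverable from the corrupted one. Then set $h^{\star}:=\argmin_{f'\in\gF}\gR_\gT(f'\circ\phi')$ and define the competitor head $\tilde f:=h^{\star}\circ A$ for $\phi$, where $A$ is the best linear predictor of $\phi'(T)$ from $\phi(T)$; $\tilde f$ is again a linear head, so by optimality of the inner minimization,
\[
\min_{f\in\gF}\gR_\gT(f\circ\phi)-\min_{f'\in\gF}\gR_\gT(f'\circ\phi')\ \le\ \gR_\gT(\tilde f\circ\phi)-\gR_\gT(h^{\star}\circ\phi').
\]
Expanding both risks against the same draw of $(T,y)$ and using Lipschitzness of $\kappa$ and of the linear map $h^{\star}$, followed by Jensen's inequality, gives
\[
\gR_\gT(\tilde f\circ\phi)-\gR_\gT(h^{\star}\circ\phi')\ \le\ C_{0}\,\E\big\|A\,\phi(T)-\phi'(T)\big\|\ \le\ C_{0}\big(\E\big\|A\,\phi(T)-\phi'(T)\big\|^{2}\big)^{1/2},
\]
where $C_{0}$ collects the Lipschitz constants and norm bounds; the square root here is exactly where the exponent $\delta=\tfrac12$ enters.

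It then remains to show $\E\|A\phi(T)-\phi'(T)\|^{2}\le c\,(\ell_\gP(\phi)-\ell_\gP(\phi'))$ for a universal constant $c$, which is the heart of the proof and the step I expect to be the main obstacle. I would approach it by a variance-style decomposition: writing $\phi=\phi'+\Delta\phi$ and expanding $\ell_\gP(\phi)-\ell_\gP(\phi')$, the terms purely in $\phi'$ cancel and, under the sufficiency/compatibility assumption, the cross terms between $\phi'$ and $\Delta\phi$ either vanish or are absorbed, leaving a quantity that upper-bounds $\E\|\Delta\phi(T)\|^{2}$ up to constants; since $A$ is the optimal linear map it does no worse than the identity, so $\E\|A\phi(T)-\phi'(T)\|^{2}\le\E\|\Delta\phi(T)\|^{2}$ and the chain closes, giving the theorem with $\gC_\delta=C_{0}\sqrt{c}=O(1)$ and $\delta=\tfrac12$. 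The delicacy is that pretraining never couples $\phi$ and $\phi'$ directly, so making the residuals of their two reconstruction problems "talk to each other" — and in particular obtaining the pretraining-loss \emph{difference} rather than a sum $\ell_\gP(\phi)+\ell_\gP(\phi')$ — is exactly what forces the conditional-independence/redundancy condition and the choice of $\phi'$ as the reconstruction-optimal reference. A lesser but genuine technicality is bounding $\|h^{\star}\|$ uniformly (needed for $C_{0}=O(1)$), which follows from boundedness of the embeddings and a standard regularization argument and should be stated among the hypotheses.
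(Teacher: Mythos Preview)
Your first two steps are essentially the paper's upper-bound argument: the paper also takes $\gF$ to be linear, fixes the optimal head $\vtheta'$ for $\phi'$, minimizes over the head for $\phi$, and lands on $\sqrt{\vtheta'^\top\Lambda\vtheta'}$ with
\[
\Lambda \;=\; \E[\phi'\phi'^\top]\;-\;\E[\phi'\phi^\top]\bigl(\E[\phi\phi^\top]\bigr)^{\dagger}\E[\phi\phi'^\top],
\]
which is exactly the residual covariance of your linear regression $A\phi\to\phi'$. So far the two arguments coincide, up to whether one tracks $\operatorname{tr}(\Lambda)$ or the quadratic form $\vtheta'^\top\Lambda\vtheta'$.

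The gap is your step 3. The detour through $\E\|\Delta\phi\|^{2}$ cannot work: take $\phi=c\,\phi'$ for a scalar $c\neq 1$ with $\phi'$ reconstruction-optimal. Conditioning on $c\,\phi'(\hat T)$ is the same $\sigma$-algebra as conditioning on $\phi'(\hat T)$, so $\ell_\gP(\phi)-\ell_\gP(\phi')=0$, yet $\E\|\Delta\phi\|^{2}=(c-1)^{2}\E\|\phi'\|^{2}>0$. (In this example $\E\|A\phi-\phi'\|^{2}=0$, so the quantity you actually need behaves correctly; it is the inequality $\E\|A\phi-\phi'\|^{2}\le\E\|\Delta\phi\|^{2}$ that throws away exactly the invariance you must keep.) More generally, the ``cross terms vanish'' heuristic does not hold because $\ell_\gP(\phi)$ conditions on $\phi(\hat T)$ while $\ell_\gP(\phi')$ conditions on $\phi'(\hat T)$; no additive decomposition of $\phi$ makes these two conditional variances talk to each other without the linear-regression structure.

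What the paper does instead is to \emph{repeat the quadratic analysis on the pretraining side}. It takes $g\in G$ linear as well, writes $\min_{g}\gL_\gP(g\circ\phi)-\min_{g'}\gL_\gP(g'\circ\phi')$ as a sum of coordinatewise quadratics $\min_{\vw_r}\E\|\vw_r^\top\phi(\hat T)-\vw_r'^\top\phi'(\hat T)\|^{2}$, and minimizes each to get $\sum_r \vw_r'^\top\Lambda\vw_r'=\operatorname{tr}\bigl(\Lambda\sum_r\vw_r'\vw_r'^\top\bigr)$, with the \emph{same} Schur complement $\Lambda$ (now evaluated at $\hat T$). Ruhe's trace inequality then gives an upper bound $\sqrt{d\,\sigma_{\max}(\Lambda)\,\sigma_{\max}(\vtheta'\vtheta'^\top)}$ for the downstream gap and a lower bound $\sigma_{\max}(\Lambda)\,\sigma_{\min}\bigl(\sum_r\vw_r'\vw_r'^\top\bigr)$ for the pretraining gap; eliminating $\sigma_{\max}(\Lambda)$ yields the bound with exponent $\tfrac12$ and constant $\gC_\delta=\sqrt{d\,\sigma_{\max}(\vtheta'\vtheta'^\top)/\sigma_{\min}(\sum_r\vw_r'\vw_r'^\top)}$. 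The point is that both sides are controlled by the \emph{same} spectral quantity $\sigma_{\max}(\Lambda)$, which is precisely what your $\Delta\phi$ route fails to isolate. If you want to rescue your outline, replace step 3 by this parallel Schur-complement computation and keep $\Lambda$ explicit rather than passing through $\|\Delta\phi\|$.
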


The proof is provided in Appendix \ref{app:proof transferability}. In summary, Theorem \ref{thm:transferability} demonstrates that knowledge gained through pretraining on task-tree reconstruction tasks is transferable to downstream tasks, and it quantifies the extent of this transfer. The left-hand side (LHS) of the theorem shows how different representations impact performance on downstream tasks, while the right-hand side (RHS) reflects the difference in pretraining losses between two encoders. Therefore, if two encoders exhibit similar losses during pretraining, their transferability to a new task should be comparable.

\begin{illustration}
    To give a better understanding on why Theorem \ref{thm:transferability} imply the model pretrained on task-trees can bring transferable information to downstream tasks, we present an example. Let's consider the case where $\phi$ is the pretrained encoder and $\phi'$ is a randomly initialized encoder. The LHS term $\min_{f\in\gF} \gR_\gT(f \circ \phi) - \min_{f'\in\gF} \gR_\gT(f' \circ \phi')$ measures the amount of knowledge that is acquired during pretraining and is capable to be transferred to downstream tasks, and the RHS term $\min_{g\in G} \gL_\gP(g \circ \phi) - \min_{g' \in G} \gL_\gP(g' \circ \phi')$ measures the total knowledge acquired during pretraining. Thus, the constants $\gC_\delta$ and $\delta$ quantify how much of this knowledge is transferable to downstream tasks. Since both $\gC_\delta$ and $\delta$ are reasonably small, we conclude that pretraining on task-trees provides sufficient knowledge to benefit downstream tasks.
\end{illustration}
To further explain why the task-tree-based pretraining and fine-tuning framework is effective for downstream tasks, we derive the following generalization bound.

\begin{theorem}[Generalization on Task-Trees]
    \label{thm:generalization}
    Given two task-tree distributions, $\gP$ for pretraining and $\gT$ for fine-tuning, suppose the encoder $\phi$ is pretrained on a set of task-trees $\{T_i\}_{i=1}^m$ sampled from $\gP$ and finetuned on task-trees $\{T_i\}_{i=1}^n$ sampled from $\gT$, the generalization bound of the finetuned model, with probability at least $1 - v$, is
    \begin{align}
        \nonumber \gR_\gT & (f \circ \phi) \leq \min_{f' \in \gF} \gR_\gT(f' \circ \phi^*)                                                                                     \\
        \nonumber         & +   2 \gC_2 \Bigl( \sum_{x \in \gX_\phi} \Bigl\| \gT_\phi(x) - \gP_\phi(x) \Bigr\|  + 2 \sqrt{\frac{\log(1/v)}{n}} \Bigr)                          \\
                          & +   \gC_\delta \Bigl(\gE_\gP(g, \phi)\Bigr)^\delta                      +  \frac{4 \gC_1 }{n} \sqrt{ \sum_{i = 1}^{n} \Bigl\|\phi(T_i) \Bigr\|^2},
    \end{align}
    where $\phi^* = \argmin_{\phi \in \Phi} \min_{g \in G} \gL_\gP (g \circ \phi)$ is the optimal task-tree encoder obtained on $\gP$, $\gE_\gP(g, \phi) = \gL_\gP (g \circ h) - \min_{g' \in G, \phi' \in \Phi} \gL_\gP (g' \circ \phi')$ defines the excess risk during pretraining. Constants $\gC_1$ and $\gC_2$ are related to downstream tasks, while $\gC_\delta \approx O(1)$ and $\delta = \frac{1}{2}$ are the same as Theorem \ref{thm:transferability}. $\gX_\phi$ denotes the distribution of task-tree embeddings encoded via $\phi$, and $\|  \gT_\phi(x) - \gP_\phi(x) \|$ measures the distance between task-tree distributions of pretraining and fine-tuning data.
\end{theorem}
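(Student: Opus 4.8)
The plan is to chain together the transferability result (Theorem~\ref{thm:transferability}), a Rademacher-complexity-based generalization bound for the downstream fine-tuning stage, and a distribution-shift term that accounts for the fact that the encoder $\phi$ was pretrained on $\gP$ but is evaluated on $\gT$. First I would decompose the downstream risk as
\begin{equation}
   \gR_\gT(f \circ \phi) = \Bigl(\gR_\gT(f \circ \phi) - \min_{f' \in \gF} \gR_\gT(f' \circ \phi)\Bigr) + \Bigl(\min_{f' \in \gF} \gR_\gT(f' \circ \phi) - \min_{f' \in \gF} \gR_\gT(f' \circ \phi^*)\Bigr) + \min_{f' \in \gF} \gR_\gT(f' \circ \phi^*).
\end{equation}
The last term is the irreducible term appearing on the RHS of the statement. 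The middle term is exactly the quantity controlled by Theorem~\ref{thm:transferability} applied with the pair $(\phi, \phi^*)$: since $\phi^*$ is the pretraining minimizer, $\min_{g' \in G}\gL_\gP(g' \circ \phi^*) = \min_{g'\in G,\phi'\in\Phi}\gL_\gP(g'\circ\phi')$, so the RHS of Theorem~\ref{thm:transferability} collapses to $\gC_\delta (\gE_\gP(g,\phi))^\delta$, which is the second term in the bound. (One should note that we use $f$ for the fine-tuned head on the LHS and the optimal $f'$ in the excess-risk term, so it is the first bracket above that must be bounded by the sample-complexity terms.)

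Next I would handle the first bracket, the fine-tuning estimation error, via a standard uniform-convergence argument over the linear head class $\gF$ with the (fixed) encoder $\phi$. Assuming $\kappa$ is $\gC_1$-Lipschitz in its first argument, the empirical Rademacher complexity of $\{T \mapsto \kappa(f(\phi(T)),y) : f \in \gF\}$ on the $n$ fine-tuning samples is controlled by the Rademacher complexity of linear functionals applied to the features $\phi(T_i)$, which gives the term $\frac{4\gC_1}{n}\sqrt{\sum_{i=1}^n \|\phi(T_i)\|^2}$ after the usual contraction (Ledoux--Talagrand) and the linear-class bound $\hat{\mathfrak{R}}_n(\gF) \lesssim \frac{1}{n}\sqrt{\sum_i \|\phi(T_i)\|^2}$. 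This step is routine: symmetrization, contraction, then the linear-class estimate, and a union/McDiarmid step contributing a $\sqrt{\log(1/v)/n}$ deviation.

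The one genuinely new ingredient — and the step I expect to be the main obstacle — is the distribution-shift term $2\gC_2\bigl(\sum_{x \in \gX_\phi}\|\gT_\phi(x) - \gP_\phi(x)\| + 2\sqrt{\log(1/v)/n}\bigr)$. Here the subtlety is that the fine-tuning samples are drawn from $\gT$ while the encoder's learned representation geometry was shaped by $\gP$; I would make this precise by pushing both distributions forward through $\phi$ to obtain $\gP_\phi$ and $\gT_\phi$ on the embedding space $\gX_\phi$, and then bounding the gap between the $\gT$-risk and a $\gP_\phi$-referenced quantity by a total-variation-type distance $\sum_{x}\|\gT_\phi(x) - \gP_\phi(x)\|$ times a bound $\gC_2$ on the (bounded, Lipschitz) composed loss. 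The care needed is: (i) verifying that the loss composed with the linear head remains bounded by $\gC_2$ on the relevant embedding support, so that the change-of-measure is a legitimate $|\!\int f\,d\gT_\phi - \int f\,d\gP_\phi| \le \gC_2 \,\|\gT_\phi - \gP_\phi\|_{1}$ bound; and (ii) making sure the empirical-to-population deviation on the finite fine-tuning sample (the second $\sqrt{\log(1/v)/n}$) is combined with the earlier concentration step without double-counting the failure probability — I would allocate $v/2$ to each concentration event and union-bound. Finally, collecting the four pieces — irreducible risk, transferability term $\gC_\delta(\gE_\gP(g,\phi))^\delta$, Rademacher term, and shift term — and setting $\delta = \tfrac12$, $\gC_\delta \approx O(1)$ as inherited from Theorem~\ref{thm:transferability}, yields the claimed inequality with probability at least $1-v$.
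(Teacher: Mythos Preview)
Your high-level skeleton (Rademacher for the estimation error, transferability for the representation gap, irreducible term $\min_{f'}\gR_\gT(f'\circ\phi^*)$) is correct and matches the paper. The divergence is in \emph{where} the distribution-shift term enters, and this is the genuine gap in your plan.

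In your three-term split you apply Theorem~\ref{thm:transferability} directly to $\min_{f'}\gR_\gT(f'\circ\phi)-\min_{f'}\gR_\gT(f'\circ\phi^*)$. If that application were fully justified as written, your decomposition would already prove a bound with \emph{no} shift term at all, and your final paragraph is then left searching for a place to insert $\sum_x\|\gT_\phi(x)-\gP_\phi(x)\|$ after the fact, without a concrete inequality it closes.

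The paper resolves this by routing the transferability step through $\gP$ rather than $\gT$. It inserts $\pm\min_{f'}\gR_\gP(f'\circ\phi)$ and $\pm\min_{f'}\gR_\gP(f'\circ\phi^*)$ to obtain a four-term decomposition
\begin{align*}
(\mathrm{a})\quad&\gR_\gT(f\circ\phi)-\min_{f'}\gR_\gT(f'\circ\phi),\\
(\mathrm{b})\quad&\min_{f'}\gR_\gT(f'\circ\phi)-\min_{f'}\gR_\gP(f'\circ\phi),\\
(\mathrm{c})\quad&\min_{f'}\gR_\gP(f'\circ\phi)-\min_{f'}\gR_\gP(f'\circ\phi^*),\\
(\mathrm{d})\quad&\min_{f'}\gR_\gP(f'\circ\phi^*)-\min_{f',\phi'}\gR_\gT(f'\circ\phi').
\end{align*}
The transferability inequality is applied only in $(\mathrm{c})$, with \emph{both} risk terms taken over $\gP$, giving $\gC_\delta(\gE_\gP(g,\phi))^\delta$. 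Your change-of-measure argument is exactly how the paper bounds $(\mathrm{b})$ and the first half of $(\mathrm{d})$, each contributing $\gB_\kappa\sum_x\|\gT_\phi(x)-\gP_\phi(x)\|$; this is why the coefficient in the statement is $2\gC_2$. The remaining half of $(\mathrm{d})$ collapses to $\min_{f'}\gR_\gT(f'\circ\phi^*)$ after cancelling the baseline $\min_{f',\phi'}\gR_\gT(f'\circ\phi')$. Term $(\mathrm{a})$ is handled exactly as you describe (contraction plus the linear-class Rademacher estimate), and only this single concentration event is used, so there is no need to split $v$.

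In short: apply the transferability bound on $\gP$, not $\gT$, and bridge $\gT\leftrightarrow\gP$ once before and once after; the shift term then appears twice, structurally, rather than being grafted on.
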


The proof can be found in Appendix \ref{app:proof generalization}. This theorem outlines key factors affecting model generalization on downstream tasks, such as the transferability of task-trees ($\gC_\delta (\gE_\gP(g, \phi))^\delta$) and the quality of the pretrained encoder ($\gE_\gP(g, \phi)$). With regard to the number of task-trees, we find that while increasing the number of fine-tuning samples contributes to more stable optimization ($\frac{4 \gC_1 }{n} \sqrt{ \sum_{i = 1}^{n} \|\phi(T_i) \|^2}$), it does not significantly reduce the generalization bound ($2 \sqrt{\frac{\log(1/v)}{n}}$). This provides theoretical evidence that a reasonable number of fine-tuning samples can be sufficient for training a model with strong generalization capabilities. Moreover, the discrepancy between the pretraining and fine-tuning distributions ($\sum_{x \in \gX_\phi} \| \gT_\phi(x) - \gP_\phi(x) \|$) is crucial---smaller distribution gaps lead to better generalization. This highlights the importance of increasing the diversity of pretraining data, which provides a boarder pretraining distribution $\gP$. It also supports the potential of developing specialized models for specific domains based on a pretrained general model, discussed in Section \ref{sec:specialized gfm}.

\section{Graph Generality Identifier on Task-Trees}

The theoretical analysis establishes the feasibility of constructing graph foundation models based on task-trees. Building on these insights, we develop the GIT model to empirically validate the \textit{Task-Tree Generality Assumption}.

To focus on aligning task spaces across heterogeneous graph tasks, we adopt widely used text-attributed graph benchmarks \citep{chen2024text,zhang2024dtgb,feng2024taglas}, which simplify feature alignment across datasets. Specifically, we follow \citet{liu2024one} and encode all node features into a shared 768-dimensional embedding space using Sentence-BERT \citep{reimers2019sentence}. This design allows us to isolate and examine the effect of task-tree-based pretraining while holding node features consistent across domains.

\subsection{GIT-G: Pretraining to Acquire General Knowledge}
\label{sec:general gfm}

We propose a task-tree reconstruction task as a pretext for pretraining. The key is to use two corrupted task-trees to reconstruct each other, thereby capturing corruption-invariant semantics. Given a set of task-trees $\{T^t_1, ..., T^t_n\}$ sampled from a graph database, we apply corruption techniques to generate two views of each task-tree, denoted as $\{\hat{T}^t_1, ..., \hat{T}^t_n\}$ and $\{\tilde{T}^t_1, ..., \tilde{T}^t_n\}$. For corruption, we use random edge masking and random attribute masking \citep{zhu2020deep,wang2025training} due to its computational efficiency. We then use an encoder $\phi$ to obtain embeddings for the corrupted task-trees, resulting in $\{\hat{\vz}_1, ..., \hat{\vz}_n\}$ and $\{\tilde{\vz}_1, ..., \tilde{\vz}_n\}$. Inspired by \citep{thakoor2022largescale}, we perform reconstruction as
\begin{align}
    \nonumber \gL & = \frac{1}{2n} \sum_{i=1}^{n} \Bigl[ \| \rho(g(\hat{\vz}_i)) - \operatorname{sg}[\rho(\tilde{\vz}_i)] \|^{2}          \\
                  & + \| \rho(g(\tilde{\vz}_i)) - \operatorname{sg}[\rho(\hat{\vz}_i)] \|^{2} \Bigr] + \sum_{i=1}^{n} \KL (\vh \| \vz_i),
\end{align}
where $g$ is a non-linear MLP projector, $\rho(\vz) = (\vz / \| \vz \|)$ serves for normalization, $\operatorname{sg}$ is the stop-gradient operation, and $\vh$ is the average of all instances $\vz$. The reconstruction loss captures the semantics of the task-trees in a predictive manner, while the KL regularizer ensures the embeddings are projected into a shared space by minimizing the KL divergence between individual instances and their center.

\subsection{GIT-S: Specification via Instruction Tuning}
\label{sec:specialized gfm}

Theorem \ref{thm:generalization} highlights the relationship between model generalization and the distribution gap between pretraining data $\gP$ and fine-tuning data $\gT$, showing that a smaller gap leads to better generalization. Based on this finding, it is feasible to develop a specialized model for specific domains from a pretrained general model. This is based on the mild assumption that \textit{graphs from the same domain have similar task-tree distributions $\{ \gT_1, .., \gT_n \}$}. If the pretrained model is post-trained on a task-tree distribution $\gP_{post}$ sampled from $\{ \gT_1, .., \gT_n \}$, the pretraining data distribution $\gP$ can be adjusted towards these task-tree distributions. This reduces the discrepancy $\sum_{x \in \gX_\phi} \|\gT_\phi(x) - \gP_\phi(x)\|$ in Theorem \ref{thm:generalization}, thereby improving model generalization on the target domain. To achieve this, we propose an instruction-tuning method for post-training the pretrained model.

Instruction tuning is a supervised fine-tuning (SFT) technique designed to enhance the capabilities of a pretrained model by post-training it on a small dataset. Our goal is to fine-tune the model using instructions to specialize it for a particular domain of interest. Given a pretrained model $\phi^*$ and a set of task-trees $\{ T_1, ..., T_n \}$ from the target domain, we post-train the model using the SFT loss:
\begin{equation}
    \gL_{SFT} = \frac{1}{n} \sum_{i=1}^{n} \kappa (\phi^*(T_i), \psi (T_i)),
\end{equation}
where $\psi$ is the instruction generation function for each task-tree, and $\kappa$ is the corresponding loss function. In this paper, as we use text-attributed graphs in our experiments, we define instructions as the embeddings of label descriptions encoded by a LLM, which is similar to \citet{liu2024one}, and we use mean squared error as the loss function $\kappa$. Additional model analysis is provided in Appendix \ref{app:additional analysis}.

\section{Experiment}

\begin{table*}[!t]
    \centering
    \caption{We report the model performance across five graph domains: academia, e-commerce, knowledge base, molecular, and temporal graphs, with results averaged over all graphs within each domain. Note that -G and -S represent the general and specialized versions of GIT, respectively. The comprehensive results can be found in Appendix \ref{app:add exp}.}
    \label{tab:full}
    \resizebox{0.85\linewidth}{!}{
        \begin{tabular}{clccccccc}
            \toprule
                                          & \textbf{Domain} & \textbf{Academic} & \textbf{E-commerce} & \textbf{KG}    & \textbf{Molecule} & \textbf{Temporal} & \textbf{Held-out Avg.} & \textbf{Avg.}  \\ \midrule
            \multirow{5}{*}{\bf 0-shot}   & Sup. GNN        & -                 & -                   & -              & -                 & -                 & -                      & -              \\
                                          & GraphMAE        & 15.42             & 8.19                & -              & 47.19             & -                 & 26.67                  & 25.11          \\
                                          & OFA             & 13.98             & 8.73                & -              & 50.49             & -                 & 27.20                  & 26.14          \\ \cmidrule{2-9}
                                          & GIT - G         & 14.88             & 8.79                & -              & 53.34             & -                 & 28.56                  & 27.50          \\
                                          & GIT - S         & \textbf{23.45}    & \textbf{17.06}      & -              & \textbf{62.83}    & -                 & \textbf{35.19}         & \textbf{36.32} \\ \midrule\midrule
            \multirow{5}{*}{\bf 3-shot}   & Sup. GNN        & -                 & -                   & -              & -                 & -                 & -                      & -              \\
                                          & GraphMAE        & 49.25             & 48.20               & 56.56          & 56.01             & 40.31             & 50.15                  & 52.07          \\
                                          & OFA             & 45.93             & 57.06               & 56.97          & 57.03             & 38.92             & 51.84                  & 53.70          \\ \cmidrule{2-9}
                                          & GIT - G         & 54.00             & 57.22               & 67.55          & 55.96             & 39.95             & 56.09                  & 57.82          \\
                                          & GIT - S         & \textbf{55.18}    & \textbf{58.01}      & \textbf{67.80} & \textbf{62.82}    & \textbf{41.38}    & \textbf{58.69}         & \textbf{60.15} \\\midrule\midrule
            \multirow{5}{*}{\bf Finetune} & Sup. GNN        & 73.57             & 78.21               & 66.86          & 73.65             & 62.61             & 71.14                  & 72.25          \\
                                          & GraphMAE        & 73.81             & 76.57               & 72.61          & 71.41             & 62.75             & 71.37                  & 72.79          \\
                                          & OFA             & 72.18             & 76.64               & 72.38          & 74.03             & 62.31             & 71.48                  & 73.08          \\ \cmidrule{2-9}
                                          & GIT - G         & 75.82             & 78.55               & 75.73          & 74.57             & 64.59             & 73.84                  & 75.37          \\
                                          & GIT - S         & \textbf{75.88}    & \textbf{78.83}      & \textbf{76.15} & \textbf{75.20}    & \textbf{64.68}    & \textbf{74.19}         & \textbf{75.72} \\
            \bottomrule
        \end{tabular}
    }
\end{table*}

\begin{figure*}[!t]
    \centering
    \includegraphics[width=\linewidth]{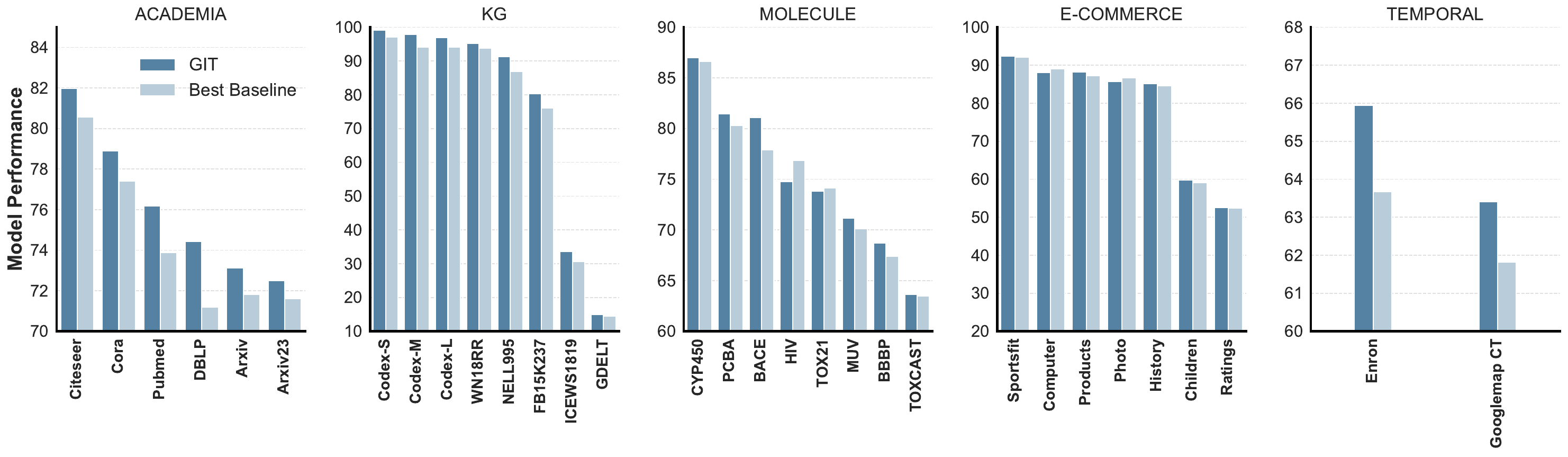}
    \caption{The model performance on all datasets in the fine-tuning setting. For GIT, the best result is selected between GIT-G and GIT-S, while the best baseline performance is chosen among GCN/GAT/GIN, BGRL, GraphMAE, and OFA.}
    \label{fig:main result base}
\end{figure*}

\subsection{Experimental Setup}

\noindent\textbf{Datasets.}
We conduct experiments on over 30 text-attributed graphs spanning five domains: academic networks, e-commerce networks, knowledge graphs, molecular graphs, and temporal graphs. Pretraining is performed on a diverse subset including \texttt{Arxiv} (academic), \texttt{Products} (e-commerce), \texttt{WN18RR} and \texttt{FB15K237} (knowledge), and \texttt{Chemblpre} and \texttt{PCBA} (molecular). Specialization is evaluated on representative datasets for each domain: \texttt{Arxiv}, \texttt{Products}, \texttt{FB15K237}, and \texttt{PCBA}. For temporal graphs, which are e-commerce temporal graphs, we also use \texttt{Products} for SFT to assess robustness under temporal distribution shifts. We provide the full dataset details in Appendix~\ref{app:dataset}.

\noindent\textbf{Baselines.}
We compare against a broad spectrum of baselines, including supervised GNNs (GCN \citep{kipf2017semisupervised}, GAT \citep{velickovic2018graph}, GIN \citep{xu2018how}), self-supervised models (BGRL \citep{thakoor2022largescale}, GraphMAE \citep{hou2022graphmae}), and graph foundation models (OFA \citep{liu2024one}, GraphPrompt+ \citep{liu2023graphprompt}, All in One \citep{sun2023all}, OpenGraph \citep{xia2024opengraph}, AnyGraph \citep{xia2024anygraph}). We also include domain-specific expert models for comparison. See Appendix~\ref{app:baselines} for full descriptions.

\noindent\textbf{Experimental Protocols.}
We use GraphSAGE \citep{hamilton2017inductive} as the encoder and repeat each experiment five times with different random seeds. We evaluate three paradigms: fine-tuning, in-context learning, and zero-shot learning. Fine-tuning updates all model parameters on downstream data. In in-context learning (few-shot without fine-tuning), we randomly sample $k$ instances per class, compute prototype embeddings via averaging, and classify using nearest-prototype inference. Following \citet{liu2024one,he2024unigraph}, we sample 500 5-way 3-shot tasks; if the number of classes is fewer than 5, we use the actual number of classes as the number of ways. Zero-shot learning replaces prototypes with class description embeddings generated by an LLM. For evaluation, we report accuracy for node classification and edge classification, and AUC for graph classification and link prediction. Additional details are provided in Appendix~\ref{app:exp setting}.

\subsection{Main Results}

\noindent\textbf{Domain-Wise Performance.}
Table~\ref{tab:full} reports average performance across five domains, with detailed per-dataset results provided in Appendix~\ref{app:add exp}. The ``held-out avg.'' denotes the average score on all graphs excluded from pretraining and specialization, offering an unbiased evaluation of generalization. Notably, the general-purpose model GIT-G already outperforms strong baselines in several domains. With domain-specific specialization, GIT-S achieves further gains---especially in zero-shot and in-context settings---highlighting the benefits of post-training in adapting pretrained models to specific domains. These results align with our theoretical findings, demonstrating enhanced adaptability through specialization.

\noindent\textbf{Dataset-Wise Performance.}
Figure~\ref{fig:main result base} shows fine-tuning performance across all datasets. GIT consistently outperforms the strongest baseline in the majority of cases, validating the effectiveness of task-trees as generalizable learning units across heterogeneous graph tasks.

\noindent\textbf{Effect of Specialization.} We summarize three key observations: (1) Specialization (GIT-S) enhances the performance of GIT-G across most settings (Table \ref{tab:full}). However, the impact of specialization varies depending on the dataset used (Appendix Figure \ref{fig:academia sft data ablation} and Table \ref{tab:ablation sft data molecule}). (2) Specialization does not significantly degrade model performance on other domains (Table \ref{tab:general cap in-context}). (3) Applying the proposed specialization approach to other models also yields performance improvements in specific domains (Table \ref{tab:academia ablation}), showing the generalization of post-training in enhancing model capacity.

\subsection{Comparison to State-of-The-Art Methods}

\begin{table}[!t]
    \centering
    \caption{Performance comparison to SOTA methods.}
    \label{tab:compare sota}
    \resizebox{\linewidth}{!}{
        \begin{tabular}{p{3.5cm}ccc}
            \toprule
                                  & \textbf{Academic} & \textbf{KG}    & \textbf{Molecule} \\ \midrule
            \textbf{GraphPrompt+} & 74.80             & 74.78          & 72.99             \\
            \textbf{All in one}   & 75.25             & 74.92          & 71.87             \\
            \textbf{OpenGraph}    & 74.64             & 71.38          & 72.84             \\
            \textbf{AnyGraph}     & 75.01             & 74.30          & 72.49             \\
            \textbf{GIT - G}      & \textbf{75.82}    & \textbf{75.73} & \textbf{74.57}    \\
            \bottomrule
        \end{tabular}
    }
\end{table}

To evaluate the effectiveness of GIT, we compare it against recent graph foundation models, including GraphPrompt+ \citep{liu2023graphprompt}, All in One \citep{sun2023all}, OpenGraph \citep{xia2024opengraph}, and AnyGraph \citep{xia2024anygraph}, under the pretrain-then-finetune paradigm. As shown in Table~\ref{tab:compare sota}, we benchmark GIT-G and all baselines across three representative domains: academic networks (node classification), knowledge graphs (edge classification), and molecular graphs (graph classification). GIT-G consistently outperforms all competing methods across tasks and domains. While many existing models pursue broad generalization by addressing multiple aspects---e.g., domain shifts, modality fusion, and prompt engineering---GIT adopts a targeted approach centered on task alignment. Its core innovation lies in the use of \textit{task-trees}, which provide a unified abstraction across graph tasks.

\subsection{Ablation on Training Strategies}

\begin{table}[!t]
    \centering
    \caption{Ablation study on training strategies, where the performance is averaged across all academic networks. }
    \label{tab:academia ablation}
    \resizebox{\linewidth}{!}{
        \begin{tabular}{lcccccc}
            \toprule
                     & \multicolumn{3}{c}{\bf Base Model}    & \multicolumn{3}{c}{\bf Expert Model}                                                                                  \\ \cmidrule(lr){2-4} \cmidrule(lr){5-7}
                     & \textbf{0-shot}                       & \textbf{3-shot}                           & \textbf{Finetune} & \textbf{0-shot} & \textbf{3-shot} & \textbf{Finetune} \\ \midrule
            GraphMAE & 15.30                                 & 51.51                                     & \textbf{75.57}    & 17.89           & \textbf{55.88}  & 75.26             \\
            OFA      & 14.19                                 & 50.15                                     & 75.12             & 17.26           & 54.88           & \textbf{75.97}    \\
            GIT      & \textbf{15.36}                        & \textbf{53.31}                            & 75.53             & \textbf{18.38}  & 55.10           & 75.47             \\ \midrule\midrule
                     & \multicolumn{3}{c}{\bf General Model} & \multicolumn{3}{c}{\bf Specialized Model}                                                                             \\ \cmidrule(lr){2-4} \cmidrule(lr){5-7}
                     & \textbf{0-shot}                       & \textbf{3-shot}                           & \textbf{Finetune} & \textbf{0-shot} & \textbf{3-shot} & \textbf{Finetune} \\ \midrule
            GraphMAE & \textbf{15.42}                        & 49.25                                     & 73.81             & 20.31           & 51.21           & 74.05             \\
            OFA      & 13.98                                 & 45.93                                     & 72.18             & 20.05           & 46.87           & 73.04             \\
            GIT      & 14.88                                 & \textbf{54.00}                            & \textbf{75.82}    & \textbf{23.45}  & \textbf{55.18}  & \textbf{75.88}    \\
            \bottomrule
        \end{tabular}
    }
\end{table}

We conduct an ablation study on academic networks to evaluate the impact of different training strategies. Specifically, we examine four approaches: (1) \textit{Base Model}: Pretraining on the target graph. (2) \textit{Expert Model}: Pretraining on all academic networks. (3) \textit{General Model}: Pretraining on the default pretraining datasets. (4) \textit{Specialized Model}: Pretraining on the default datasets followed by specialization on \texttt{Arxiv}. The results, averaged across all academic graphs, are reported in Table \ref{tab:academia ablation}. Notably, the general model of GIT maintains stable performance relative to both the base and expert models. In contrast, GraphMAE and OFA exhibit performance degradation when transitioning from the base and expert models to the general model. This finding underscores the potential of task-trees in mitigating such negative transfer. Furthermore, specialization in GIT allows its performance to closely approximate that of expert models trained exclusively on academic graphs.

\subsection{Comparison to Domain Experts}

We compare GIT to domain experts in molecular and knowledge graphs, as these domains require domain-specific knowledge. Our findings indicate that the specialized GIT (GIT-S) closely approximates the performance of these experts. In molecular graphs, GIT-S achieves an average performance of 62.83, ranking second to the state-of-the-art GIMLET (64.15) \citep{zhao2023gimlet} while outperforming other domain experts \citep{zeng2022deep,su2022molecular,taylor2022galactica}, whose best performance is 55.82 (Table \ref{tab:compare expert molecule}). Similarly, in knowledge graphs, GIT-S attains an average score of 67.80, approaching the KG expert \textsc{Ultra} (68.53) \citep{galkin2023towards}, as shown in Table \ref{tab:compare ultra} in Appendix.

\begin{table}[!t]
    \centering
    \caption{Performance comparison between methods with different basic learning instances.}
    \label{tab:compare subgraph}
    \resizebox{\linewidth}{!}{
        \begin{tabular}{p{3.5cm}ccc}
            \toprule
            \textbf{Instances}     & \multicolumn{2}{c}{\bf Subgraph} & \textbf{Tree}                  \\ \cmidrule(lr){1-1} \cmidrule(lr){2-3} \cmidrule(lr){4-4}
            \textbf{Domains      } & OFA                              & GIT - SubG    & GIT - Tree     \\ \midrule
            \textbf{Academia     } & 72.18                            & 73.48         & \textbf{75.82} \\
            \textbf{KG           } & 72.38                            & 73.59         & \textbf{75.73} \\
            \textbf{Molecule     } & 74.03                            & 72.67         & \textbf{75.73} \\ \midrule
            \textbf{Held-out Avg.} & 71.31                            & 70.88         & \textbf{73.81} \\
            \textbf{Avg.         } & 72.93                            & 73.01         & \textbf{75.33} \\ \bottomrule
        \end{tabular}
    }
\end{table}

\begin{table}[!t]
    \centering
    \caption{Results on non-text-attributed graphs.}
    \label{tab:non-text-graph}
    \resizebox{\linewidth}{!}{
        \begin{tabular}{p{4cm}cc}
            \toprule
            \textbf{Setting}              & \textbf{GraphMAE} & \textbf{GIT-G} \\ \midrule
            w/o SVD + w/o pretrain        & \textbf{95.02}    & 94.27          \\
            w. SVD + w/o pretrain         & 94.89             & \textbf{95.50} \\
            \textbf{w. SVD + w. pretrain} & 95.10             & \textbf{95.70} \\
            \bottomrule
        \end{tabular}
    }
\end{table}

\subsection{Task-Trees vs. Subgraphs: Efficiency and Performance}
\label{sec:task-tree vs subgraphs}

We compare the efficiency and effectiveness of task-trees against subgraph-based learning units in the fine-tuning setting, as shown in Figure~\ref{fig:pt efficiency} and Table~\ref{tab:compare subgraph}. For a fair comparison, we implement a subgraph-based variant of our model, denoted \textbf{GIT-SubG}, by substituting task-trees with subgraphs while keeping all other components unchanged.

Experimental results across three domains---academic (node classification), knowledge graphs (edge classification), and molecular graphs (graph classification)---demonstrate that task-trees consistently outperform subgraphs in both computational efficiency (Figure~\ref{fig:pt efficiency}) and predictive performance (Table~\ref{tab:compare subgraph}). This highlights the advantage of task-trees in serving as compact, expressive, and structurally aligned learning instances, especially in cross-task and cross-domain generalization scenarios.

\subsection{Results on Non-Text-Attributed Graphs}

While prior experiments use text-attributed graphs, GIT does not inherently rely on textual information. We adopt text-attributed datasets to isolate the impact of task heterogeneity, while sidestepping the confounding effects of feature heterogeneity. Textual attributes enable feature alignment across graphs via a shared encoder, allowing us to more clearly assess the benefits of task-tree generalization. Importantly, GIT is compatible with non-textual graphs. To demonstrate this, we introduce a lightweight module that addresses feature heterogeneity by applying SVD to project features into a shared space. As shown in Table~\ref{tab:non-text-graph}, we pretrain GIT-G on non-text-attributed datasets---PubMed (node classification), Citeseer (link prediction), and IMDB-B (graph classification)---and fine-tune on Cora (link prediction). Results, evaluated via AUC, confirm that GIT-G remains effective without relying on textual information, validating its broader applicability.

\section{Conclusion and Limitations}

\begin{figure}[!t]
    \centering
    \includegraphics[width=\linewidth]{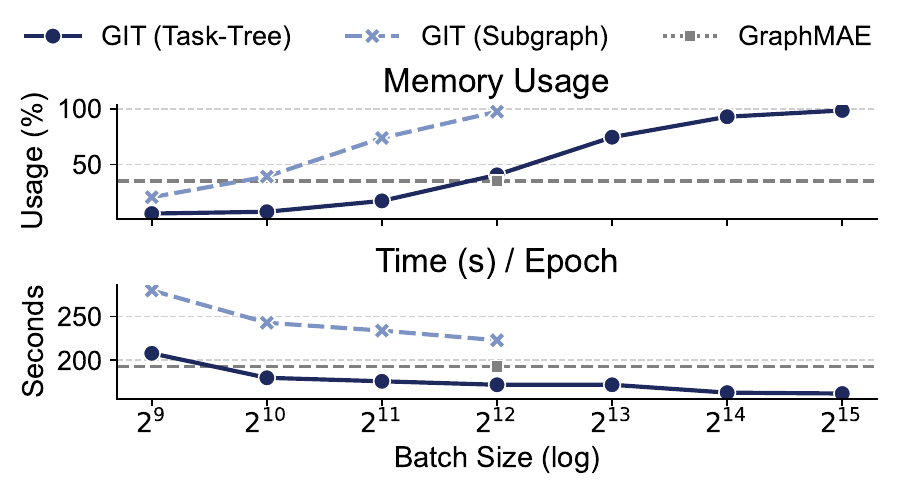}
    \caption{Training efficiency between task-tree and subgraph versions of GIT.}
    \label{fig:pt efficiency}
\end{figure}

We introduce \textit{task-trees} as unified learning instances for aligning heterogeneous graph-based tasks, supported by both theoretical and empirical analyses. Based on this abstraction, we develop GIT, a pretrained graph foundation model that leverages a small set of source graphs to generalize across over 30 datasets spanning five diverse domains. Our results demonstrate that task-trees preserve transferable patterns across tasks, offering a scalable and principled pathway for cross-domain generalization in graph learning.

\noindent\textbf{Limitations and Future Work.}  
While task-trees serve as a promising abstraction---analogous to images in vision and sentences in language---several challenges remain. First, although our results suggest that task-trees capture generalities across tasks, pinpointing the precise transferable patterns remains an open question. Future work should explore this from both theoretical and empirical perspectives. Second, transforming graph neighborhoods into tree structures may lead to information loss due to the limitations of message-passing GNNs. This can affect model expressiveness, especially for graphs with complex higher-order structures. Incorporating more expressive architectures, such as higher-order GNNs \citep{morris2019weisfeiler,morris2023wl} or advanced graph modeling \citep{wang2025neural}, may help mitigate this issue. Finally, while GIT demonstrates strong performance with a lightweight design, exploring more sophisticated variants—e.g., incorporating attention mechanisms, graph prompting, or task-conditioned decoders—offers exciting directions for extending the generality and adaptability of graph foundation models. We outline several of these opportunities in Appendix~\ref{app:extension}.

\section*{Acknowledgments}

This work was partially supported by the NSF under grants IIS-2321504, IIS-2334193, IIS-2340346, IIS-2217239, CNS-2426514, and CMMI-2146076, and Notre Dame Strategic Framework Research Grant (2025). Any opinions, findings, and conclusions or recommendations expressed in this material are those of the authors and do not necessarily reflect the views of the sponsors. 

\section*{Impact Statement}

From an industry perspective, we offer GIT as a foundational tool for graph-structured data. Additionally, since GIT can be quickly adapted to specific domains, we hope it will support applications where label acquisition is difficult and model training is time-consuming. There is no clear ethical considerations on the model architecture itself.

\bibliography{citation}
\bibliographystyle{icml2025}

\newpage
\appendix
\onecolumn







\section{Related Work}
\label{app:related work}

\noindent\textbf{Graph Neural Networks.} GNNs are a class of learning models specifically designed to operate on graph-structured data and have demonstrated substantial success across a variety of domains. Their strength lies in their ability to perform relational learning, where information from neighboring nodes is aggregated and used to enhance node representations. For instance, GCN \citep{kipf2017semisupervised} utilizes message-passing to aggregate information from neighboring nodes to central nodes. Building on this, models such as GraphSAGE \citep{hamilton2017inductive} and GAT \citep{velickovic2018graph} introduce innovative techniques like neighborhood sampling and attention mechanisms, respectively, further advancing performance on graph learning tasks. However, these methods are limited to solving a single task by training from the scratch~\citep{zhang2019hetgnn,fan2022heterogeneous,qian2025adaptive,ma2023hypergraph,qian2024dual}.

\noindent\textbf{Transferability of GNNs.} Existing works that analyze the shared concepts (generalities) across different graphs primarily follow two approaches. The first is graphon theory, which provides bounds on the distance between graphs generated from the same graphon. This method has been used to study transferability in pretraining and fine-tuning settings \citep{cao2023pre}, to develop more expressive fine-tuning techniques \citep{sun2024fine}, and to design new model architectures \citep{ruiz2020graphon}. However, despite its theoretical advantages, graphon-based approaches face practical challenges, particularly the strong assumptions required and the difficulty of identifying graphons in large-scale graphs, which limits their applicability in building graph foundation models. The second approach involves leveraging substructures within graphs to identify transferable patterns \citep{mao2024graph}. This method focuses on extracting subgraphs composed of meaningful substructures for prediction tasks. While this approach offers theoretical insights into stability \citep{levie2019transferability,zhu2021transfer}, it struggles to fully capture substructures that are beneficial for downstream tasks \citep{zhang2024beyond}.

\noindent\textbf{Graph Foundation Models.} Foundation models are designed as general-purpose solvers capable of handling various tasks across different domains. For instance, LLMs, the foundation models in natural language processing, are capable of performing tasks such as summarization, translation, and entity recognition, as well as question-answering. However, building such versatile foundation models for graphs presents unique challenges due to the inherent feature, structural, and task heterogeneity across different graph domains and tasks. To address these challenges, \citet{qiu2020gcc} pretrained GNNs using subgraphs as basic units, mitigating structural heterogeneity. Building on this, \citet{sun2023all} reformulated node-, edge-, and graph-level tasks into subgraph-level tasks, tackling task heterogeneity. Additionally, \citet{huang2023prodigy} and \citet{liu2024one} applied LLMs to unify the feature spaces of cross-domain graphs, addressing feature heterogeneity. These approaches enable models to operate on cross-domain and cross-task graphs. Further advancements, such as \citet{he2024unigraph} and \citet{li2024zerog}, improve node embeddings by jointly optimizing GNN and LLM encoders, facilitating various downstream tasks like few-shot learning and zero-shot learning. However, most of these approaches rely on subgraphs as the primary learning instances, which can result in inefficient training and reduced expressiveness, as discussed in the main paper. Other efforts to resolve feature heterogeneity include methods like singular vector decomposition (SVD) \citep{zhao2024all,yu2024text}, non-parametric encoders \citep{zhao2024graphany}, or synthesizing text-attributed graphs \citep{wang2025can}. Notably, a recent study by \citet{wang2024gft} explores computation trees as transferable patterns in graphs. However, our work differs in three key aspects: (1) While \citet{wang2024gft} focus on identifying transferable patterns across graphs, our approach is centered on designing fundamental learning instances. (2) Our theoretical framework also serves as a foundational basis for \citet{wang2024gft}. (3) Our proposed GIT significantly simplifies the model proposed by \citet{wang2024gft}, demonstrating a minimally applicable design that retains effectiveness.

Another line of research focuses on designing GFMs for single tasks or domains, thereby avoiding the complexities of feature, structural, or task heterogeneity. For example, \citet{galkin2023towards} propose a foundation model for reasoning tasks on knowledge graphs, using triplets as basic transferable patterns. \citet{zhao2023gimlet} introduce a foundation model for molecular graphs, employing LLMs to align semantics between datasets and encode key motifs. In node classification, \citet{li2024zerog} propose a zero-shot learning foundation model, while \citet{zhao2024all} present a feature alignment method based on SVD for node-level graph foundation models. Recently, \citet{zhao2024graphany} designed a foundation model for node classification using a non-parametric classifier. Meanwhile, \citet{chenllaga}, \citet{tang2023graphgpt}, \citet{guo2023gpt4graph}, \citet{liu2024can}, and \citet{wang2024can} have explored using LLMs as graph reasoners to solve graph tasks, similar to their role in vision language models. While these methods excel at specific tasks or domains, they are not suitable as general graph solvers across diverse tasks. In contrast to these approaches, our proposed GIT model is pretrained on diverse task trees to acquire general reasoning capabilities, allowing it to quickly specialize in specific domains through instruction tuning.

\section{Additional Model Discussion}
\label{app:additional analysis}

\subsection{Why Does the General Model Need Specialization?}

It is challenging for a single graph model to handle tasks across various domains due to pattern conflicts, where the same structural pattern can have different meanings in different domains. To illustrate this issue, we provide an intuitive example\footnote{This example was first illustrated in \citet{cao2023pre}}. Consider a pretraining process involving datasets from multiple domains, such as social networks, molecular networks, academic networks, and knowledge graphs. Suppose the model learns triangle structures during pretraining. In social networks, the semantic meaning of these triangles is \textit{stable}, following the principle of ``the friend of my friend is my friend''. However, in molecular graphs, the meaning of triangle patterns may be \textit{unstable} due to chemical properties. This pattern conflict can significantly degrade the performance of graph models \citep{cao2023pre,mao2024graph}. Specialization helps resolve this issue by aligning the meanings of certain structural patterns with the semantics specific to the target domain.

\subsection{More Analysis on Domain Regularizer}
\label{app:additional analysis regularizer}

\begin{figure}[!t]
    \centering
    \subfloat[Similiarity between different datasets with varying weights.]{\includegraphics[width=0.6\linewidth]{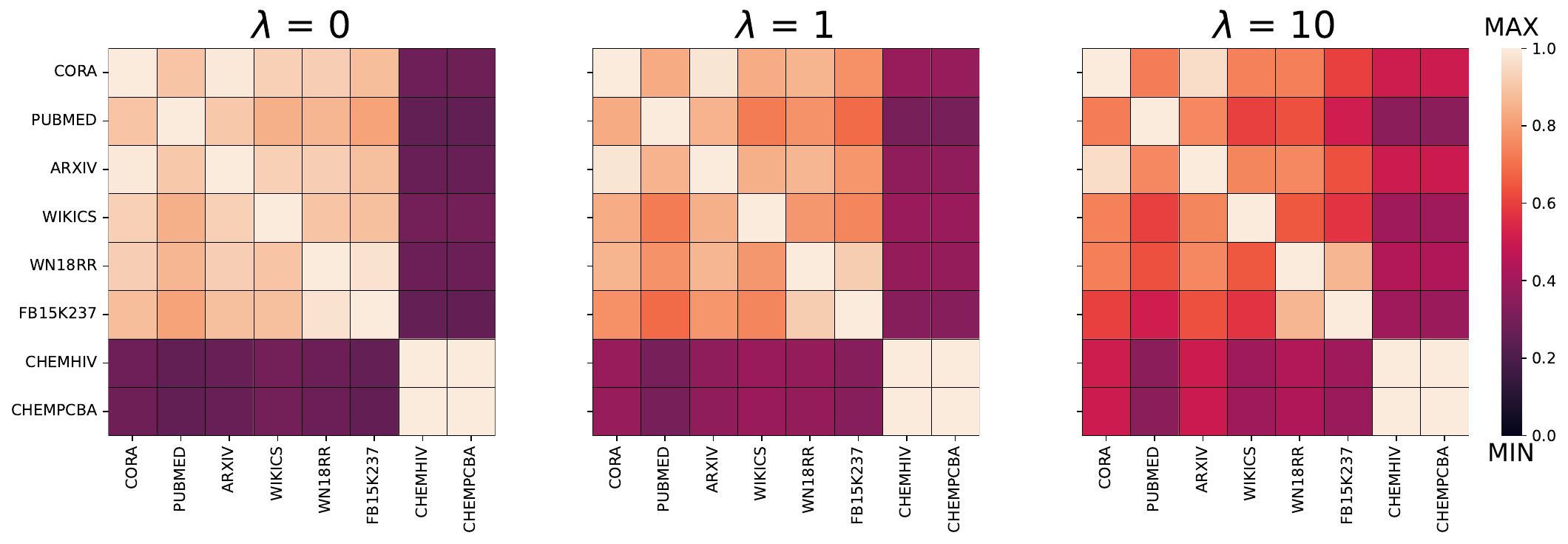}}
    \subfloat[w.o. Reg]{\includegraphics[width=0.18\linewidth]{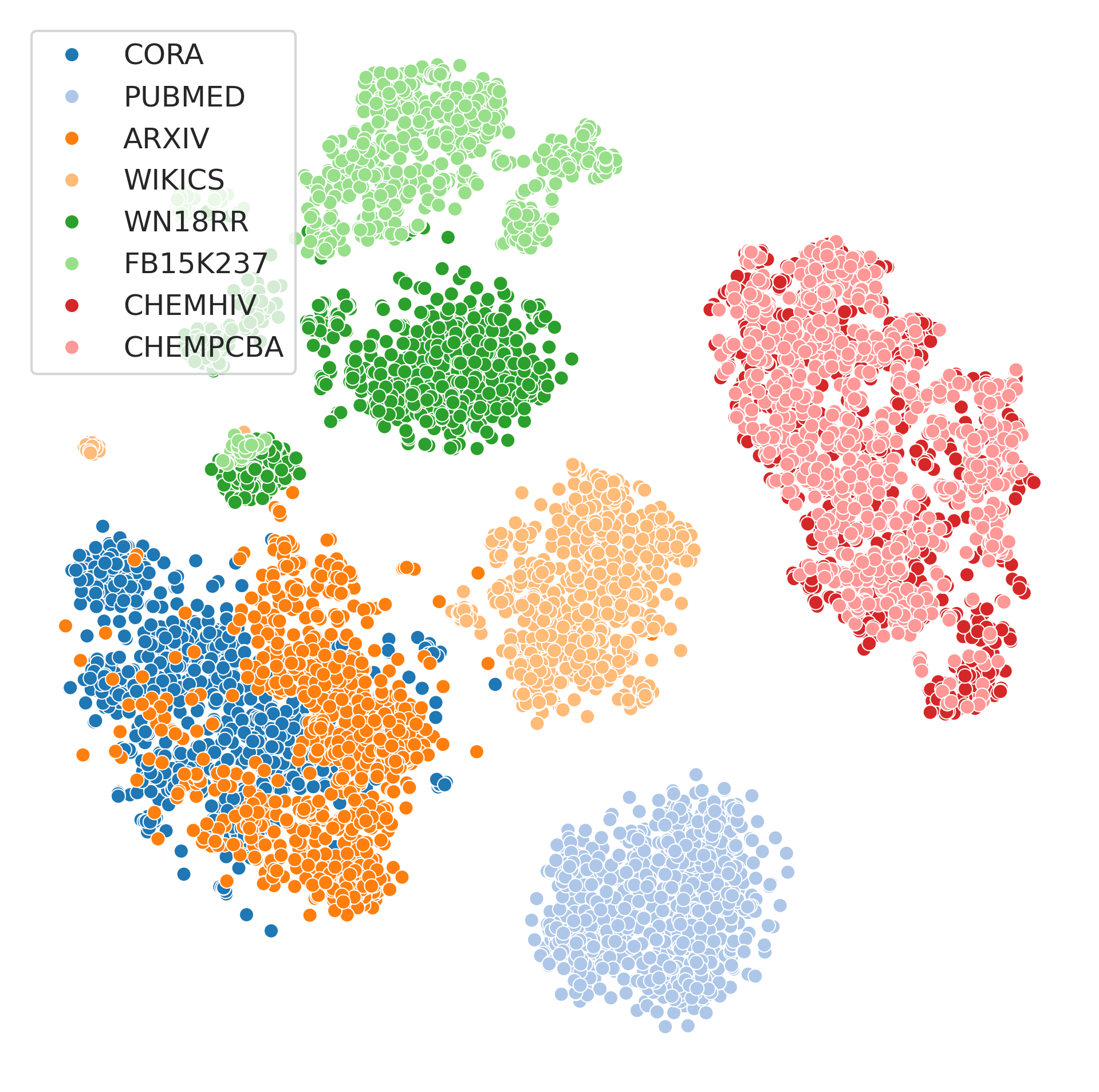}}
    \subfloat[w. Reg]{\includegraphics[width=0.18\linewidth]{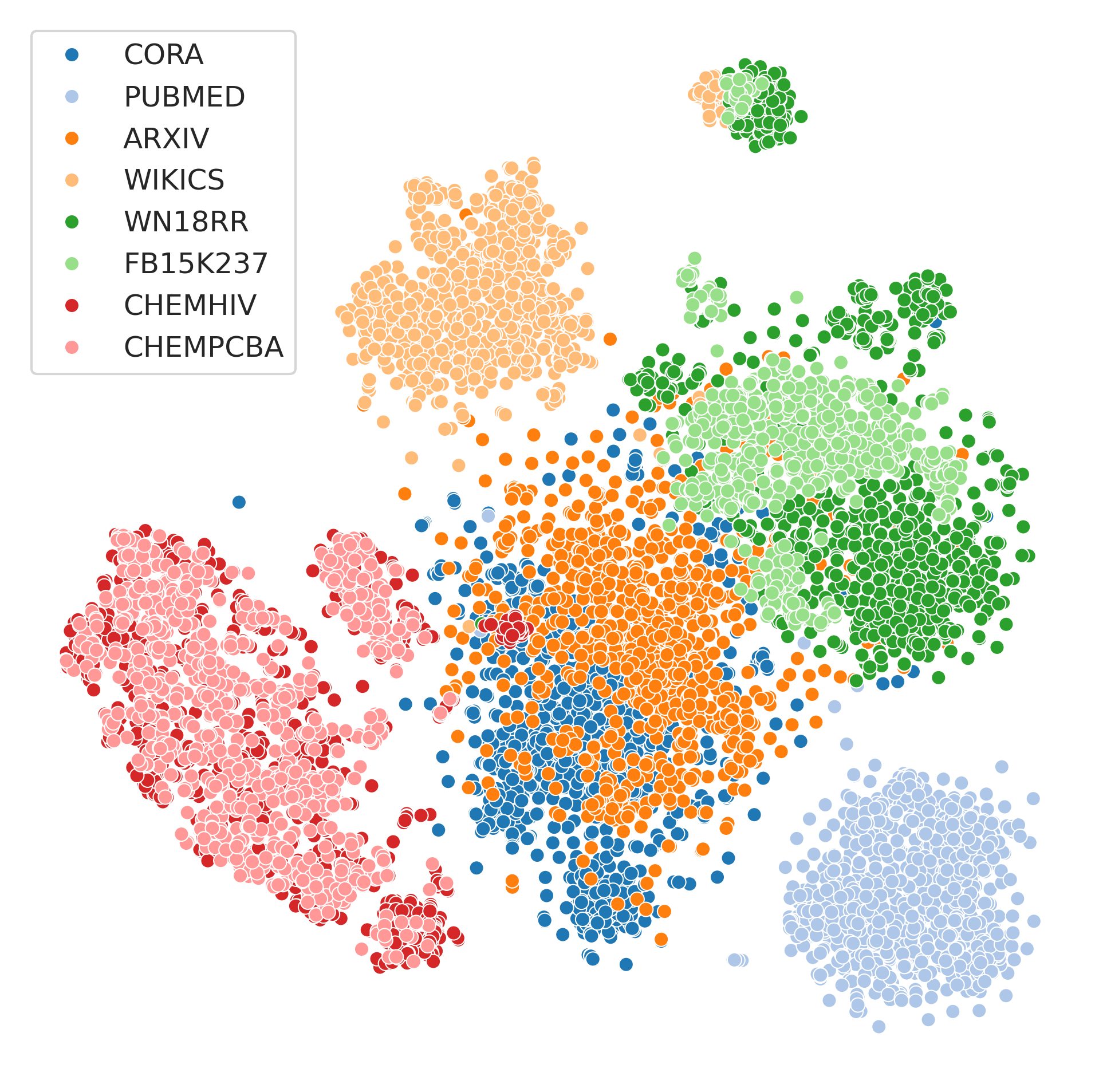}} \\

    \subfloat[The regularizer marginally affects the task structures for each dataset. ]{\includegraphics[width=\linewidth]{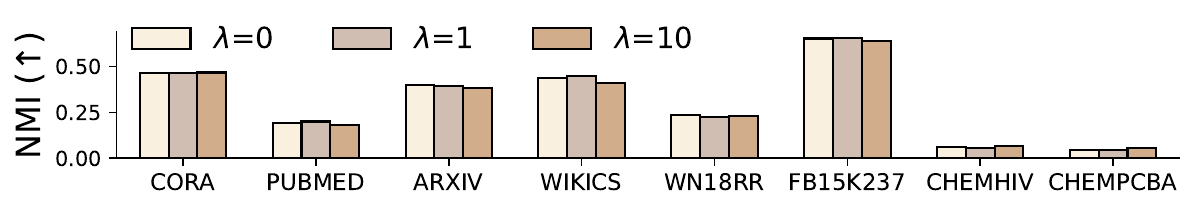}}

    \caption{The domain regularizer controls the distance between datasets while preserving the structure within each of them.}
    \label{fig:domain reg analysis}
\end{figure}

\begin{table}[!t]
    \centering
    \caption{Model performance across settings with different scaling weights of domain regularizer.}
    \label{tab:reg results}
    \resizebox{0.4\linewidth}{!}{
        \begin{tabular}{lccc}
            \toprule
                     & $\lambda = 0$ & $\lambda = 1$  & $\lambda = 10$ \\ \midrule
            0-shot   & 20.39         & 27.55          & \textbf{29.60} \\
            3-shot   & 53.10         & 57.53          & \textbf{60.21} \\
            Finetune & 74.78         & \textbf{75.41} & 75.37          \\ \bottomrule
        \end{tabular}
    }
\end{table}

\noindent\textbf{The Necessity of Domain Alignment.} Datasets from multiple domains are often projected into different subspaces, potentially due to misalignment of node attributes \citep{chen2024text} and the frequent patterns across domains. As a result, the model may ``memorize'' information specific to each domain rather than learning transferable patterns. This can lead to misunderstandings when the same pattern appeared across different graphs is projected into different subspaces. Consequently, the model struggles to acquire transferable knowledge that would benefit unseen tasks and specialized domains. Properly aligning the embedding spaces of different domains is crucial for obtaining transferable knowledge and improving performance on unseen graphs and specialized domains.

\noindent\textbf{How to Regulate Domain Distances?} We propose a domain regularizer to control domain distances by projecting cross-domain graphs with different characteristics into a shared embedding space. Specifically, we define a shared subspace across domains and pull the subspaces of other domains into alignment with this defined space. The shared subspace should be positioned at the center of the cross-domain datasets to minimize the effort required to adjust the subspaces of all domains. In particular, the basis vector of the shared subspace is defined as the average of all instances:
\begin{equation}
    \vh_{Basis} = \E_{D \sim P(\gD)} \E_{\gT_i \sim P(\gT_D)} \phi(T_i),
\end{equation}
where $P(\gD)$ represents the domain distribution, $\gT_D$ is a distribution of task-trees within domain $D$, and $\phi(T_i)$ is the embedding of the task-tree $T_i$. Given the shared subspace basis, we optimize the KL divergence between each instance and the basis. However, obtaining the global basis vector $\vh_{Basis}$ directly is impractical due to dataset size, so we approximate it by averaging the embeddings of all instances within a batch to compute the local basis $\hat{\vh}_{Basis}$. We then optimize the KL divergence for all instances in the batch. To mitigate randomness, we empirically use a relatively large batch size (4,096). Formally, the domain regularizer is defined as
\begin{equation}
    \label{eq:domain align}
    \gL_{align} = \lambda \cdot \frac{1}{|\gB|}\sum_{i \in \gB} \operatorname{KL}(H  \|  Z_i) = - \lambda \cdot \frac{1}{|\gB|}\sum_{i \in \gB} \sum_j H(j) \log\Bigl(\frac{H(j)}{Z_i(j)}\Bigr),
\end{equation}
where $\gB$ denotes the batch, and $H$ and $Z_i$ represent the distributions of the local basis vector $\hat{\vh}_{Basis}$ and instance embedding $\vz_i$, respectively.

\noindent\textbf{How the Domain Regularizer Works?} To better understand how the domain regularizer functions, we conduct an analysis to demonstrate its benefits in regulating domain distances while preserving task structures for each dataset. We use eight datasets provided by \citet{liu2024one} for pretraining: \texttt{Cora}, \texttt{Pubmed}, \texttt{Arxiv}, \texttt{WikiCS}, \texttt{WN18RR}, \texttt{FB15K237}, \texttt{CHEMHIV}, and \texttt{CHEMPCBA}. The analysis results are presented in Figure \ref{fig:domain reg analysis}.

In the figure, we display (a) a heatmap of similarity between different datasets with varying weights, and visualizations of the embedding space before (b) and after (c) applying the domain regularizer. The results show that the domain regularizer effectively adjusts the distances between datasets by pushing apart overly similar graphs and bringing closer those that are too distinct. Furthermore, we show that the regularizer does not significantly alter the task structures of each dataset, as illustrated in subfigure (d). In this subfigure, we apply $k$-means algorithm on each dataset, setting $k$ to the number of classes, and compare to the ground-truth by using NMI as the metric. The assumption is that if two sets of vectors yield similar clustering results, the classification outcomes of the same classifier will be similar, indicating that the task structure across the two sets is consistent. The results demonstrate that changing the regularizer weight does not significantly affect task structures. This may be because the regularizer acts by translating vectors toward a central point without altering the relationships between individual pairs of vectors. To further evaluate the impact of domain regularizer for the downstream tasks, we present the model performance average over the used eight datasets across all settings in Table \ref{tab:reg results}. We observe the use of domain regularizer can boost the model performance, especially in in-context and zero-shot settings. In addition, we empirically find that $\lambda = 10$ can lead to better performance. Thus, we set $\lambda = 10$ as the default weight in this paper.

\subsection{Discussion on Homophily and Heterophily}

In node-level tasks, it is important to consider both graph homophily and heterophily \citep{ma2021homophily}. Homophily describes the close relationships between connected entities, while heterophily refers to distant relationships between connected entities. Empirically, basic message-passing GNNs tend to perform well on homophily graphs but struggle with heterophily graphs \citep{luan2022revisiting}. Despite using GraphSAGE as the backbone in our GIT, it still performs well on heterophily graphs, such as \texttt{Children} and \texttt{Ratings}\footnote{Our collected graphs include two heterophily graphs, \texttt{Children} and \texttt{Ratings}, with homophily ratios of 0.42 and 0.38, respectively, whereas other graphs generally have a homophily ratio greater than 0.60 (Table 12, \citep{chen2024text}).}. The experimental results for node classification and link prediction on these two graphs are presented in Table \ref{tab:full ecommerce} and Table \ref{tab:ecommerce link pred}, where GIT generally achieves the best performance. We hypothesize that the proposed task-tree structure captures not only homophily relationships but also heterophily relationships. A potential question is whether our message-passing GNN can effectively capture these heterophily relationships, despite \citet{ma2021homophily} suggesting that basic GNNs may handle heterophily graphs by memorizing the patterns between the target node and its neighbors. We plan to use more advanced GNNs capable of encoding heterophily structures to further validate our hypothesis.

\subsection{Discussion on Model Expressiveness}

\noindent\textbf{Node-level Task.} The task-tree structure is an approximation of the original graph structure, but converting graphs into tree structures inevitably results in some loss of information. To better preserve the structural details of the original graph, one could use more expressive or advanced GNNs, thereby expanding the potential tree vocabulary \citep{mao2024graph}.

\noindent\textbf{Edge-level Task.} Existing message-passing GNNs struggle with the edge isomorphism problem \citep{srinivasan2020equivalence}. For instance, in Figure \ref{fig:edge isomorphic}, the links $(v_1, v_2)$ and $(v_3, v_4)$ are isomorphic, while $(v_1, v_2)$ and $(v_1, v_3)$ are not. However, when using a mean aggregator to learn edge embeddings, the embeddings of $(v_1, v_2)$ and $(v_1, v_3)$ become indistinguishable. We consider that GIT may still encounter this issue, as the task-tree encoding currently averages the embeddings of task-relevant nodes. Addressing this challenge could involve techniques like \citet{zhang2021labeling}, which ensure that isomorphic edges have distinct embeddings without impairing the model's basic inductive learning capabilities.

\begin{figure}[!h]
    \centering
    \includegraphics[width=0.3\linewidth]{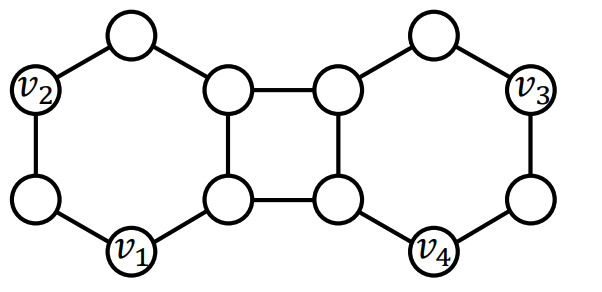}
    \caption{Edge Isomorphic (Figure 1, \citep{zhang2021labeling}).}
    \label{fig:edge isomorphic}
\end{figure}

\noindent\textbf{Graph-level Task.} Message-passing GNNs are limited by the 1-WL test \citep{xu2018how}, which can restrict their performance on graph-level tasks. As we apply GraphSAGE as the backbone, our GIT also encounters this limitation. \citet{zhang2024beyond} analyze the ability of different GNNs to detect graph substructures and conclude that more expressive GNNs, beyond the 1-WL test, can learn graph embeddings with richer information. Therefore, to improve model expressiveness, one can employ more expressive GNNs in our GIT. Additionally, techniques like \citet{zhang2021labeling} can be used to further enhance the model's discriminative capabilities.

\subsection{Scaling Law}

\noindent\textbf{Model Size.} We evaluated the model's performance with different hidden dimensions, with results by domain presented in Figure \ref{fig:scaling law model size again}. The results cover both basic fine-tuning and in-context learning, and comprehensive details are provided in Appendix \ref{app:model size full}. We observe that increasing the number of hidden dimensions from 128 to 2,048 significantly improves model performance across all domains. We hypothesize that this improvement is due to the additional parameters, which enhance the model's ability to memorize shared patterns across graphs. The observation indicates the potential existence of scaling laws when using task-trees as the basic learning instances.

\begin{figure}[!h]
    \centering

    \subfloat[In-context Learning]{\includegraphics[width=0.4\linewidth]{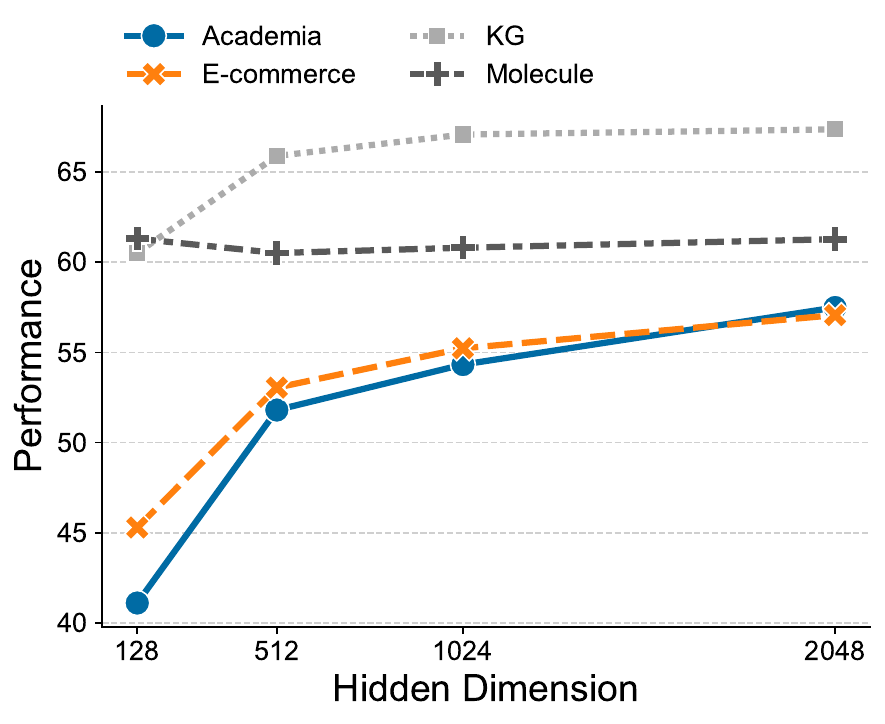}}
    \subfloat[Finetune]{\includegraphics[width=0.4\linewidth]{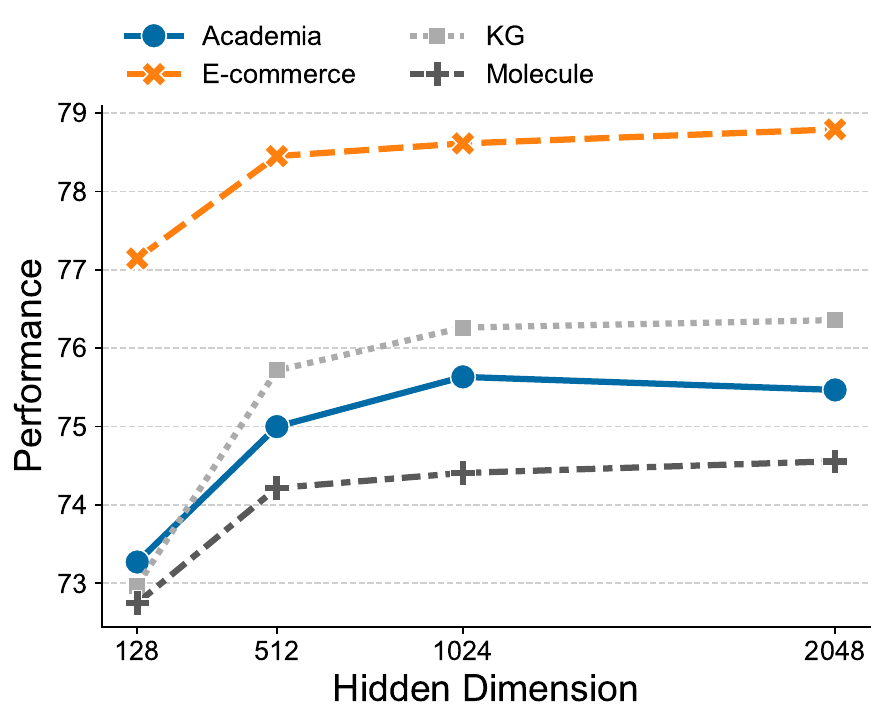}}

    \caption{The impact of model sizes on performance.}
    \label{fig:scaling law model size again}
\end{figure}

\noindent\textbf{Data Size.} We attempted to evaluate the scaling law by increasing the pretraining data, but unfortunately, we did not observe a clear trend where more data led to better performance. We consider there are three potential reasons. (1) From a model perspective, we use a GraphSAGE encoder with limited layers and parameters, which may not fully capture the knowledge contained in the pretraining data. Additionally, we apply basic mean pooling to derive task-tree embeddings from task-relevant node embeddings, which may prevent the model from identifying the relative importance of task-relevant nodes, thereby limiting its expressiveness. (2) From a training paradigm perspective, we employ a negative-free contrastive learning framework similar to \citet{thakoor2022largescale}, but this basic approach may not be expressive enough to extract meaningful knowledge from the pretraining graphs. (3) From a data perspective, despite using over 30 graphs in this study, the number of instances is still significantly lower than that of textual or visual instances extracted from the Internet. Furthermore, the pretraining datasets may not be well-aligned. Although we used a textual encoder to align node features, we cannot guarantee that the encoded node features are in the same embedding space \citep{chen2024text}.

\section{Potential Model Extensions}
\label{app:extension}

\subsection{Pretraining}

\noindent\textbf{How to Design Reconstruction Tasks?} Theorem \ref{thm:generalization} suggests that a well-designed encoder, capable of effectively handling reconstruction tasks during pretraining, can improve the model's generalization ability. One approach is to use more powerful encoders to enhance reconstruction performance. Another approach is to introduce additional reconstruction losses to further refine the encoder. For example, methods such as those proposed by \citet{qiu2020gcc}, and \citet{hou2022graphmae}, \citet{ju2023multi}, and \citet{wen2024coarse}, or designing more comprehensive reconstruction objectives could be explored.

\noindent\textbf{How to Improve Transferability?} The pretraining task, i.e., task-tree reconstruction, differs from the downstream task of task-tree classification, as the task heterogeneity may hinder model transferability \citep{Hu2020Strategies}. To mitigate this, one could develop more effective adaptation methods, such as graph prompt learning \citep{sun2022gppt}, to reduce task heterogeneity.

\subsection{Specialization via Instruction Tuning}

\noindent\textbf{How to Define Instructions?} In this paper, as we focus on experiments with text-attributed graphs, we define instructions as label descriptions encoded by LLMs. However, this approach is not applicable to non-textual graphs. Other methods could be explored to define instructions, such as using proxy models \citep{hu2019pre} or graph heuristics \citep{jin2020self} to generate instructions.

\noindent\textbf{How to Choose SFT Data?} We manually select graphs as supervised fine-tuning datasets for each domain, though the selected graphs may not be fully representative. Unlike textual data, evaluating the quality of graph datasets poses a challenge. Improved dataset selection methods could enhance the SFT process by identifying more representative or diverse data from graph databases. Additionally, while we perform instruction tuning over entire graphs, it is possible that only specific subgraphs are beneficial \citep{hashemi2024comprehensive}. Developing data selection methods that focus on high-quality subgraphs within a single SFT dataset could improve task-tree selection. Another worthy research line is to select SFT data that aligns with user preferences \citep{song2024preference}.

\noindent\textbf{How to Leverage SFT Data?} In scenarios with limited instructions, standard supervised fine-tuning may struggle to capture sufficient knowledge of the target domain. To address this, methods could be proposed to better utilize the unlabeled instances in the SFT dataset, thus enhancing model adaptation \citep{sohn2020fixmatch,ni2025towards}.

\noindent\textbf{How to Maintain General Inference Capability?} While instruction tuning specializes the model for a specific domain, it may compromise the model's general inference capabilities across other domains. This could hinder the model's performance when it needs to function both as a domain expert and a general reasoner. To mitigate this, regularization techniques could be designed to preserve the general knowledge encoded in the model during the instruction tuning process.

\noindent\textbf{Why SFT Works on Graphs?} Instruction tuning is a common post-training process in modern large language models (e.g., LLAMA, GPT) that significantly improves instruction-following capabilities. The success of this method in LLMs may stem from the fact that natural language serves as an interface between humans and models \citep{wei2021finetuned}. However, the reason instruction tuning works for graphs remains an open question and presents a potential direction for future research.

\subsection{More Scenarios.}

The paper leverages text-attributed graphs to align node features. However, the pre-processing of TAGs can be time-consuming, raising the challenge of how to effectively apply the model to graphs without aligned node features. Furthermore, while we primarily focus on homogeneous graphs in this work, most real-world applications involve heterogeneous graphs. Addressing the question of how to design a single model capable of handling various types of graphs remains an open challenge. Finally, applying the model to specific applications \citet{zhang2024diet}, which may exhibit unique characteristics, is another important consideration for future research.

\section{Proof}
\label{app:proof}

\subsection{Proof of Theorem \ref{thm:stability}}
\label{app:proof stability}

\begin{proof}
    We begin by introducing the basic GNN architecture used in the proof. Given a GNN encoder $\phi(\cdot)$ with parameters $\mW = (\mW_1, \mW_2)$, we use a GraphSAGE-like architecture, defined as follows (with some notation abuse):
    \begin{equation}
        \nonumber\vz_i = \phi(T_i^L) = \sigma \Bigl(\mW_1 \vx_{i} + \mW_2 \frac{1}{| \gN_i |} \sum_{k \in \gN_i} \phi(T^{L-1}_k) \Bigr),
    \end{equation}
    where $\sigma$ is the non-linear activation function, $\vx_i$ is the node feature of node $i$, and $\gN_i$ represents the neighbors of node $i$, corresponding to its children in the computation tree. $T_i^L$ denotes the computation tree of node $i$ with $L$ layers. Neighborhood information is incorporated by averaging the embeddings of neighboring nodes. WLOG, the averaging operation can be replaced with any permutation-invariant set operation without affecting the analysis in this paper. For simplicity, we assume all GNN layers share the same parameters; this assumption does not affect the validity of our proofs. Since these functions and neural networks exhibit Lipschitz continuity, we denote the Lipschitz constant of $\sigma(\cdot)$ as $\gC_\sigma$. Additionally, we assume the norm of node features is bounded by $\| \vx \| \leq \gB_{\vx}$, and the model weights by $\| \mW_1 \| \leq \gB_{\mW_1}$ and $\| \mW_2 \| \leq \gB_{\mW_2}$. While real-world graphs typically exhibit varied node features, standard techniques (as employed in this paper) like normalization can ensure that $\gB_{\vx}$ remains a small value. We define the distance between task-trees $T_1^t$ with $n$ task-relevant nodes $\{ v_1, ..., v_n \}$ and $T_2^t$ with $m$ task-relevant nodes $\{ v_1, ..., v_m \}$ as:
    \begin{equation}
        \nonumber\Delta := \biggl\| \phi(T_1^t) - \phi(T_2^t) \biggr\|= \biggl\| \frac{1}{n} \sum_{i=1}^n \phi(T_i) - \frac{1}{m} \sum_{j=1}^m \phi(T_j) \biggr\|,
    \end{equation}
    where $\| \cdot \|$ is the L2 distance. Following, we expand the stability term $\Delta$ as:
    \begin{align}
        \label{eq:tree bound}
        \nonumber \Delta & = \biggl\| \frac{1}{n} \sum_{i=1}^n \phi(T_i) - \frac{1}{m} \sum_{j=1}^m \phi(T_j) \biggr\|                                                                                                                                                                                \\
        \nonumber        & = \biggl\| \frac{1}{n} \sum_{i=1}^n \sigma \Bigl(\mW_1 \vx_{i} + \mW_2 \frac{1}{| \gN_i |} \sum_{k \in \gN_i} \phi(T^{L-1}_k) \Bigr) - \frac{1}{m} \sum_{j=1}^m \sigma \Bigl(\mW_1 \vx_{j} + \mW_2 \frac{1}{| \gN_j |} \sum_{k \in \gN_j} \phi(T^{L-1}_k) \Bigr) \biggr\|  \\
        \nonumber        & \leq \gC_\sigma \biggl\| \frac{1}{n} \sum_{i=1}^n \Bigl(\mW_1 \vx_{i} + \mW_2 \frac{1}{| \gN_i |} \sum_{k \in \gN_i} \phi(T^{L-1}_k) \Bigr) - \frac{1}{m} \sum_{j=1}^m  \Bigl(\mW_1 \vx_{j} + \mW_2 \frac{1}{| \gN_j |} \sum_{k \in \gN_j} \phi(T^{L-1}_k) \Bigr) \biggr\| \\
        \nonumber        & \leq \underbrace{\gC_\sigma \biggl\| \frac{1}{n} \sum_{i=1}^n \mW_1 \vx_{i} - \frac{1}{m} \sum_{j=1}^m  \mW_1 \vx_{j} \biggr\|}_{(a)}                                                                                                                                      \\
        \nonumber        & + \underbrace{\gC_\sigma \biggl\| \frac{1}{n} \sum_{i=1}^n \mW_2 \frac{1}{| \gN_i |} \sum_{k \in \gN_i} \phi(T^{L-1}_k)  - \frac{1}{m} \sum_{j=1}^m  \mW_2 \frac{1}{| \gN_j |} \sum_{k \in \gN_j} \phi(T^{L-1}_k) \biggr\|}_{(b)}.
    \end{align}
    Then, we separately analyze the term (a) and term (b). The term (a) can be bounded as follows:
    \begin{align}
        \nonumber \text{Term (a)}  = \gC_\sigma \biggl\| \frac{1}{n} \sum_{i=1}^n \mW_1 \vx_{i} - \frac{1}{m} \sum_{j=1}^m  \mW_1 \vx_{j} \biggr\| \leq \gC_\sigma \gB_{\mW_1} \biggl\| \frac{1}{n} \sum_{i=1}^n \vx_{i} - \frac{1}{m} \sum_{j=1}^m \vx_{j} \biggr\|.
    \end{align}
    That is, term (a) is bounded by the distance between the average features of nodes in the first layer of the task-trees (i.e., the nodes directly connected to the root). Next, we bound term (b):
    \begin{align}
        \nonumber\text{Term (b)} & = \gC_\sigma \biggl\| \frac{1}{n} \sum_{i=1}^n \mW_2 \frac{1}{| \gN_i |} \sum_{k1 \in \gN_i} \phi(T^{L-1}_{k1})  - \frac{1}{m} \sum_{j=1}^m  \mW_2 \frac{1}{| \gN_j |} \sum_{k2 \in \gN_j} \phi(T^{L-1}_{k2}) \biggr\|                                                                                                                                                                                                    \\
        \nonumber                & \leq \gC_\sigma \gB_{\mW_2} \biggl\| \frac{1}{n} \sum_{i=1}^n \frac{1}{| \gN_i |} \sum_{k1 \in \gN_i} \phi(T^{L-1}_{k1})  - \frac{1}{m} \sum_{j=1}^m \frac{1}{| \gN_j |} \sum_{k2 \in \gN_j} \phi(T^{L-1}_{k2}) \biggr\|                                                                                                                                                                                                  \\
        \nonumber                & = \gC_\sigma \gB_{\mW_2} \biggl\| \frac{1}{n} \sum_{i=1}^n \frac{1}{| \gN_i |} \sum_{k1 \in \gN_i} \sigma \Bigl(\mW_1 \vx_{k1} + \mW_2 \frac{1}{| \gN_{k1} |} \sum_{s1 \in \gN_{k1}} \phi(T^{L-2}_{s1}) \Bigr)                                                                                                                                                                                                            \\
        \nonumber                & - \frac{1}{m} \sum_{j=1}^m \frac{1}{| \gN_j |} \sum_{k2 \in \gN_j} \sigma \Bigl(\mW_1 \vx_{k2} + \mW_2 \frac{1}{| \gN_{k2} |} \sum_{s2 \in \gN_{k2}} \phi(T^{L-2}_{s2}) \Bigr) \biggr\|                                                                                                                                                                                                                                   \\
        \nonumber                & \leq \underbrace{\gC_\sigma \gB_{\mW_2} \gC_\sigma \gB_{\mW_1} \biggl\|  \frac{1}{n} \sum_{i=1}^n \frac{1}{| \gN_i |} \sum_{k1 \in \gN_i} \vx_{k1} - \frac{1}{m} \sum_{j=1}^m \frac{1}{| \gN_j |} \sum_{k2 \in \gN_j} \vx_{k2} \biggr\|}_{(c)}                                                                                                                                                                            \\
        \nonumber                & + \underbrace{\gC_\sigma \gB_{\mW_2} \gC_\sigma \gB_{\mW_2}  \biggl\|  \frac{1}{n} \sum_{i=1}^n \frac{1}{| \gN_i |} \sum_{k1 \in \gN_i} \frac{1}{| \gN_{k1} |} \sum_{s1 \in \gN_{k1}} \phi(T^{L-2}_{s1})                                 -   \frac{1}{m} \sum_{j=1}^m \frac{1}{| \gN_j |} \sum_{k2 \in \gN_j} \frac{1}{| \gN_{k2} |} \sum_{s2 \in \gN_{k2}} \phi(T^{L-2}_{s2})                           \biggr\|}_{(d)}.
    \end{align}
    Term (c) describes the distance between the average features of nodes in the second layer of the task-trees, while term (d) follows a recursive formula, similar to term (b). By combining terms (a) and (b), we have:
    \begin{align}
        \nonumber \Delta & \leq \gC_\sigma \gB_{\mW_1} \biggl\| \frac{1}{n} \sum_{i=1}^n \vx_{i} - \frac{1}{m} \sum_{j=1}^m \vx_{j} \biggr\|                                                                                                                                                                                                                                                                                      \\
        \nonumber        & + \gC_\sigma \gB_{\mW_2} \gC_\sigma \gB_{\mW_1} \biggl\|  \frac{1}{n} \sum_{i=1}^n \frac{1}{| \gN_i |} \sum_{k1 \in \gN_i} \vx_{k1} - \frac{1}{m} \sum_{j=1}^m \frac{1}{| \gN_j |} \sum_{k2 \in \gN_j} \vx_{k2} \biggr\|                                                                                                                                                                               \\
        \nonumber        & + \gC_\sigma \gB_{\mW_2} \gC_\sigma \gB_{\mW_2}  \biggl\|  \frac{1}{n} \sum_{i=1}^n \frac{1}{| \gN_i |} \sum_{k1 \in \gN_i} \frac{1}{| \gN_{k1} |} \sum_{s1 \in \gN_{k1}} \phi(T^{L-2}_{s1})                                 -   \frac{1}{m} \sum_{j=1}^m \frac{1}{| \gN_j |} \sum_{k2 \in \gN_j} \frac{1}{| \gN_{k2} |} \sum_{s2 \in \gN_{k2}} \phi(T^{L-2}_{s2})                           \biggr\|.
    \end{align}

    We can extend the formula recursively through all layers until the final layer. The recursive nature of the formula allows us to more easily reformulate the bound:
    \begin{equation}
        \label{eq:stability recursive}
        \Delta \leq \gC_1 \Delta_1 + \gC_2 \Delta_2 + ... + \gC_{L-1} \Delta_{L-1},
    \end{equation}
    where $\Delta_l$ denotes the distance between task-trees at the $l$-th layer. For clarity, we can interpret $\gC_1 \Delta_1$ as corresponding to Term (a) and $\gC_2 \Delta_2$ as corresponding to Term (c). Next, we explain how to determine $\gC_l$ and $\Delta_l$ for each layer.

    By analyzing the recursive formula, we determine $\gC_l$ as follows:
    \begin{equation}
        \nonumber \begin{cases}
            \gC_1 & = \gC_\sigma \gB_{\mW_1},                                   \\
            \gC_2 & = \gC_\sigma \gB_{\mW_2} \times \gC_\sigma \gB_{\mW_1},     \\
            ...                                                                 \\
            \gC_l & = (\gC_\sigma \gB_{\mW_2})^l \times \gC_\sigma \gB_{\mW_1}.
        \end{cases}
    \end{equation}

    We then define the $\Delta_l$. For a concise definition, we introduce an additional notation for describing the subtree information:
    \begin{equation}
        \nonumber\begin{cases}
            \vx_i^{(0)} & = \vx_i,                                                 \\
            \vx_i^{(1)} & = \frac{1}{ | \gN_i| } \sum_{j \in \gN_i} \vx_j^{(0)},   \\
            \vx_i^{(2)} & = \frac{1}{ | \gN_i| } \sum_{j \in \gN_i} \vx_j^{(1)},   \\
            ...                                                                    \\
            \vx_i^{(l)} & = \frac{1}{ | \gN_i| } \sum_{j \in \gN_i} \vx_j^{(l-1)}.
        \end{cases}
    \end{equation}
    By using the term, we can define the $\Delta_l$ as:
    \begin{equation}
        \nonumber\begin{cases}
            \Delta_1 & = \Bigl\| \frac{1}{n} \sum_{i=1}^n \vx_i^{(0)} - \frac{1}{m} \sum_{j=1}^m \vx_j^{(0)} \Bigr\|,     \\
            \Delta_2 & = \Bigl\| \frac{1}{n} \sum_{i=1}^n \vx_i^{(1)} - \frac{1}{m} \sum_{j=1}^m \vx_j^{(1)} \Bigr\|,     \\
            ...                                                                                                           \\
            \Delta_l & = \Bigl\| \frac{1}{n} \sum_{i=1}^n \vx_i^{(l-1)} - \frac{1}{m} \sum_{j=1}^m \vx_j^{(l-1)} \Bigr\|.
        \end{cases}
    \end{equation}

    By using a formulation like expression \ref{eq:stability recursive}, we can decompose the impact of different layers, facilitating further analysis of the upper bound on the distance between two task-trees. Next, we will analyze the upper bound of each term. To begin, we first introduce a lemma.
    \begin{lemma}
        Given two sets of random vectors $\gS_1 = \{\vv_1, ..., \vv_n\}$ and $\gS_2 = \{\vv_1, ..., \vv_m\}$, the following holds:
        \begin{equation}
            \nonumber\Bigl\| \frac{1}{n} \sum_{i=1}^n \vv_i - \frac{1}{m} \sum_{j=1}^m \vv_j \Bigr\| \leq \frac{1}{nm}\sum_{i=1}^{n} \sum_{j=1}^{m} \Bigl\| \vv_i - \vv_j \Bigr\|.
        \end{equation}
    \end{lemma}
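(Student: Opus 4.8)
The plan is to reduce the claim to a single application of the triangle inequality after an algebraic normalization of the left-hand side. The key observation is that the difference of the two averages can be written as an average of all pairwise differences $\vv_i - \vv_j$: indeed,
\begin{equation}
   \nonumber \frac{1}{nm} \sum_{i=1}^n \sum_{j=1}^m (\vv_i - \vv_j) = \frac{1}{nm} \Bigl( m \sum_{i=1}^n \vv_i - n \sum_{j=1}^m \vv_j \Bigr) = \frac{1}{n} \sum_{i=1}^n \vv_i - \frac{1}{m} \sum_{j=1}^m \vv_j,
\end{equation}
where the second equality just collects the $j$-sum (resp. $i$-sum) of a term that does not depend on $j$ (resp. $i$). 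So the left-hand side of the lemma equals $\bigl\| \frac{1}{nm} \sum_{i,j} (\vv_i - \vv_j) \bigr\|$.

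The second and final step is to pull the norm inside the double sum by the triangle inequality (subadditivity of $\| \cdot \|$ together with absolute homogeneity for the nonnegative scalar $\frac{1}{nm}$):
\begin{equation}
   \nonumber \Bigl\| \frac{1}{nm} \sum_{i=1}^n \sum_{j=1}^m (\vv_i - \vv_j) \Bigr\| \leq \frac{1}{nm} \sum_{i=1}^n \sum_{j=1}^m \bigl\| \vv_i - \vv_j \bigr\|,
\end{equation}
which is exactly the claimed bound. No probabilistic structure is used; the statement holds deterministically for any finite collections of vectors in a normed space, so the word ``random'' in the hypothesis is immaterial.

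I do not anticipate a genuine obstacle here: the only thing to be careful about is the bookkeeping in the first identity (making sure the double sum ``double counts'' each $\vv_i$ exactly $m$ times and each $\vv_j$ exactly $n$ times, so the normalization $\frac{1}{nm}$ is correct), after which the triangle inequality closes the argument immediately. This lemma will then be applied termwise to each $\Delta_l = \bigl\| \frac{1}{n}\sum_{i=1}^n \vx_i^{(l-1)} - \frac{1}{m}\sum_{j=1}^m \vx_j^{(l-1)} \bigr\|$ in the recursive bound \eqref{eq:stability recursive}, converting each layerwise discrepancy into an average of pairwise subtree-feature distances; combining this with the fact that $\|\vx_i^{(l-1)} - \vx_j^{(l-1)}\| \le 2\gB_\vx$ (each $\vx_i^{(l-1)}$ being an iterated average of vectors of norm $\le \gB_\vx$) and summing the geometric series in $\gC_2$ yields the final closed-form bound of Theorem \ref{thm:stability}.
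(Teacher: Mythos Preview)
Your proof is correct and follows essentially the same approach as the paper: rewrite the difference of the two averages as the double average $\frac{1}{nm}\sum_{i,j}(\vv_i-\vv_j)$ and then apply the triangle inequality. The paper carries this out explicitly only for the $2\times 2$ case and then extends ``WLOG'' to general $n,m$, whereas you handle the general case directly, which is cleaner.
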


    \begin{proof}
        Let's consider two sets $A = \{\va_1, \va_2\}$ and $B = \{\vb_1, \vb_2\}$, and $\overline{\va} = (\va_1 + \va_2) / 2$, $\overline{\vb} = (\vb_1 + \vb_2) / 2$. We have:
        \begin{align}
            \nonumber \| \overline{\va} - \overline{\vb} \| & = \| (\va_1 + \va_2) / 2 - (\vb_1 + \vb_2) / 2 \|                                                 \\
            \nonumber                                       & = \| 2 \va_1 + 2 \va_2 - 2 \vb_1 + 2 \vb_2 \| / 4                                                 \\
            \nonumber                                       & = \| (\va_1 - \vb_1) + (\va_1 - \vb_2) + (\va_2 - \vb_1) + (\va_2 - \vb_2) \| / 4                 \\
            \nonumber                                       & \leq (\| \va_1 - \vb_1 \| + \| \va_1 - \vb_2 \| + \| \va_2 - \vb_1 \| + \| \va_2 - \vb_2 \| ) / 4
        \end{align}
        WLOG, this analysis can be extended to cases where the size of $A$ is $n$ and the size of $B$ is $m$.
    \end{proof}

    Based on the Lemma, we have
    \begin{align}
        \nonumber \Delta & \leq                                                                                  \gC_1 \| \frac{1}{n} \sum_{i=1}^n \vx_i^{(0)} - \frac{1}{m} \sum_{j=1}^m \vx_j^{(0)} \| + ... + \gC_1 \gC_2^{L-1}  \| \frac{1}{n} \sum_{i=1}^n \vx_i^{(L-1)} - \frac{1}{m} \sum_{j=1}^m \vx_j^{(L-1)} \| \\
        \nonumber        & \leq \frac{1}{n  m} \sum_{i=1}^n \sum_{j=1}^m \Bigl( \gC_1 \|  \vx_i^{(0)} -   \vx_j^{(0)} \| + ... + \gC_1 \gC_2^{L-1}  \| \vx_i^{(L-1)} - \vx_j^{(L-1)} \| \Bigr),
    \end{align}
    which is displayed in Theorem \ref{thm:stability}.

    We then use the Lemma to bound the $\Delta_l$. For example, the upper bound of $\Delta_1$ is:
    \begin{align}
        \nonumber \Delta_1 & = \Bigl\| \frac{1}{n} \sum_{i=1}^n \vx_i^{(0)} - \frac{1}{m} \sum_{j=1}^m \vx_j^{(0)} \Bigr\|             \\
        \nonumber          & \leq \frac{1}{nm} \sum_{i=1}^{n} \sum_{j=1}^{m} \| \vx_i^{(0)} - \vx_j^{(0)} \|                           \\
        \nonumber          & \leq \frac{1}{nm} \sum_{i=1}^{n} \sum_{j=1}^{m} (\| \vx_i^{(0)} \| + \| \vx_j^{(0)} \|) \leq 2 \gB_{\vx}.
    \end{align}

    Similarly, the upper bound of $\Delta_2$ is:
    \begin{align}
        \nonumber \Delta_2 & = \Bigl\| \frac{1}{n} \sum_{i=1}^n \vx_i^{(1)} - \frac{1}{m} \sum_{j=1}^m \vx_j^{(1)} \Bigr\|                                                                                                     \\
        \nonumber          & \leq  \frac{1}{nm} \sum_{i=1}^{n} \sum_{j=1}^{m} \Bigl\| \vx_i^{(1)} - \vx_j^{(1)} \Bigr\|                                                                                                        \\
        \nonumber          & = \frac{1}{nm} \sum_{i=1}^{n} \sum_{j=1}^{m} \Bigl\| \frac{1}{ d_i } \sum_{k_i^{(1)} \in \gN_i} \vx_{k_i^{(1)}}^{(0)} -  \frac{1}{ d_j } \sum_{k_j^{(1)} \in \gN_j} \vx_{k_j^{(1)}}^{(0)} \Bigr\| \\
        \nonumber          & \leq \frac{1}{nm} \sum_{i=1}^{n} \sum_{j=1}^{m} \frac{1}{ d_i d_j }\sum_{k_i^{(1)} \in \gN_i} \sum_{k_j^{(1)} \in \gN_j} \|\vx_{k_i^{(1)}}^{(0)} - \vx_{k_j^{(1)}}^{(0)} \| \leq 2 \gB_{\vx},
    \end{align}
    where $d_i = | \gN_i|$ and $d_j = |\gN_j|$ represent the number of children (i.e., the degree) of nodes $i$ and $j$, respectively. Thus, the upper bound of $\Delta_l$ is:
    \begin{align}
        \nonumber \Delta_l & \leq \frac{1}{nm} \sum_{i=1}^{n} \sum_{j=1}^{m} \underbrace{\frac{1}{ d_i d_j }\sum_{k_i^{(1)} \in \gN_i} \sum_{k_j^{(1)} \in \gN_j} \frac{1}{ d_{k_i^{(1)}} d_{k_j^{(1)}} }\sum_{k_i^{(2)} \in \gN_{k_i^{(1)}}} \sum_{k_j^{(2)} \in \gN_{k_j^{(1)}}}...}_{(l-1) \times} \| \vx_{k_i^{(l-1)}}^{(0)} - \vx_{k_j^{(l-1)}}^{(0)} \| \leq 2\gB_{\vx}.
    \end{align}

    Now we can have the highest upper bound of $\Delta$ as:
    \begin{align}
        \nonumber \Delta & \leq (\gC_1 + \gC_2 + ... + \gC_L) 2\gB_\vx                                                               \\
        \nonumber        & = (\gC_\sigma \gB_{\mW_1} + ... + (\gC_\sigma \gB_{\mW_2})^l \times \gC_\sigma \gB_{\mW_1}) 2\gB_\vx      \\
        \nonumber        & = 2\gB_\vx \cdot \gC_\sigma\gB_{\mW_1} \frac{(\gC_\sigma \gB_{\mW_2})^L - 1}{\gC_\sigma \gB_{\mW_2} - 1}.
    \end{align}

    Note that this upper bound is an extreme case; the bound can be tightened by adding supplementary information or making additional assumptions. Additionally, the distance between task-trees can be reduced by applying techniques like normalization, which lowers the values of the constants.

\end{proof}

\subsection{Proof of Theorem \ref{thm:transferability}}
\label{app:proof transferability}

\begin{proof}

    We begin by restating the notations used in the theorem. Let $\gP$ represent the task-tree distribution, $\gT$ the downstream task distribution, $\phi \in \Phi$ the GNN encoder, $g \in G$ the predictor head used during pretraining, and $f \in \gF$ the predictor head for the downstream task. The pretraining objective is defined as $\gL_\gP(g \circ \phi) := \E_{(\hat{T}, T) \sim \gP} \| g(\phi(\hat{T})) - \phi(T) \|^2$, where $T$ is the task-tree and $\hat{T}$ is the corresponding corrupted task-tree obtained via arbitrary corruption functions. The risk on the downstream task is defined as $\gR_\gT(f \circ \phi) := \E_{(T, y) \sim \gT} \kappa(f(\phi(T)), y)$, where $T$ is the task-tree, $y$ is the associated label, and $\kappa$ denotes the loss function. Before we begin the proof, we present an additional helper proposition.

    \begin{proposition}[Ruhe's Trace Inequality \citep{ruhe1970perturbation}]
        \label{prop:trace inequality}
        If $\mX$ and $\mY$ are positive semi-definite Hermitian matrices with eigenvalues, $x_1 \geq x_2 \geq ... \geq x_n \geq 0$ and $y_1 \geq y_2 \geq ... \geq y_n \geq 0$, then
        \begin{equation}
            \nonumber \sum_{i=1}^n x_i y_{n-i+1} \leq \operatorname{tr}(\mX\mY) \leq \sum_{i=1}^{n} x_i y_i.
        \end{equation}
    \end{proposition}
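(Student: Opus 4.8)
The plan is to prove Proposition \ref{prop:trace inequality} (Ruhe's inequality) by reducing both the upper and lower bounds to a single statement about doubly stochastic matrices. Since $\mX$ and $\mY$ are Hermitian positive semi-definite, I would first diagonalize them as $\mX = \mU \mLambda_\mX \mU^*$ and $\mY = \mV \mLambda_\mY \mV^*$, with $\mU,\mV$ unitary and $\mLambda_\mX = \operatorname{diag}(x_1,\dots,x_n)$, $\mLambda_\mY = \operatorname{diag}(y_1,\dots,y_n)$ listing the eigenvalues in the prescribed decreasing order. Then, by cyclicity of the trace,
\[
\operatorname{tr}(\mX\mY) = \operatorname{tr}\!\bigl(\mLambda_\mX \mW \mLambda_\mY \mW^*\bigr), \qquad \mW := \mU^*\mV,
\]
where $\mW$ is unitary.

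Next I would expand the right-hand side entrywise: setting $S_{ij} := |\mW_{ij}|^2$, a direct computation yields $\operatorname{tr}(\mLambda_\mX \mW \mLambda_\mY \mW^*) = \sum_{i,j} x_i S_{ij} y_j$. Because $\mW$ is unitary, each of its rows and each of its columns has unit Euclidean norm, so $S=(S_{ij})$ is doubly stochastic: $\sum_j S_{ij} = 1$ for every $i$ and $\sum_i S_{ij} = 1$ for every $j$. Thus it suffices to show that, for any doubly stochastic matrix $S$,
\[
\sum_{i=1}^n x_i\, y_{n-i+1} \;\le\; \sum_{i,j} x_i S_{ij} y_j \;\le\; \sum_{i=1}^n x_i y_i .
\]

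To close this, I would invoke the Birkhoff--von Neumann theorem: every doubly stochastic matrix is a convex combination of permutation matrices, $S = \sum_\pi \lambda_\pi P_\pi$ with $\lambda_\pi \ge 0$ and $\sum_\pi \lambda_\pi = 1$, so $\sum_{i,j} x_i S_{ij} y_j = \sum_\pi \lambda_\pi \sum_i x_i y_{\pi(i)}$ is a convex average of the sums $\sum_i x_i y_{\pi(i)}$. By the rearrangement inequality, each such sum lies between $\sum_i x_i y_{n-i+1}$ (opposite ordering) and $\sum_i x_i y_i$ (same ordering), and averaging preserves both bounds; in particular the lower bound requires no separate argument, being just the opposite-ordering case. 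An alternative that avoids Birkhoff is a two-step Abel-summation argument applied directly to $S$, using the partial-sum estimates $\sum_{i\le k}\sum_j S_{ij} \le k$ together with $x_1 \ge \cdots \ge x_n \ge 0$ and $y_1 \ge \cdots \ge y_n \ge 0$; I expect the Birkhoff route to be cleaner to present. The main obstacle is essentially bookkeeping rather than substance: one must keep the eigenvalue orderings consistent so that the identity permutation is genuinely the "same order" extremizer, and note that repeated eigenvalues only make the inequalities non-strict and so cause no difficulty.
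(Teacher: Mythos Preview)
Your argument is correct and is in fact the standard proof of Ruhe's inequality: diagonalize, reduce to $\sum_{i,j} x_i\,|W_{ij}|^2\,y_j$ with $W$ unitary, observe that $S_{ij}=|W_{ij}|^2$ is doubly stochastic, and then apply Birkhoff--von~Neumann together with the rearrangement inequality. Each step is valid as written.

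As for comparison with the paper: there is nothing to compare. The paper does not prove Proposition~\ref{prop:trace inequality} at all; it merely states the result, cites \citep{ruhe1970perturbation}, and then invokes it inside the proof of Theorem~\ref{thm:transferability} to pass from $\operatorname{tr}(\Lambda\,\vtheta'\vtheta'^\top)$ and $\operatorname{tr}(\Lambda\,\sum_r \vw_r'\vw_r'^\top)$ to eigenvalue products. So your write-up supplies strictly more detail than the paper does on this point.
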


    Note that we are going to prove
    \begin{equation}
        \nonumber \min_{f\in\gF} \gR_\gT(f \circ \phi) - \min_{f'\in\gF} \gR_\gT(f' \circ \phi') \leq \gC_\delta \Bigl(\min_{g\in G} \gL_\gP(g \circ \phi) - \min_{g' \in G} \gL_\gP(g' \circ \phi')\Bigr)^{\delta},
    \end{equation}
    where $\gC_\delta \approx O(1)$ and $\delta = \frac{1}{2}$. The proof involves deriving the upper bound for the term $\min_{f\in\gF} \gR_\gT(f \circ \phi) - \min_{f'\in\gF} \gR_\gT(f' \circ \phi')$, followed by the lower bound for $\min_{g\in G} \gL_\gP(g \circ \phi) - \min_{g' \in G} \gL_\gP(g' \circ \phi')$. To simplify the proof, we assume the downstream task is a binary classification task, though this approach can be extended to multi-classification scenarios.

    We analyze the upper bound of $\min_{f\in\gF} \gR_\gT(f \circ \phi) - \min_{f'\in\gF} \gR_\gT(f' \circ \phi')$ as follows:
    \begin{align}
        \nonumber \min_{f\in\gF} \gR_\gT(f \circ \phi) - \min_{f'\in\gF} \gR_\gT(f' \circ \phi') = \min_{f\in\gF} \E_{\gT} \kappa (f(\phi(T))) - \min_{f'\in\gF} \E_{\gT} \kappa (f'(\phi'(T))),
    \end{align}
    where $(T, y) \sim \gT$ and $\kappa(f(\phi(T)))$ is shorthand for $\kappa(f(\phi(T)), y)$ for notational convenience. Since we define $f \in \gF$ as a linear predictor for binary classification, we can rewrite this equation in the following form:
    \begin{align}
        \nonumber      & \min_{\| \vtheta \| \leq \gB_{\vtheta}} \E_{\gT} \kappa (\vtheta^\top \phi(T))  - \min_{\| \vtheta \| \leq \gB_{\vtheta}} \E_{\gT} \kappa (\vtheta'^\top \phi'(T))                                       \\
        \nonumber\leq  & \min_{\| \vtheta \| \leq \gB_\vtheta} \E_\gT \Bigl\vert \vtheta^\top \phi(T) - \vtheta'^\top \phi'(T) \Bigr\vert                                                                                         \\
        \nonumber\leq  & \min_{\| \vtheta \| \leq \gB_\vtheta} \sqrt{\E_\gT \Bigl( \vtheta^\top \phi(T) - \vtheta'^\top \phi'(T)\Bigr)^2}                                                                                         \\
        \nonumber =    & \min_{\| \vtheta \| \leq \gB_\vtheta} \sqrt{\E_\gT \Bigl( \vtheta^\top \phi(T)\phi(T)^\top \vtheta + \vtheta'^\top \phi'(T)\phi'(T)^\top \vtheta' - 2 \vtheta^\top \phi(T)\phi'(T)^\top \vtheta' \Bigr)} \\
        \nonumber =    & \min_{\| \vtheta \| \leq \gB_\vtheta} \sqrt{\vtheta^\top \E[\phi(T)\phi(T)^\top] \vtheta + \vtheta'^\top \E[\phi'(T)\phi'(T)^\top] \vtheta' - 2 \vtheta^\top \E[\phi(T)\phi'(T)^\top] \vtheta'}          \\
        \nonumber \leq & \sqrt{\vtheta'^\top \Lambda \vtheta'}, \Lambda = \E[\phi(T)\phi(T)^\top] - \E[\phi(T)\phi'(T)^\top] \Bigl(\E[\phi'(T)\phi'(T)^\top]\Bigr)^\dag \E[\phi(T)\phi'(T)^\top]
    \end{align}
    Note that in the previous formula, we define $\vtheta = (\E[\phi(T)\phi(T)^\top])^\dag (\E[\phi(T)\phi'(T)^\top]) \vtheta'$. Under the unconstrained setting, the minimum of $\vtheta^\top \E[\phi(T)\phi(T)^\top] \vtheta + \vtheta'^\top \E[\phi'(T)\phi'(T)^\top] \vtheta' - 2 \vtheta^\top \E[\phi(T)\phi'(T)^\top] \vtheta'$ reduces to $\vtheta'^\top \Lambda \vtheta'$ \citep{deng2024generalization}. We can select a sufficiently large $\gB_\vtheta$ to ensure an adequately large function space. Additionally, we define $\vtheta'$ as the optimal head for the encoder $\phi'$. The expression $\sqrt{\vtheta'^\top \Lambda \vtheta'}$ is equivalent to $\sqrt{\operatorname{tr}(\Lambda \vtheta'^\top \vtheta')}$, which can be further simplified using Proposition \ref{prop:trace inequality}.
    \begin{equation}
        \nonumber \sqrt{\operatorname{tr}(\Lambda \vtheta'^\top \vtheta')} \leq \sqrt{\sum_{i=1}^{d} \sigma_i(\Lambda) \sigma_i(\vtheta' \vtheta'^\top)} \leq \sqrt{d \sigma_{\max} (\Lambda) \sigma_{\max} (\vtheta' \vtheta'^\top)},
    \end{equation}
    where $\sigma_i$ is the $i$-th eigenvalue for the matrix and $\sigma_{\max}$ denotes the maximum eigenvalue.

    Then, we are going to demonstrate the lower bound of $\min_{g\in G} \gL_\gP(g \circ \phi) - \min_{g' \in G} \gL_\gP(g' \circ \phi')$.
    \begin{align}
        \nonumber \min_{g\in G} \gL_\gP(g \circ \phi) - \min_{g' \in G} \gL_\gP(g' \circ \phi') = \min_{g\in G} \E_{\gP} \| g(\phi(\hat{T})) - \phi(T) \|^2 - \min_{g' \in G} \E_{\gP} \| g'(\phi'(\hat{T})) - \phi'(T) \|^2,
    \end{align}
    where $(\hat{T}, T) \sim \gP$. Here we also consider the predictor $g$ as a linear function, so that we have the following form:
    \begin{align}
        \nonumber      & \min_{\| \mW \| \in \gB_\mW} \E_{\gP} \| \mW \phi(\hat{T}) - \phi(T) \|^2 - \min_{\| \mW' \| \in \gB_\mW} \E_{\gP} \| \mW' \phi'(\hat{T}) - \phi'(T) \|^2                                                                    \\
        \nonumber =    & \min_{\| \mW \| \in \gB_\mW} \E_{\gP} \| \mW \phi(\hat{T})  - \mW' \phi'(\hat{T}) \|^2 + \gC_\gP                                                                                                                             \\
        \nonumber \geq & \sum_{r=1}^{d} \min_{\vw_r} \E_\gP \| \vw_r^\top \phi(\hat{T}) - \vw_r'^\top \phi'(\hat{T}) \|^2                                                                                                                             \\
        \nonumber \geq & \sum_{r=1}^{d} \min_{\vw_r} \E_\gP (\vw_r^\top  \phi(\hat{T})  \phi(\hat{T})^\top \vw_r + \vw_r'^\top  \phi'(\hat{T})  \phi'(\hat{T})^\top \vw_r' - 2 \vw_r^\top \phi(\hat{T}) \phi'(\hat{T})^\top \vw_r' )                  \\
        \nonumber \geq & \sum_{r=1}^{d} \vw_r'^\top \Lambda \vw_r', \Lambda = \E[\phi(\hat{T})\phi(\hat{T})^\top] - \E[\phi(\hat{T})\phi'(\hat{T})^\top] \Bigl(\E[\phi'(\hat{T})\phi'(\hat{T})^\top]\Bigr)^\dag \E[\phi(\hat{T})\phi'(\hat{T})^\top].
    \end{align}
    where $\gC_\gP = \E_{\gP} \| \phi'(T) - \phi(T) \|^2 $ is a constant, and $\mW'$ is defined as the optimal transformation matrix for the encoder $\phi'$. Based on this formula, we can further simplify the bound on
    \begin{align}
        \nonumber \sum_{r=1}^{d} \vw_r'^\top \Lambda \vw_r' = \operatorname{tr}(\Lambda \sum_{r=1}^{d} \vw_r' \vw_r'^\top) \geq \sigma_{\max}(\Lambda)\sigma_{\min}(\sum_{r=1}^{d} \vw_r' \vw_r'^\top).
    \end{align}

    Now that we have the upper bound for $\min_{f\in\gF} \gR_\gT(f \circ \phi) - \min_{f'\in\gF} \gR_\gT(f' \circ \phi')$ and the lower bound for $\min_{g\in G} \gL_\gP(g \circ \phi) - \min_{g' \in G} \gL_\gP(g' \circ \phi')$, we can establish the relationship between them as follows:
    \begin{align}
        \nonumber \min_{f\in\gF} \gR_\gT(f \circ \phi) & - \min_{f'\in\gF} \gR_\gT(f' \circ \phi')                                                                                                                                                                            \\
        \nonumber                                      & \leq O\biggl(\frac{\sqrt{d \sigma_{\max} (\vtheta' \vtheta'^\top)}}{\sqrt{\sigma_{\min}(\sum_{r=1}^{d} \vw_r' \vw_r'^\top)}}\biggr) (\min_{g\in G} \gL_\gP(g \circ \phi) - \min_{g' \in G} \gL_\gP(g' \circ \phi')).
    \end{align}
    Based on our definition, $\vtheta'$ and $\mW'$ are optimal heads for the encoder $\phi'$, the complexity term $O\biggl(\frac{\sqrt{d \sigma_{\max} (\vtheta' \vtheta'^\top)}}{\sqrt{\sigma_{\min}(\sum_{r=1}^{d} \vw_r' \vw_r'^\top)}}\biggr)$ would be a constant which in the order of $O(1)$.

\end{proof}

\subsection{Proof of Theorem \ref{thm:generalization}}
\label{app:proof generalization}

\begin{proof}
    We begin by introducing some essential notations. Let $\gP$ represent the pretraining task-tree distribution and $\gT$ the downstream task-tree distribution. The pair $(T, y) \sim \gT$ denotes a task-tree and its corresponding label, where we define the labeling function as $\psi$, meaning $y = \psi(T)$. The GNN encoder is $\phi \in \Phi$, the pretraining predictor is $g \in G$, and the downstream predictor head is $f \in \gF$. As in the previous proof, we consider a binary classification task for simplicity, though this can be extended to multi-class settings. The downstream risk is given by $\gR_\gT(f \circ \phi) := \E_{(T, \psi(T)) \sim \gT} \kappa (f(\phi(T)), \psi(T))$, where $\kappa$ is a loss function.

    Then, we define the excess risk on the downstream distribution $\gT$ as
    \begin{align}
        \nonumber \gE(f, \phi) = & \gR_\gT (f \circ \phi) - \min_{f' \in \gF, \phi' \in \Phi} \gR_\gT (f' \circ \phi')                                                                                                                                                          \\
        \nonumber =              & \Bigl( \gR_\gT (f \circ \phi) - \min_{f' \in \gF, \phi' \in \Phi} \gR_\gT (f' \circ \phi') \Bigr)  + \Bigl( \min_{f' \in \gF} \gR_\gT (f' \circ \phi) - \min_{f' \in \gF} \gR_\gT (f' \circ \phi) \Bigr)                                     \\
        \nonumber +              & \Bigl( \min_{f' \in \gF} \gR_\gP (f' \circ \phi) -  \min_{f' \in \gF} \gR_\gP (f' \circ \phi) \Bigr) + \Bigl( \min_{f' \in \gF} \gR_\gP (f' \circ \phi^*) - \min_{f' \in \gF} \gR_\gP (f' \circ \phi^*) \Bigr)                               \\
        \nonumber =              & \underbrace{\gR_\gT (f \circ \phi) - \min_{f' \in \gF} \gR_\gT (f' \circ \phi)}_{(a)}  +  \underbrace{\min_{f' \in \gF} \gR_\gT (f' \circ \phi) - \min_{f' \in \gF} \gR_\gP (f' \circ \phi)}_{(b)}                                           \\
        \nonumber +              & \underbrace{\min_{f' \in \gF} \gR_\gP (f' \circ \phi) - \min_{f' \in \gF} \gR_\gP (f' \circ \phi^*)}_{(c)}  +  \underbrace{\min_{f' \in \gF} \gR_\gP (f' \circ \phi^*) - \min_{f' \in \gF, \phi' \in \Phi} \gR_\gT (f' \circ \phi')}_{(d)} ,
    \end{align}
    where $\phi$ and $f$ represent encoder obtained during pretraining and the prediction head learned in downstream task, respectively, while $\phi'$ and $f'$ are the optimal encoder and predictor head on the downstream distribution. $\phi^*$ is the optimal encoder obtained during pretraining, defined as $\phi^* = \argmin_{\phi \in \Phi} \min_{g \in G} \gL_\gP (g \circ \phi)$. We will analyze these four terms separately.

    To bound the term (a), we need to introduce the empirical Rademacher complexity (Definition 1, \citep{deng2024generalization}) as
    \begin{equation}
        \nonumber \hat{\mathfrak{R}}_\gT := \E_{\varepsilon \in \{\pm 1\}^n} \Bigl[\sup_{f \in \gF} \frac{1}{n}\sum_{i=1}^{n} \varepsilon_i \kappa (f \circ \phi(T), \psi(T)) \Bigr],
    \end{equation}
    where $\varepsilon_i$ is i.i.d., and $\mathbb{P}(\varepsilon = 1) = \mathbb{P}(\varepsilon = -1) = \frac{1}{2}$.

    Using this definition, we can bound the term (a):
    \begin{align}
        \nonumber \text{Term (a)} & = \gR_\gT (f \circ \phi) - \min_{f' \in \gF} \gR_\gT (f' \circ \phi)                                                                                                                                                                                               \\
        \nonumber                 & = \underbrace{\gR_\gT (f \circ \phi) - \hat{\gR}_\gT (f \circ \phi)}_{(a.1)} + \underbrace{\hat{\gR}_\gT(f^* \circ \phi) - \min_{f' \in \gF} \gR_\gT (f' \circ \phi)}_{(a.2)} + \underbrace{\hat{\gR}_\gT (f \circ \phi) - \hat{\gR}_\gT(f^* \circ \phi)}_{(a.3)},
    \end{align}
    where $f^*$ is the optimal predictor head over the distribution $\gT$, defined as $f^* = \argmin_{f\in \gF} \gR_\gT (f \circ \phi)$. The term (a.3) represents the empirical risk gap between the learned head $f$ and the best head $f^*$, which implies that the term is a constant greater than or equal to 0. Term (a.1) and (a.2) describe the gap between the risk and the empirical risk. According to uniform convergence, these two terms can be expressed in terms of empirical Rademacher complexity. Thus, term (a) can be bounded as:
    \begin{align}
        \nonumber \text{Term (a)} \leq 4 \hat{\mathfrak{R}}_\gT + 4 \gB_\kappa \sqrt{\frac{\log(1/v)}{n}},
    \end{align}
    where $\gB_\kappa$ is the bound of the Lipschitz of loss function $\kappa$. We then further simplify the empirical Rademacher complexity for a more reasonable expression.
    \begin{align}
        \nonumber \text{Term (a)} \leq & 4 \E_{\varepsilon \in \{\pm 1\}^n} \Bigl[\sup_{f \in \gF} \frac{1}{n}\sum_{i=1}^{n} \varepsilon_i \kappa (f \circ \phi(T_i), \psi(T_i)) \Bigr] + 4 \gB_\kappa \sqrt{\frac{\log(1/v)}{n}} \\
        \nonumber \leq                 & 4 \gC_\kappa \E_{\varepsilon \in \{\pm 1\}^n} \Bigl[\sup_{f \in \gF} \frac{1}{n}\sum_{i=1}^{n} \varepsilon_i f \circ \phi(T_i) \Bigr] + 4 \gB_\kappa \sqrt{\frac{\log(1/v)}{n}}          \\
        \nonumber \leq                 & 4 \gC_\kappa \gC_f \E_{\varepsilon \in \{\pm 1\}^n} \Bigl\| \frac{1}{n}\sum_{i=1}^{n} \varepsilon_i \phi(T_i) \Bigr\| + 4 \gB_\kappa \sqrt{\frac{\log(1/v)}{n}}                          \\
        \nonumber \leq                 & \frac{4}{n} \gC_\kappa \gC_f \E_{\varepsilon \in \{\pm 1\}^n} \sqrt{\Bigl\| \sum_{i=1}^{n} \varepsilon_i \phi(T_i) \Bigr\|^2} + 4 \gB_\kappa \sqrt{\frac{\log(1/v)}{n}}.
    \end{align}
    As the $\varepsilon_i$ are i.i.d. with zero mean as our definition, we cancel the term, thus
    \begin{align}
        \nonumber \text{Term (a)} \leq & \frac{4}{n} \gC_\kappa \gC_f \sqrt{ \sum_{i=1}^{n} \Bigl\| \phi(T_i) \Bigr\|^2} + 4 \gB_\kappa \sqrt{\frac{\log(1/v)}{n}}.
    \end{align}

    Then, we bound the term (b). To do this, we introduce a notation $f_\gP^* = \argmin_{f' \in \gF} \gR_\gP(f' \circ h)$.
    \begin{align}
        \nonumber \text{Term (b)} =    & \min_{f' \in \gF} \gR_\gT (f' \circ \phi) - \min_{f' \in \gF} \gR_\gP (f' \circ \phi)                                                   \\
        \nonumber                 \leq & \gR_\gT (f^*_\gP \circ \phi) - \gR_\gP (f^*_\gP \circ \phi)                                                                             \\
        \nonumber                 =    & \E_{T \sim \gT} \Bigl[\kappa(f_\gP^* \circ \phi(T), \psi(T))\Bigr] - \E_{T \sim \gP} \Bigl[\kappa(f_\gP^* \circ \phi(T), \psi(T))\Bigr] \\
        \nonumber                 =    & \E_{x \sim \gT_\phi} \Bigl[\kappa(f_\gP^*(x), \psi(T))\Bigr] - \E_{x \sim \gP_\phi} \Bigl[\kappa(f_\gP^*(x), \psi(T))\Bigr]             \\
        \nonumber                 \leq & \gB_\kappa \sum_{x \in \gX_\phi} \Bigl\| \gT_\phi(x) - \gP_\phi(x) \Bigr\|,
    \end{align}
    where $\gB_\kappa$ represents the upper bound of the Lipschitz constant of $\kappa$, and $\gX_\phi$ denotes the distribution of task-tree embeddings produced by the encoder $\phi$. This term measures the distributional distance of task-trees between the pretraining and downstream distributions.

    Following, we bound the term (c), as
    \begin{align}
        \nonumber \text{Term (c)} = & \min_{f' \in \gF} \gR_\gP (f' \circ \phi) - \min_{f' \in \gF} \gR_\gP (f' \circ \phi^*)                        \\
        \nonumber \leq              & \gC_\delta \Bigl(\min_{g' \in G} \gL_\gP (g' \circ h) - \min_{g' \in G} \gL_\gP (g' \circ \phi^*)\Bigr)^\delta \\
        \nonumber \leq              & \gC_\delta \Bigl(\gL_\gP (g \circ h) - \min_{g' \in G, \phi' \in \Phi} \gL_\gP (g' \circ \phi')\Bigr)^\delta.
    \end{align}
    The term $\gL_\gP (g \circ h) - \min_{g' \in G, \phi' \in \Phi} \gL_\gP (g' \circ \phi')$ describes the excess risk on pretraining task, which can be replaced by a notation $\gE_\gP(g, \phi)$.

    Lastly, we bound the term (d),
    \begin{align}
        \nonumber \text{Term (d)} = & \min_{f' \in \gF} \gR_\gP (f' \circ \phi^*) - \min_{f' \in \gF, \phi' \in \Phi} \gR_\gT (f' \circ \phi')                                                                                         \\
        \nonumber =                 & \min_{f' \in \gF} \gR_\gP (f' \circ \phi^*) - \min_{f' \in \gF} \gR_\gT(f \circ \phi^*) + \min_{f' \in \gF} \gR_\gT(f \circ \phi^*) - \min_{f' \in \gF, \phi' \in \Phi} \gR_\gT (f' \circ \phi') \\
        \nonumber \leq              & \gB_\kappa \sum_{x \in \gX_\phi} \Bigl\| \gT_\phi(x) - \gP_\phi(x) \Bigr\| + \min_{f' \in \gF} \gR_\gT(f' \circ \phi^*) - \min_{f' \in \gF, \phi' \in \Phi} \gR_\gT (f' \circ \phi').
    \end{align}

    By combining the four terms, we obtain the generalization bound for a model pretrained on task-tree distribution $\gP$ and fine-tuned on task-tree distribution $\gT$:
    \begin{align}
        \nonumber \gR_\gT (f \circ \phi) \leq \gC_\delta \Bigl(\gE_\gP(g, \phi)\Bigr)^\delta + & \frac{4 \gC_\kappa \gC_f }{n} \sqrt{ \sum_{i=1}^{n} \Bigl\|\phi(T_i) \Bigr\|^2} + \min_{f' \in \gF} \gR_\gT(f' \circ \phi^*) \\
        \nonumber +                                                                            & 2 \gB_\kappa \Bigl( \sum_{x \in \gX_\phi} \Bigl\| \gT_\phi(x) - \gP_\phi(x) \Bigr\|  + 2 \sqrt{\frac{\log(1/v)}{n}} \Bigr).
    \end{align}
    We can set $\gC_1 = \gC_\kappa \gC_f$ and $\gC_2 = \gB_\kappa$ as two downstream task-related constants.
\end{proof}

\section{Experimental Settings}
\label{app:exp setting}

\subsection{Datasets}
\label{app:dataset}

\noindent\textbf{Dataset Statistics.} We utilize 32 datasets spanning five domains in this paper. Since these datasets are text-attributed graphs, we use Sentence-BERT \citep{reimers2019sentence} to align the node textual features into 768-dimensional vectors. The dataset statistics are presented in Table \ref{tab:data stats}. For the temporal graphs, we split each graph into 10 snapshots, with the statistics shown in Figure \ref{fig:temporal stats}. We classify \texttt{Children} and \texttt{Ratings} as heterophily graphs due to their relatively low homophily ratios \citep{chen2024text}.

\noindent\textbf{Splitter.} For each dataset, we use the same splitting strategy as provided in the original paper \citep{chen2024text,galkin2023towards,feng2024taglas,zhang2024dtgb}. If multiple splits are provided, we evaluate model performance on each split using different random seeds. For datasets with a single split, we repeat the experiments five times with different random seeds. For \texttt{GDELT} and \texttt{ICEWS1819}, which are originally temporal knowledge graphs, we apply an 80\%/10\%/10\% split based on timestamps for train/validation/test settings. For the temporal graphs \texttt{Enron} and \texttt{Googlemap CT} used for edge classification, we split each snapshot by timestamps, using the first 70\% for training, the next 15\% for validation, and the remaining 15\% for testing.

\begin{table}[!t]
    \centering
    \caption{Statistics of 32 graphs used in the paper.}
    \label{tab:data stats}
    \resizebox{\linewidth}{!}{
        \begin{tabular}{lllrrrrl}
            \toprule
            \textbf{Dataset}           & \textbf{Domain} & \textbf{Task} & \textbf{\# Nodes} & \textbf{\# Edges} & \textbf{\# Classes} & \textbf{\# Task-Trees} & \textbf{Source}           \\ \midrule
            \texttt{Products      }    & E-commerce      & Node, Link    & 316,513           & 19,337,745        & 39                  & 316,513                & \citep{chen2024text}      \\
            \texttt{History       }    & E-commerce      & Node, Link    & 41,551            & 503,180           & 12                  & 41,551                 & \citep{chen2024text}      \\
            \texttt{Children         } & E-commerce      & Node, Link    & 76,875            & 2,325,044         & 24                  & 76,875                 & \citep{chen2024text}      \\
            \texttt{Computer      }    & E-commerce      & Node, Link    & 87,229            & 1,256,548         & 10                  & 87,229                 & \citep{chen2024text}      \\
            \texttt{Photo         }    & E-commerce      & Node, Link    & 48,362            & 873,793           & 12                  & 48,362                 & \citep{chen2024text}      \\
            \texttt{Sportsfit     }    & E-commerce      & Node, Link    & 173,055           & 3,020,134         & 13                  & 173,055                & \citep{chen2024text}      \\
            \texttt{Ratings }          & E-commerce      & Node, Link    & 24,492            & 186,100           & 5                   & 24,492                 & \citep{chen2024text}      \\
            \texttt{Arxiv         }    & Academia        & Node, Link    & 169,343           & 2,315,598         & 40                  & 169,343                & \citep{chen2024text}      \\
            \texttt{Cora          }    & Academia        & Node, Link    & 2,708             & 10,556            & 7                   & 2,708                  & \citep{chen2024text}      \\
            \texttt{Citeseer      }    & Academia        & Node, Link    & 3,186             & 8,450             & 6                   & 3,186                  & \citep{chen2024text}      \\
            \texttt{Pubmed        }    & Academia        & Node, Link    & 19,717            & 88,648            & 3                   & 19,717                 & \citep{chen2024text}      \\
            \texttt{Arxiv 23      }    & Academia        & Node, Link    & 46,198            & 77,726            & 40                  & 46,198                 & \citep{chen2024text}      \\
            \texttt{DBLP          }    & Academia        & Node, Link    & 14,376            & 431,326           & 4                   & 14,376                 & \citep{chen2024text}      \\
            \texttt{WN18RR        }    & knowledge Base  & Link          & 40,943            & 93,003            & 11                  & 93,003                 & \citep{galkin2023towards} \\
            \texttt{FB15K237      }    & knowledge Base  & Link          & 14,541            & 310,116           & 237                 & 310,116                & \citep{galkin2023towards} \\
            \texttt{Codex Small   }    & knowledge Base  & Link          & 2,034             & 36,543            & 42                  & 36,543                 & \citep{galkin2023towards} \\
            \texttt{Codex Median  }    & knowledge Base  & Link          & 17,050            & 206,205           & 51                  & 206,205                & \citep{galkin2023towards} \\
            \texttt{Codex Large   }    & knowledge Base  & Link          & 77,951            & 612,437           & 69                  & 612,437                & \citep{galkin2023towards} \\
            \texttt{NELL995       }    & knowledge Base  & Link          & 74,536            & 153,039           & 200                 & 153,039                & \citep{galkin2023towards} \\
            \texttt{GDELT         }    & knowledge Base  & Link          & 5,849             & 943,956           & 237                 & 943,956                & \citep{zhang2024dtgb}     \\
            \texttt{ICEWS1819     }    & knowledge Base  & Link          & 31,796            & 1,100,071         & 266                 & 1,100,071              & \citep{zhang2024dtgb}     \\
            \texttt{Chemblpre     }    & Molecule        & Graph         & 8,845,648         & 19,123,034        & 1,295               & 341,952                & \citep{feng2024taglas}    \\
            \texttt{PCBA          }    & Molecule        & Graph         & 11,349,235        & 24,566,048        & 128                 & 437,092                & \citep{feng2024taglas}    \\
            \texttt{HIV           }    & Molecule        & Graph         & 1,049,163         & 2,259,376         & 1                   & 41,127                 & \citep{feng2024taglas}    \\
            \texttt{BBBP          }    & Molecule        & Graph         & 49,068            & 105,842           & 1                   & 2,039                  & \citep{feng2024taglas}    \\
            \texttt{BACE          }    & Molecule        & Graph         & 51,577            & 111,536           & 1                   & 1,513                  & \citep{feng2024taglas}    \\
            \texttt{TOXCAST       }    & Molecule        & Graph         & 161,002           & 330,180           & 588                 & 8,575                  & \citep{feng2024taglas}    \\
            \texttt{CYP450        }    & Molecule        & Graph         & 414,367           & 895,886           & 5                   & 16,896                 & \citep{feng2024taglas}    \\
            \texttt{TOX21         }    & Molecule        & Graph         & 145,459           & 302,190           & 12                  & 7,831                  & \citep{feng2024taglas}    \\
            \texttt{MUV           }    & Molecule        & Graph         & 2,255,846         & 4,892,252         & 17                  & 93,087                 & \citep{feng2024taglas}    \\
            \texttt{Enron         }    & Temporal        & Link          & 42,712            & 797,907           & 10                  & 797,907                & \citep{zhang2024dtgb}     \\
            \texttt{Googlemap CT  }    & Temporal        & Link          & 111,169           & 1,380,623         & 5                   & 1,380,623              & \citep{zhang2024dtgb}     \\

            \bottomrule
        \end{tabular}
    }
\end{table}

\begin{figure}[!t]
    \centering

    \subfloat[Enron]{\includegraphics[width=\linewidth]{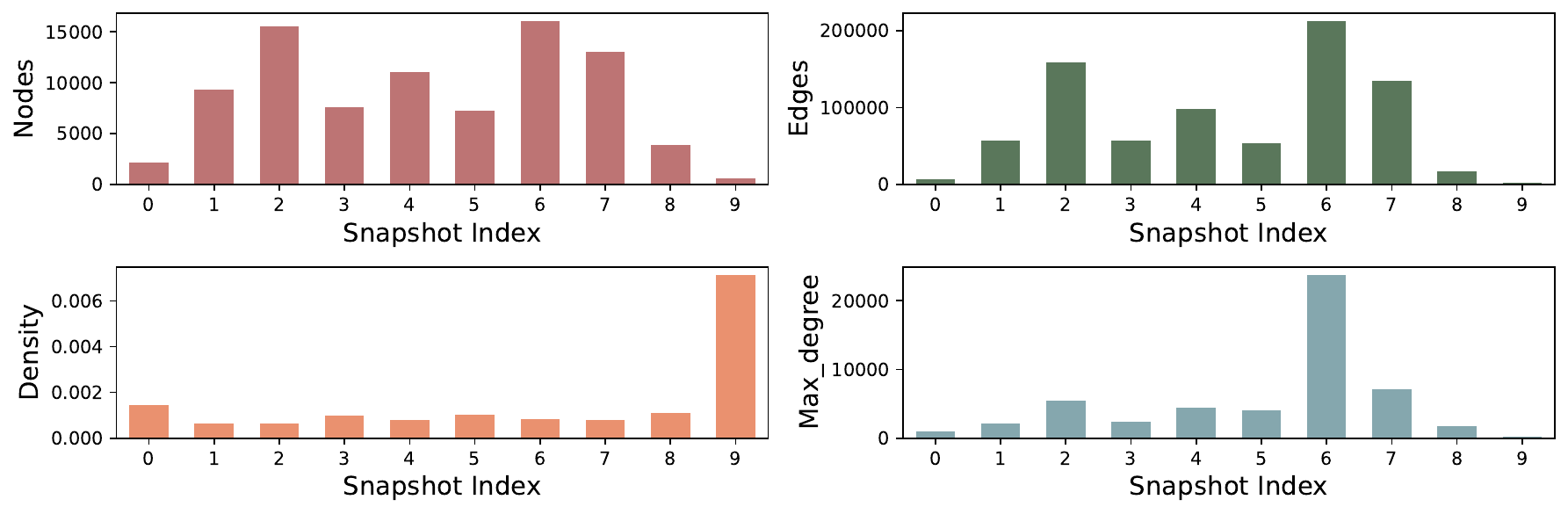}} \\
    \subfloat[Googlemap CT]{\includegraphics[width=\linewidth]{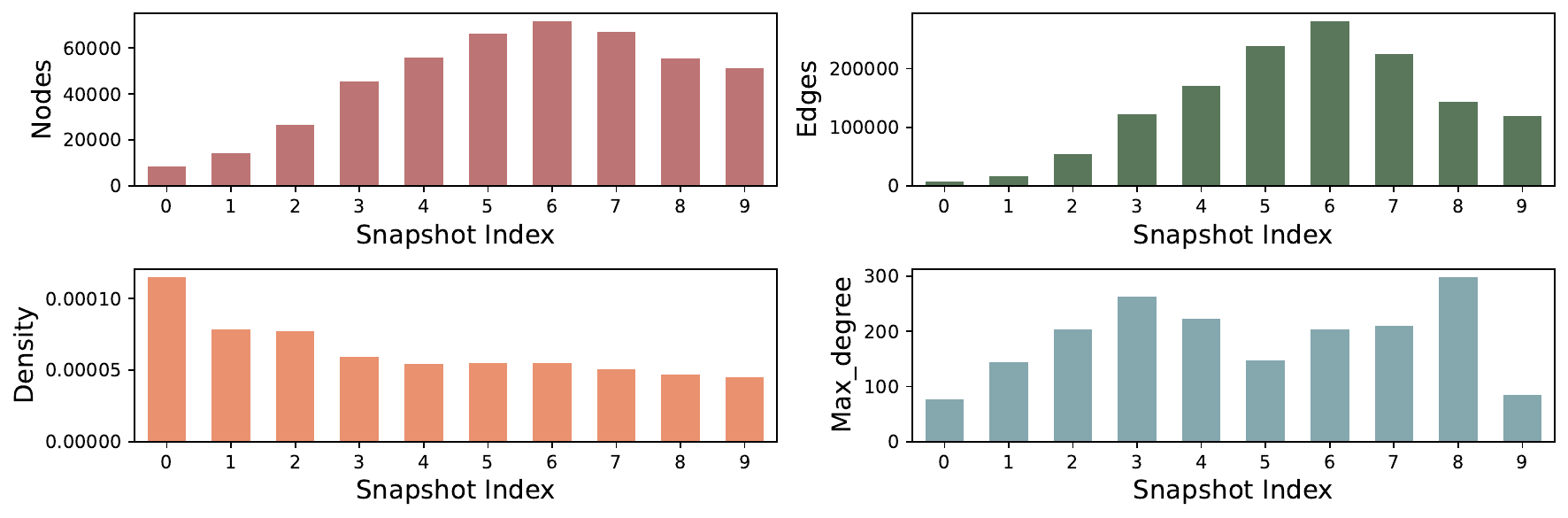}}

    \caption{The statistics of temporal graphs.}
    \label{fig:temporal stats}
\end{figure}

\subsection{Baselines}
\label{app:baselines}

\subsubsection*{Baselines Applicable for All Graphs}

\noindent\textbf{GCN \citep{kipf2017semisupervised}.} A supervised message-passing GNN trained from scratch for each task. As a result, it cannot be applied to in-context learning or zero-shot learning.

\noindent\textbf{GAT \citep{velickovic2018graph}.} A supervised GNN that uses an attention mechanism to learn the importance of received messages.

\noindent\textbf{GIN \citep{xu2018how}.} A supervised GNN specifically designed for graph-level tasks.

\noindent\textbf{BGRL \citep{thakoor2022largescale}.} A popular self-supervised learning framework for graphs that employs a contrastive learning loss without negative samples.

\noindent\textbf{GraphMAE \citep{hou2022graphmae}.} A graph learning framework pretrained in a masked auto-encoder fashion.

\noindent\textbf{OFA \citep{liu2024one}.} A cross-task and cross-domain graph foundation model that treats subgraphs as the basic learning instances. It introduces a graph prompt learning framework to enable in-context and zero-shot learning.

\subsubsection*{Expert Models Designed for Specific Domains}

\noindent\textbf{\textsc{Ultra} \citep{galkin2023towards}.} A foundation model designed specifically for knowledge graphs, which we treat as the domain expert for KGs.

\noindent\textbf{KVPLM \citep{zeng2022deep}.} A language model based on SMILES representations of molecules, serving as an expert model for molecular graphs.

\noindent\textbf{MoMu \citep{su2022molecular}.} Another expert model for molecules that leverages GNNs to improve molecular representations.

\noindent\textbf{Galactica \citep{taylor2022galactica}.} A foundation model for molecular graphs that utilizes multi-task learning with instructions.

\noindent\textbf{GIMLET \citep{zhao2023gimlet}.} A foundation model for molecules that incorporates advanced models and instruction-based learning.

\subsection{Evaluation Protocol}

\noindent\textbf{Pretraining Datasets.} We select six datasets for pretraining, including \texttt{Arxiv}, \texttt{Products}, \texttt{WN18RR}, \texttt{FB15K237}, \texttt{Chemblpre}, and \texttt{PCBA}, due to their diversity in domains and tasks. For self-supervised learning methods, these six datasets are used for pretraining unless otherwise specified.

\noindent\textbf{SFT Datasets.} For specialization via SFT in each domain, we use \texttt{Arxiv} for academic networks, \texttt{Products} for e-commerce networks, \texttt{FB15K237} for knowledge graphs, and \texttt{PCBA} for molecular networks. For temporal graphs, which are e-commerce-based, we also use \texttt{Products} for SFT to evaluate robustness under temporal distribution shifts.

\noindent\textbf{Backbone.} We use a GraphSAGE-like encoder \citep{hamilton2017inductive}. Following the encoding of task-trees, we add an additional linear transformation as the projector. Note that we does not leverage edge features to make the task harder except for \texttt{Enron} and \texttt{Googlemap CT} where node features are IDs and edge contains messages. As the edge information may significantly benefit some tasks like knowledge graph completion and molecule property prediction.

\subsection{Evaluation Settings}

\noindent\textbf{Finetune.} This is the basic setting that directly finetune the full parameters of the pretrained model by appending a linear classifier on the top of the model encoder.

\noindent\textbf{In-context Learning.} This is a kind of few-shot learning without fine-tuning the model parameters. We randomly select $k$ samples from a certain class, and average the selected samples to form prototypes, which is used for classification. We follow existing GFM works \citep{liu2024one,he2024unigraph} to conduct 500 randomly sampled 5-way 3-shot learning tasks. If the number of classes is less than 5, the number of ways is set to the number of classes.

\noindent\textbf{Zero-shot Learning.} The zero-shot learning is similar to in-context learning, yet we use the LLM-encoded class description embeddings as the prototypes for prediction. Similar to in-context learning, we also randomly sample 500 tasks for evaluation. Another zero-shot setting involves using an additional LLM for zero-shot inference \citep{chenllaga}. We leave this in our future work.

\subsection{Hyper-Parameters}

\begin{table}[!t]
    \centering
    \caption{The hyper-parameters used in the pretraining. }
    \label{tab:hyper pt}
    \resizebox{0.8\linewidth}{!}{
        \begin{tabular}{cccccc}
            \toprule
            Hidden Dim   & Layers    & Dropout   & Activation & Epochs & LR     \\ \midrule
            768          & 2         & 0.15      & ReLU       & 10     & 1e-7   \\ \midrule\midrule
            Feature Drop & Edge Drop & $\lambda$ & Decay      & BS     & Fanout \\ \midrule
            0.2          & 0.2       & 10        & 1e-8       & 4,096  & 10     \\ \bottomrule
        \end{tabular}
    }
\end{table}

\begin{table}[!t]
    \centering
    \caption{The hyper-parameters used in fine-tuning on academic networks.}
    \label{tab:hyper aca}
    \resizebox{0.8\linewidth}{!}{
        \begin{tabular}{lcccccc}
            \toprule
            \textbf{Academia} & \texttt{Cora} & \texttt{Citeseer} & \texttt{Pubmed} & \texttt{Arxiv23} & \texttt{DBLP} & \texttt{Arxiv} \\ \midrule
            Normalize         & None          & BN                & None            & None             & None          & BN             \\
            Learning Rate     & 1e-4          & 1e-4              & 1e-5            & 1e-4             & 1e-4          & 1e-3           \\
            Weight Decay      & 0             & 0                 & 1e-6            & 0                & 1e-6          & 1e-6           \\
            Epochs            & 1000          & 1000              & 1000            & 1000             & 1000          & 1000           \\
            Early Stop        & 200           & 200               & 200             & 200              & 200           & 200            \\ \midrule
            SFT Learning Rate & 1e-7          & 1e-4              & 1e-6            & 1e-5             & 1e-7          & 1e-6           \\
            SFT Epochs        & 100           & 100               & 100             & 100              & 100           & 100            \\
            \bottomrule
        \end{tabular}
    }
\end{table}

\begin{table}[!t]
    \centering
    \caption{The hyper-parameters used in fine-tuning on e-commerce networks.}
    \label{tab:hyper ecom}
    \resizebox{\linewidth}{!}{
        \begin{tabular}{lccccccc}
            \toprule
            \textbf{E-commerce} & \texttt{History} & \texttt{Children} & \texttt{Computer} & \texttt{Photo} & \texttt{Sportsfit} & \texttt{Ratings} & \texttt{Products} \\ \midrule
            Normalize           & None             & None              & None              & None           & BN                 & BN               & BN                \\
            Learning Rate       & 1e-3             & 1e-4              & 1e-3              & 1e-4           & 1e-4               & 1e-4             & 1e-4              \\
            Weight Decay        & 0                & 1e-6              & 0                 & 1e-6           & 1e-6               & 1e-6             & 1e-6              \\
            Epochs              & 1000             & 1000              & 1000              & 1000           & 1000               & 1000             & 1000              \\
            Early Stop          & 200              & 200               & 200               & 200            & 200                & 200              & 200               \\ \midrule
            SFT Learning Rate   & 1e-5             & 1e-7              & 1e-7              & 1e-7           & 1e-8               & 1e-7             & 1e-7              \\
            SFT Epochs          & 100              & 100               & 100               & 100            & 100                & 100              & 100               \\
            \bottomrule
        \end{tabular}
    }
\end{table}

\begin{table}[!t]
    \centering
    \caption{The hyper-parameters used in fine-tuning on knowledge graphs.}
    \label{tab:hyper kg}
    \resizebox{\linewidth}{!}{
        \begin{tabular}{lcccccccc}
            \toprule
            \textbf{KG}       & \texttt{WN18RR} & \texttt{Codex-S} & \texttt{Codex-M} & \texttt{Codex-L} & \texttt{NELL995} & \texttt{GDELT} & \texttt{ICEWS1819} & \texttt{FB15K237} \\ \midrule
            Normalize         & BN              & BN               & BN               & BN               & BN               & BN             & BN                 & BN                \\
            Learning Rate     & 1e-4            & 1e-4             & 1e-3             & 1e-3             & 1e-3             & 1e-3           & 1e-3               & 1e-3              \\
            Weight Decay      & 1e-6            & 1e-6             & 1e-6             & 1e-6             & 0                & 0              & 1e-6               & 1e-6              \\
            Epochs            & 1000            & 1000             & 1000             & 1000             & 1000             & 1000           & 1000               & 1000              \\
            Early Stop        & 200             & 200              & 200              & 200              & 200              & 200            & 200                & 200               \\ \midrule
            SFT Learning Rate & 1e-8            & 1e-7             & 1e-5             & 1e-7             & 1e-8             & 1e-4           & 1e-8               & 1e-7              \\
            SFT Epochs        & 100             & 100              & 100              & 100              & 100              & 100            & 100                & 100               \\
            \bottomrule
        \end{tabular}
    }
\end{table}

\begin{table}[!t]
    \centering
    \caption{The hyper-parameters used in fine-tuning on molecule graphs.}
    \label{tab:hyper molecule}
    \resizebox{\linewidth}{!}{
        \begin{tabular}{lcccccccc}
            \toprule
            \textbf{Molecule} & \texttt{BBBP} & \texttt{BACE} & \texttt{TOXCAST} & \texttt{TOX21} & \texttt{CYP450} & \texttt{HIV} & \texttt{MIV} & \texttt{PCBA} \\ \midrule
            Normalize         & BN            & BN            & BN               & BN             & BN              & BN           & None         & BN            \\
            Learning Rate     & 1e-3          & 1e-4          & 1e-4             & 1e-4           & 1e-4            & 1e-5         & 1e-4         & 1e-5          \\
            Weight Decay      & 1e-6          & 0             & 1e-6             & 1e-6           & 1e-6            & 0            & 0            & 0             \\
            Epochs            & 300           & 300           & 300              & 300            & 300             & 300          & 300          & 300           \\
            Early Stop        & 30            & 30            & 30               & 30             & 30              & 30           & 30           & 30            \\ \midrule
            SFT Learning Rate & 1e-7          & 1e-6          & 1e-8             & 1e-7           & 1e-7            & 1e-7         & 1e-6         & 1e-7          \\
            SFT Epochs        & 10            & 10            & 10               & 10             & 10              & 10           & 10           & 10            \\
            \bottomrule
        \end{tabular}
    }
\end{table}

\begin{table}[!t]
    \centering
    \caption{The hyper-parameters used in fine-tuning on temporal graphs.}
    \label{tab:hyper temporal}
    \resizebox{0.5\linewidth}{!}{
        \begin{tabular}{lcc}
            \toprule
            \textbf{Temporal} & \texttt{Enron} & \texttt{Googlemap CT} \\ \midrule
            Normalize         & None           & None                  \\
            Learning Rate     & 1e-3           & 1e-3                  \\
            Weight Decay      & 1e-6           & 1e-6                  \\
            Epochs            & 1000           & 1000                  \\
            Early Stop        & 200            & 200                   \\ \midrule
            SFT Learning Rate & 1e-6           & 1e-6                  \\
            SFT Epochs        & 100            & 100                   \\
            \bottomrule
        \end{tabular}
    }
\end{table}

\noindent\textbf{Baselines.} For the baseline methods, we follow the hyperparameters reported in \citep{liu2024one,chen2024text}. If the hyperparameters are not provided, we set the number of epochs to 1,000, the batch size to 4,096, early stopping at 200, and the hidden dimension to 768, using a 2-layer GraphSAGE as the backbone with batch normalization and ReLU activation. For optimization, we use AdamW with a weight decay of 1e-6 and tune the learning rate from {1e-3, 1e-4, 1e-5}, reporting the best performance. For methods with attention mechanisms, we set 4 attention heads.

\noindent\textbf{GIT.} The model architecture and pretraining parameters of our GIT are presented in Table \ref{tab:hyper pt}. The specific fine-tuning hyperparameters, categorized by domain, are shown in Tables \ref{tab:hyper aca}, \ref{tab:hyper ecom}, \ref{tab:hyper kg}, \ref{tab:hyper molecule}, and \ref{tab:hyper temporal}. For in-context learning and zero-shot learning results without fine-tuning, the general model does not involve any hyperparameters. For the specialized model, we tune the hyperparameters of SFT epochs from 10 to 500, in steps of 10, the SFT learning rate from {1e-4, 1e-5, 1e-6, 1e-7, 1e-8}, and the normalization method from {None, BN}.

\section{Comprehensive Model Results}
\label{app:add exp}

\subsection{Domain: Academia}

\noindent\textbf{Node Classification.} We perform node classification on academic networks across three settings: basic fine-tuning, 3-shot in-context learning, and zero-shot learning. The comprehensive node classification results on academic networks, measured in terms of accuracy, are presented in Table \ref{tab:full academia}. Notably, the specialized model (GIT-S) does not always outperform the general model (GIT-G). This may be because the manually selected SFT data does not adequately capture the underlying distribution of the domain. It would be valuable to explore dataset selection or instance selection methods to better optimize the choice of SFT data.

\begin{table}[!h]
    \caption{Node classification results on academic networks in terms of accuracy. }
    \label{tab:full academia}
    \resizebox{\linewidth}{!}{
        \begin{tabular}{clccccccc}
            \toprule
                                          &          & \texttt{Cora}         & \texttt{Citeseer}     & \texttt{Pubmed}       & \texttt{Arxiv23}      & \texttt{DBLP}         & \texttt{Arxiv}        & \textbf{Avg.}  \\ \midrule
            \multirow{6}{*}{\bf 0-shot}   & GCN      & -                     & -                     & -                     & -                     & -                     & -                     & -              \\
                                          & BGRL     & 14.37 ± 0.38          & 15.09 ± 0.40          & 33.94 ± 0.46          & 2.44 ± 0.23           & 25.53 ± 0.27          & 2.10 ± 0.14           & 15.58          \\
                                          & GraphMAE & 13.88 ± 0.41          & 13.48 ± 0.83          & 32.62 ± 0.67          & 2.51 ± 0.37           & 27.83 ± 0.40          & 2.17 ± 0.26           & 15.42          \\
                                          & OFA      & 14.58 ± 0.43          & 13.28 ± 0.12          & 30.89 ± 0.10          & 2.08 ± 0.03           & 21.00 ± 0.27          & 2.05 ± 0.18           & 13.98          \\ \cmidrule{2-9}
                                          & GIT - G  & 15.31 ± 0.27          & 16.04 ± 0.31          & 29.66 ± 0.60          & 2.89 ± 0.25           & 21.80 ± 0.35          & 3.57 ± 0.18           & 14.88          \\
                                          & GIT - S  & \textbf{18.26 ± 0.29} & \textbf{20.35 ± 0.29} & \textbf{39.12 ± 0.55} & \textbf{9.08 ± 0.32}  & \textbf{36.40 ± 0.58} & \textbf{17.50 ± 0.66} & \textbf{23.45} \\ \midrule\midrule
            \multirow{6}{*}{\bf 3-shot}   & GCN      & -                     & -                     & -                     & -                     & -                     & -                     & -              \\
                                          & BGRL     & 61.24 ± 0.50          & 44.97 ± 0.43          & 54.55 ± 0.81          & 43.17 ± 0.93          & 42.89 ± 0.61          & 59.09 ± 0.24          & 50.99          \\
                                          & GraphMAE & 62.02 ± 0.58          & 44.08 ± 0.59          & 55.98 ± 0.68          & 31.64 ± 0.28          & 38.16 ± 0.54          & 63.62 ± 0.79          & 49.25          \\
                                          & OFA      & 55.92 ± 0.40          & 41.57 ± 0.32          & 40.89 ± 0.79          & 37.01 ± 0.41          & 43.08 ± 0.51          & 57.08 ± 0.48          & 45.93          \\ \cmidrule{2-9}
                                          & GIT - G  & 60.93 ± 0.47          & 48.32 ± 0.53          & \textbf{60.30 ± 0.76} & 45.62 ± 0.35          & 44.76 ± 0.54          & \textbf{64.07 ± 0.50} & 54.00          \\
                                          & GIT - S  & \textbf{63.23 ± 0.29} & \textbf{49.55 ± 0.33} & 59.62 ± 0.54          & \textbf{47.21 ± 0.31} & \textbf{47.40 ± 0.43} & 64.06 ± 0.58          & \textbf{55.18} \\ \midrule\midrule
            \multirow{6}{*}{\bf Finetune} & GCN      & 77.40 ± 1.36          & 80.19 ± 1.30          & 72.44 ± 2.08          & 71.61 ± 0.02          & 68.15 ± 0.14          & 71.65 ± 0.02          & 73.57          \\
                                          & BGRL     & 71.06 ± 2.84          & 80.56 ± 1.59          & 68.75 ± 3.69          & 69.23 ± 0.19          & 55.66 ± 2.00          & 67.62 ± 0.19          & 68.81          \\
                                          & GraphMAE & 76.34 ± 1.49          & 79.19 ± 1.32          & 73.88 ± 1.16          & 70.46 ± 0.04          & 71.18 ± 0.13          & 71.82 ± 0.05          & 73.81          \\
                                          & OFA      & 70.63 ± 1.03          & 79.13 ± 2.53          & 70.95 ± 1.02          & 70.43 ± 0.12          & 70.67 ± 0.21          & 71.28 ± 0.24          & 72.18          \\ \cmidrule{2-9}
                                          & GIT - G  & 78.74 ± 1.12          & 81.03 ± 0.78          & 75.26 ± 2.81          & \textbf{72.49 ± 0.07} & \textbf{74.42 ± 0.15} & 72.99 ± 0.10          & 75.82          \\
                                          & GIT - S  & \textbf{78.90 ± 1.44} & \textbf{81.97 ± 0.80} & \textbf{76.17 ± 1.70} & 71.50 ± 0.08          & 73.59 ± 0.08          & \textbf{73.13 ± 0.11} & \textbf{75.88} \\
            \bottomrule
        \end{tabular}
    }
\end{table}

\noindent\textbf{Link Prediction.} We present the link prediction results, measured by AUC, on academic networks in Table \ref{tab:academia link pred}. The train/val/test sets are randomly split in a 70\%/15\%/15\% ratio. GIT outperforms all baselines across all settings. Additionally, the specialized GIT surpasses the general GIT, highlighting the potential of specialization to enhance performance on other tasks within the same domain. This finding underscores the cross-task transferability of the proposed specialization process.

\begin{table}[!h]
    \centering
    \caption{Link prediction results on academic networks in terms of AUC. }
    \label{tab:academia link pred}
    \resizebox{\linewidth}{!}{
        \begin{tabular}{lccccccc}
            \toprule
                     & \texttt{Cora}         & \texttt{Citeseer}     & \texttt{Pubmed}       & \texttt{Arxiv23}      & \texttt{DBLP}         & \texttt{Arxiv}        & \textbf{Avg.}  \\ \midrule
            GCN      & 87.34 ± 0.88          & 87.52 ± 0.98          & 84.41 ± 0.17          & 89.67 ± 0.24          & \textbf{98.29 ± 0.07} & \textbf{97.50 ± 0.08} & 90.79          \\
            BGRL     & 83.96 ± 0.36          & 81.51 ± 0.85          & 84.01 ± 0.60          & 86.42 ± 0.08          & 97.24 ± 0.06          & 96.80 ± 0.04          & 88.32          \\
            GraphMAE & 85.57 ± 0.27          & 84.55 ± 0.69          & \textbf{89.83 ± 0.35} & 91.45 ± 0.44          & 98.05 ± 0.06          & 96.31 ± 0.02          & 90.96          \\
            OFA      & 82.82 ± 0.72          & 81.52 ± 1.16          & 84.78 ± 1.08          & 85.40 ± 0.62          & 97.23 ± 0.14          & 96.46 ± 0.05          & 88.04          \\ \midrule
            GIT - G  & 87.79 ± 2.07          & 87.59 ± 0.96          & 84.35 ± 0.26          & 91.47 ± 0.46          & 98.25 ± 0.09          & 97.14 ± 0.06          & 91.10          \\
            GIT - S  & \textbf{88.58 ± 1.88} & \textbf{88.50 ± 1.15} & 87.78 ± 0.13          & \textbf{91.86 ± 0.38} & 98.27 ± 0.05          & 97.30 ± 0.05          & \textbf{92.05} \\
            \bottomrule
        \end{tabular}
    }
\end{table}

\subsection{Domain: E-Commerce}

\noindent\textbf{Node Classification.} The comprehensive node classification results on e-commerce datasets are presented in Table \ref{tab:full ecommerce}. Our proposed GIT model outperforms the baselines in most settings, particularly for the specialized version. Specialization significantly improves performance in zero-shot and in-context learning, highlighting the advantages of using task-trees as the basic learning instances. In the basic fine-tuning setting, we also observe that supervised methods (GCN and GAT) generally outperform self-supervised methods, such as GraphMAE \citep{hou2022graphmae} and OFA \citep{liu2024one}, indicating the occurrence of negative transfer. However, GIT surpasses these supervised methods on 5 out of 7 datasets, further demonstrating the benefits of task-trees as basic learning instances. It it important to note that we consider \texttt{Children} and \texttt{Ratings} as heterophily graphs \citep{chen2024text} due to their low homophily ratio.

\begin{table}[!h]
    \caption{Node classification results on e-commerce networks in terms of accuracy. }
    \label{tab:full ecommerce}
    \resizebox{\linewidth}{!}{
        \begin{tabular}{clcccccccc}
            \toprule
                                          &          & \texttt{History}      & \texttt{Children}     & \texttt{Computer}     & \texttt{Photo}        & \texttt{Sportsfit}    & \texttt{Ratings}      & \texttt{Products}     & \textbf{Avg.}  \\ \midrule
            \multirow{7}{*}{\bf 0-shot}   & GCN      & -                     & -                     & -                     & -                     & -                     & -                     & -                     & -              \\
                                          & GAT      & -                     & -                     & -                     & -                     & -                     & -                     & -                     & -              \\
                                          & BGRL     & 6.76 ± 0.18           & 4.26 ± 0.14           & 9.70 ± 0.39           & 6.32 ± 0.20           & 7.91 ± 0.31           & 17.50 ± 0.65          & 0.58 ± 0.19           & 7.58           \\
                                          & GraphMAE & 9.20 ± 0.23           & 4.25 ± 0.13           & 7.86 ± 0.23           & 8.02 ± 0.40           & 7.70 ± 0.34           & 20.26 ± 0.16          & 0.07 ± 0.03           & 8.19           \\
                                          & OFA      & 8.84 ± 0.52           & 4.22 ± 0.19           & 10.83 ± 0.32          & 8.46 ± 0.34           & 7.28 ± 0.52           & 18.43 ± 0.50          & 3.02 ± 0.37           & 8.73           \\ \cmidrule{2-10}
                                          & GIT - G  & 4.72 ± 0.31           & 4.34 ± 0.19           & 8.85 ± 0.30           & \textbf{11.78 ± 0.26} & 7.20 ± 0.18           & 21.00 ± 0.06          & 3.64 ± 0.25           & 8.79           \\
                                          & GIT - S  & \textbf{9.94 ± 0.54}  & \textbf{7.49 ± 0.12}  & \textbf{14.87 ± 0.40} & 9.69 ± 0.22           & \textbf{9.23 ± 0.65}  & \textbf{21.55 ± 0.31} & \textbf{46.62 ± 1.06} & \textbf{17.06} \\ \midrule\midrule
            \multirow{7}{*}{\bf 3-shot}   & GCN      & -                     & -                     & -                     & -                     & -                     & -                     & -                     & -              \\
                                          & GAT      & -                     & -                     & -                     & -                     & -                     & -                     & -                     & -              \\
                                          & BGRL     & 38.35 ± 0.51          & 32.93 ± 0.75          & 50.90 ± 0.82          & 61.64 ± 0.51          & 42.99 ± 0.48          & \textbf{21.67 ± 0.21} & 71.71 ± 0.23          & 45.74          \\
                                          & GraphMAE & 42.28 ± 0.38          & 38.71 ± 0.49          & 58.24 ± 0.79          & 59.47 ± 0.25          & 46.57 ± 0.46          & 21.11 ± 0.56          & 71.01 ± 0.67          & 48.20          \\
                                          & OFA      & 48.87 ± 0.26          & 47.13 ± 0.32          & 68.14 ± 0.49          & 75.73 ± 0.24          & 63.56 ± 0.57          & 21.38 ± 0.16          & 74.58 ± 0.33          & 57.06          \\\cmidrule{2-10}
                                          & GIT - G  & 50.78 ± 0.41          & 47.55 ± 0.26          & 66.64 ± 0.50          & 75.43 ± 0.26          & 64.56 ± 0.43          & 21.21 ± 0.37          & 74.35 ± 0.48          & 57.22          \\
                                          & GIT - S  & \textbf{50.99 ± 0.64} & \textbf{47.65 ± 0.36} & \textbf{69.29 ± 0.48} & \textbf{76.32 ± 0.55} & \textbf{65.84 ± 0.53} & 21.17 ± 0.40          & \textbf{74.80 ± 0.54} & \textbf{58.01} \\ \midrule\midrule
            \multirow{6}{*}{\bf Finetune} & GCN      & 84.62 ± 0.06          & 58.08 ± 0.08          & 88.41 ± 0.06          & 86.39 ± 0.11          & 92.07 ± 0.02          & 50.99 ± 0.23          & 86.91 ± 0.05          & 78.21          \\
                                          & GAT      & 84.54 ± 0.07          & 59.09 ± 0.05          & \textbf{89.00 ± 0.04} & \textbf{86.70 ± 0.07} & 91.12 ± 0.05          & 51.19 ± 0.15          & 87.22 ± 0.05          & 78.41          \\
                                          & GraphMAE & 82.51 ± 0.05          & 56.76 ± 0.09          & 84.31 ± 0.06          & 83.26 ± 0.06          & 90.47 ± 0.03          & 52.39 ± 0.29          & 86.30 ± 0.07          & 76.57          \\
                                          & OFA      & 82.81 ± 0.11          & 55.43 ± 0.08          & 85.78 ± 0.13          & 83.21 ± 0.25          & 91.23 ± 0.07          & 51.79 ± 0.18          & 86.23 ± 0.07          & 76.64          \\\cmidrule{2-10}
                                          & GIT - G  & 84.94 ± 0.10          & 59.09 ± 0.15          & 87.81 ± 0.10          & 85.66 ± 0.06          & 92.17 ± 0.06          & 52.45 ± 0.26          & 87.75 ± 0.04          & 78.61          \\
                                          & GIT - S  & \textbf{85.18 ± 0.11} & \textbf{59.73 ± 0.12} & 88.05 ± 0.18          & 85.66 ± 0.05          & \textbf{92.44 ± 0.02} & \textbf{52.56 ± 0.29} & \textbf{88.20 ± 0.05} & \textbf{78.83} \\
            \bottomrule
        \end{tabular}
    }
\end{table}

\noindent\textbf{Link Prediction.} The link prediction results on e-commerce networks (\texttt{History}, \texttt{Photo}, \texttt{Ratings}) are presented in Table \ref{tab:ecommerce link pred}. We randomly select 70\% of the edges for training, 15\% for validation, and the remaining 15\% for testing. Our GIT model achieves the best average performance across these three e-commerce graphs. However, other baselines like BGRL, GraphMAE, and OFA fail to outperform the basic GCN. This may be because they struggle to acquire useful knowledge during pretraining for tasks that require structural insight, such as link prediction. These results underscore the advantages of using task-trees as the basic learning instances.

\begin{table}[!h]
    \centering
    \caption{Link prediction results on e-commerce networks in terms of AUC. }
    \label{tab:ecommerce link pred}
    \resizebox{0.6\linewidth}{!}{
        \begin{tabular}{lcccc}
            \toprule
                     & \texttt{History}      & \texttt{Photo}        & \texttt{Ratings}      & \textbf{Avg.}  \\ \midrule
            GCN      & \textbf{97.87 ± 0.06} & 97.37 ± 0.03          & 97.77 ± 0.07          & 97.67          \\
            BGRL     & 96.40 ± 0.08          & 97.58 ± 0.04          & 98.05 ± 0.04          & 97.34          \\
            GraphMAE & 97.59 ± 0.06          & 98.09 ± 0.05          & 95.35 ± 0.15          & 97.01          \\
            OFA      & 95.86 ± 0.09          & 97.05 ± 0.06          & 97.79 ± 0.12          & 96.90          \\ \midrule
            GIT - G  & 96.55 ± 0.07          & 96.24 ± 0.05          & 98.45 ± 0.07          & 97.08          \\
            GIT - S  & 97.08 ± 0.05          & \textbf{97.80 ± 0.06} & \textbf{98.49 ± 0.10} & \textbf{97.79} \\
            \bottomrule
        \end{tabular}
    }
\end{table}

\subsection{Domain: Knowledge Base}

\noindent\textbf{Edge Classification.} The edge classification results on knowledge graphs are presented in Table \ref{tab:full kg}. In this domain, our GIT model significantly outperforms the existing baselines, demonstrating the advantages of using task-trees as the basic learning instances for knowledge bases, even though these KGs represent different scenarios. We hypothesize that this improvement stems from the nature of relation triplets in KGs, where each relation inherently describes the aggregation of the head and tail nodes, aligning with the concept of task-trees.

\begin{table}[!h]
    \caption{Edge classification results on knowledge graphs in terms of accuracy. }
    \label{tab:full kg}
    \resizebox{\linewidth}{!}{
        \begin{tabular}{clccccccccc}
            \toprule
                                          &          & \texttt{WN18RR}       & \texttt{Codex-S}      & \texttt{Codex-M}      & \texttt{Codex-L}      & \texttt{NELL995}      & \texttt{GDELT}        & \texttt{ICEWS1819}    & \texttt{FB15K237}     & \textbf{Avg.}  \\ \midrule
            \multirow{5}{*}{\bf 3-shot}   & GCN      & -                     & -                     & -                     & -                     & -                     & -                     & -                     & -                     & -              \\
                                          & GraphMAE & 55.20 ± 0.52          & 61.41 ± 0.86          & 54.30 ± 0.42          & 61.01 ± 0.55          & 86.42 ± 0.53          & 32.43 ± 0.48          & 31.58 ± 0.39          & 70.15 ± 0.75          & 56.56          \\
                                          & OFA      & 55.27 ± 0.64          & 55.14 ± 0.34          & 50.20 ± 0.68          & 62.40 ± 0.46          & 88.41 ± 0.38          & 30.23 ± 0.50          & 34.94 ± 0.32          & 79.15 ± 0.45          & 56.97          \\ \cmidrule{2-11}
                                          & GIT - G  & 55.80 ± 0.32          & 76.96 ± 0.43          & \textbf{73.79 ± 0.43} & \textbf{78.54 ± 0.51} & 89.13 ± 0.48          & 34.30 ± 0.68          & \textbf{42.07 ± 0.75} & 89.78 ± 0.46          & 67.55          \\
                                          & GIT - S  & \textbf{57.90 ± 0.97} & \textbf{77.19 ± 0.32} & 72.14 ± 0.84          & 76.99 ± 0.72          & \textbf{90.80 ± 0.51} & \textbf{34.85 ± 0.69} & 42.02 ± 0.65          & \textbf{90.49 ± 0.32} & \textbf{67.80} \\ \midrule\midrule
            \multirow{5}{*}{\bf Finetune} & GCN      & 86.77 ± 0.30          & 93.56 ± 2.11          & 85.73 ± 1.84          & 84.45 ± 0.18          & 79.06 ± 0.32          & 11.72 ± 0.05          & 27.53 ± 0.06          & 66.07 ± 0.26          & 66.86          \\
                                          & GraphMAE & 93.87 ± 0.35          & 97.09 ± 0.72          & 94.07 ± 0.60          & 94.18 ± 0.19          & 86.10 ± 0.42          & 13.12 ± 0.04          & 28.91 ± 0.06          & 73.52 ± 0.12          & 72.61          \\
                                          & OFA      & 93.10 ± 0.31          & 90.78 ± 5.46          & 93.83 ± 3.28          & 93.26 ± 0.59          & 86.91 ± 1.50          & 14.48 ± 0.03          & 30.60 ± 0.63          & 76.08   ± 1.95        & 72.38          \\ \cmidrule{2-11}
                                          & GIT - G  & 94.16 ± 0.11          & 98.08 ± 0.08          & 97.89 ± 0.04          & \textbf{96.85 ± 0.03} & 90.10 ± 0.23          & 14.86 ± 0.12          & \textbf{33.49 ± 0.06} & \textbf{80.39 ± 0.13} & 75.73          \\
                                          & GIT - S  & \textbf{95.15 ± 0.07} & \textbf{99.19 ± 0.04} & \textbf{97.92 ± 0.04} & 96.83 ± 0.04          & \textbf{91.28 ± 0.41} & \textbf{14.89 ± 0.05} & 33.61 ± 0.10          & 80.32   ± 0.07        & \textbf{76.15} \\
            \bottomrule
        \end{tabular}
    }
\end{table}

\subsection{Domain: Molecule}

\noindent\textbf{Graph Classification.} We evaluate fine-tuning, in-context learning, and zero-shot learning in this domain. The fine-tuning and in-context learning settings are consistent with those used in previous domains. For zero-shot learning, however, we follow the approach of \citet{zhao2023gimlet} by assessing zero-shot performance on the original test set. The graph classification results are presented in Table \ref{tab:full molecule}. Our GIT model achieves the best average performance across the three evaluated settings. We also observe that specialization consistently improves performance across different graphs, aligning with our theoretical analysis.

\begin{table}[!h]
    \centering
    \caption{Graph classification results on molecule graphs in terms of AUC. }
    \label{tab:full molecule}
    \resizebox{\linewidth}{!}{
        \begin{tabular}{clccccccccc}
            \toprule
                                          &          & \texttt{HIV}          & \texttt{BBBP}         & \texttt{BACE}         & \texttt{TOXCAST}      & \texttt{CYP450}       & \texttt{TOX21}        & \texttt{MUV}          & \texttt{PCBA}         & \textbf{Avg.}  \\ \midrule
            \multirow{6}{*}{\bf 0-shot}   & GIN      & -                     & -                     & -                     & -                     & -                     & -                     & -                     & -                     & -              \\
                                          & BGRL     & 55.27                 & 53.72                 & 33.74                 & 49.00                 & 60.99                 & 46.40                 & 39.90                 & 42.39                 & 47.68          \\
                                          & GraphMAE & 46.48                 & 49.08                 & 30.76                 & 48.22                 & 60.55                 & 49.17                 & 48.17                 & 45.10                 & 47.19          \\
                                          & OFA      & 47.96                 & 50.61                 & 34.35                 & 49.70                 & 61.96                 & 52.73                 & 52.48                 & 54.14                 & 50.49          \\ \cmidrule{2-11}
                                          & GIT - G  & 56.76                 & 54.76                 & 33.66                 & 51.55                 & 63.21                 & 56.83                 & 53.71                 & 56.25                 & 53.34          \\
                                          & GIT - S  & \textbf{66.14}        & \textbf{62.16}        & \textbf{52.27}        & \textbf{58.30}        & \textbf{69.75}        & \textbf{63.45}        & \textbf{65.32}        & \textbf{65.26}        & \textbf{62.83} \\ \midrule \midrule
            \multirow{6}{*}{\bf 3-shot}   & GIN      & -                     & -                     & -                     & -                     & -                     & -                     & -                     & -                     & -              \\
                                          & BGRL     & 52.72 ± 1.84          & 49.12 ± 0.78          & 59.58 ± 0.89          & \textbf{57.27 ± 0.05} & 67.49 ± 0.56          & 59.26 ± 0.19          & 52.61 ± 0.23          & 51.48 ± 0.22          & 56.19          \\
                                          & GraphMAE & 54.40 ± 1.04          & 48.41 ± 1.34          & 60.78 ± 1.01          & 56.99 ± 0.06          & 66.93 ± 0.91          & 58.40 ± 0.22          & 51.95 ± 0.18          & 50.24 ± 0.23          & 56.01          \\
                                          & OFA      & \textbf{56.04 ± 1.49} & 50.69 ± 1.36          & 60.21 ± 0.64          & 56.40 ± 0.05          & 68.76 ± 0.16          & 57.18 ± 0.29          & 56.17 ± 0.23          & 50.77 ± 0.30          & 57.03          \\ \cmidrule{2-11}
                                          & GIT - G  & 52.42 ± 1.74          & 48.22 ± 1.14          & 59.32 ± 0.91          & 56.32 ± 0.04          & 66.77 ± 0.45          & 58.53 ± 0.36          & 55.98 ± 0.19          & 50.09 ± 0.30          & 55.96          \\
                                          & GIT - S  & 54.12 ± 1.66          & \textbf{66.74 ± 1.34} & \textbf{61.76 ± 0.92} & 55.53 ± 0.03          & \textbf{81.50 ± 0.23} & \textbf{65.16 ± 0.27} & \textbf{66.14 ± 0.30} & \textbf{51.58 ± 0.30} & \textbf{62.82} \\ \midrule \midrule
            \multirow{6}{*}{\bf Finetune} & GIN      & \textbf{76.83 ± 1.32} & 67.36 ± 1.39          & 75.55 ± 2.91          & 62.92 ± 0.42          & 85.82 ± 0.77          & 72.26 ± 0.24          & 70.12 ± 0.39          & 78.34 ± 0.51          & 73.65          \\
                                          & BGRL     & 72.18 ± 1.24          & 67.40 ± 1.45          & 73.75 ± 3.69          & 62.52 ± 0.10          & 83.10 ± 0.26          & 72.97 ± 0.54          & 68.46 ± 0.63          & 76.69 ± 1.40          & 72.13          \\
                                          & GraphMAE & 69.54 ± 2.59          & 66.43 ± 2.48          & 66.56 ± 4.73          & 62.52 ± 0.14          & 86.64 ± 0.27          & \textbf{74.13 ± 0.41} & 70.12 ± 0.40          & 75.34 ± 1.33          & 71.41          \\
                                          & OFA      & 76.48 ± 2.11          & 65.79 ± 0.96          & 77.88 ± 1.08          & 63.49 ± 0.61          & 85.77 ± 0.32          & 73.00 ± 0.67          & 69.53 ± 0.56          & 80.31 ± 1.20          & 74.03          \\ \cmidrule{2-11}
                                          & GIT - G  & 73.63 ± 0.77          & 68.33 ± 1.06          & 79.28 ± 2.71          & 63.00 ± 0.43          & 86.86 ± 0.22          & 73.81 ± 0.33          & 70.49 ± 0.51          & 81.13 ± 0.53          & 74.57          \\
                                          & GIT - S  & 74.75 ± 0.42          & \textbf{68.72 ± 1.13} & \textbf{81.10 ± 0.61} & \textbf{63.63 ± 0.61} & \textbf{87.00 ± 0.37} & 73.78 ± 0.77          & \textbf{71.16 ± 0.51} & \textbf{81.43 ± 0.34} & \textbf{75.20} \\
            \bottomrule
        \end{tabular}
    }
\end{table}

\subsection{Domain: Temporal E-Commerce}

\noindent\textbf{Edge Classification.} We report the experimental results on two temporal graphs, \texttt{Enron} and \texttt{Googlemap CT}, in Table \ref{tab:temporal enron} and Table \ref{tab:temporal googlemap ct}, respectively. The original graph is split into ten snapshots based on timestamps, and the model performance is evaluated separately on each snapshot. Since we fine-tuned the pretrained model on \texttt{Products}, these experiments assess the model's robustness to temporal distribution shifts. The results demonstrate GIT's capability to effectively handle temporal information in graphs.

\begin{table}[!h]
    \centering
    \caption{Edge classification results on temporal graph \texttt{Enron}. }
    \label{tab:temporal enron}
    \resizebox{\linewidth}{!}{
        \begin{tabular}{clccccccccccc}
            \toprule
                                               &          & \texttt{Enron 1}      & \texttt{Enron 2}      & \texttt{Enron 3}      & \texttt{Enron 4}      & \texttt{Enron 5}      & \texttt{Enron 6}      & \texttt{Enron 7}      & \texttt{Enron 8}      & \texttt{Enron 9}      & \texttt{Enron 10}     & \textbf{Avg.}  \\ \midrule
            \multirow{4}{*}{\textbf{Finetune}} & GAT      & 81.36 ± 0.08          & 60.60 ± 0.88          & 62.40 ± 1.83          & 83.49 ± 0.25          & 45.88 ± 0.34          & 65.97 ± 1.07          & 48.14 ± 0.23          & 59.15 ± 0.65          & \textbf{82.39 ± 1.98} & 45.35 ± 0.43          & 63.47          \\
                                               & GraphMAE & 81.29 ± 0.01          & 59.52 ± 0.10          & 66.13 ± 1.42          & 82.84 ± 0.67          & 50.01 ± 0.34          & 64.46 ± 0.75          & 45.16 ± 0.15          & 67.25 ± 0.21          & 72.05 ± 3.27          & 48.00 ± 0.01          & 63.67          \\ \cmidrule{2-13}
                                               & GIT - G  & \textbf{81.48 ± 0.28} & 61.25 ± 0.25          & 67.56 ± 2.16          & 84.50 ± 0.21          & \textbf{52.52 ± 0.80} & \textbf{67.69 ± 0.54} & \textbf{50.32 ± 0.17} & 68.35 ± 0.51          & 76.92 ± 1.16          & 48.28 ± 0.15          & 65.89          \\
                                               & GIT - S  & 81.27 ± 0.12          & \textbf{61.42 ± 0.12} & \textbf{69.15 ± 0.43} & \textbf{84.51 ± 0.17} & 51.93 ± 0.48          & 66.74 ± 1.24          & 50.12 ± 0.32          & \textbf{68.89 ± 0.64} & 77.03 ± 2.05          & \textbf{48.35 ± 0.02} & \textbf{65.94} \\ \midrule\midrule
            \multirow{5}{*}{\textbf{3-shot}}   & GAT      & -                     & -                     & -                     & -                     & -                     & -                     & -                     & -                     & -                     & -                     & -              \\
                                               & OFA      & 68.91 ± 0.31          & 58.27 ± 0.41          & \textbf{62.43 ± 0.60} & 55.48 ± 0.59          & 61.46 ± 0.22          & 50.35 ± 0.75          & 53.44 ± 0.37          & 49.01 ± 0.55          & 56.43 ± 0.70          & 59.01 ± 0.19          & 57.48          \\
                                               & GraphMAE & 73.23 ± 0.76          & 58.53 ± 0.66          & 61.66 ± 0.61          & 58.15 ± 0.52          & 59.81 ± 0.50          & \textbf{50.59 ± 0.60} & \textbf{56.89 ± 0.74} & \textbf{56.08 ± 0.45} & 59.69 ± 0.44          & 63.63 ± 0.62          & 59.83          \\ \cmidrule{2-13}
                                               & GIT - G  & 71.67 ± 0.43          & \textbf{60.31 ± 0.49} & 61.46 ± 0.59          & 57.62 ± 0.56          & 59.60 ± 0.93          & 50.82 ± 0.40          & 54.02 ± 0.58          & 52.22 ± 0.32          & 60.61 ± 0.29          & 62.17 ± 0.34          & 59.05          \\
                                               & GIT - S  & \textbf{73.73 ± 0.50} & 58.96 ± 0.43          & 60.08 ± 0.45          & \textbf{59.38 ± 0.56} & \textbf{61.84 ± 0.78} & 50.43 ± 0.72          & 54.92 ± 0.22          & 56.03 ± 0.43          & \textbf{61.99 ± 0.80} & \textbf{64.85 ± 0.50} & \textbf{60.22} \\
            \bottomrule
        \end{tabular}

    }
\end{table}

\begin{table}[!h]
    \centering
    \caption{Edge classification results on temporal graph \texttt{Googlemap CT}. }
    \label{tab:temporal googlemap ct}
    \resizebox{\linewidth}{!}{
        \begin{tabular}{clccccccccccc}
            \toprule
                                               &          & \texttt{GCT 1}        & \texttt{GCT 2}        & \texttt{GCT 3}        & \texttt{GCT 4}        & \texttt{GCT 5}        & \texttt{GCT 6}        & \texttt{GCT 7}        & \texttt{GCT 8}        & \texttt{GCT 9}        & \texttt{GCT 10}       & \textbf{Avg.}  \\ \midrule
            \multirow{4}{*}{\textbf{Finetune}} & GAT      & 61.29 ± 0.04          & 56.29 ± 0.03          & 56.13 ± 0.08          & 57.32 ± 0.06          & 60.12 ± 0.08          & 61.65 ± 0.13          & 63.37 ± 0.06          & 64.71 ± 0.06          & 67.08 ± 0.06          & 69.46 ± 0.04          & 61.74          \\
                                               & GraphMAE & 64.60 ± 0.42          & 57.61 ± 0.32          & 55.63 ± 0.24          & 57.08 ± 0.25          & 60.36 ± 0.19          & 60.99 ± 0.08          & 62.90 ± 0.06          & 63.83 ± 0.09          & 66.89 ± 0.12          & 68.35 ± 0.06          & 61.82          \\ \cmidrule{2-13}
                                               & GIT - G  & 64.21 ± 1.10          & 59.06 ± 0.20          & \textbf{57.12 ± 0.23} & \textbf{59.85 ± 0.20} & 61.92 ± 0.11          & \textbf{62.91 ± 0.10} & 64.02 ± 0.04          & \textbf{65.62 ± 0.14} & \textbf{67.66 ± 0.11} & 70.51 ± 0.10          & 63.29          \\
                                               & GIT - S  & \textbf{66.52 ± 0.29} & \textbf{58.63 ± 0.69} & 56.82 ± 0.23          & 59.77 ± 0.46          & \textbf{61.93 ± 0.22} & 62.72 ± 0.18          & \textbf{64.08 ± 0.07} & 65.56 ± 0.11          & 67.49 ± 0.18          & \textbf{70.62 ± 0.11} & \textbf{63.41} \\  \midrule\midrule
            \multirow{5}{*}{\textbf{3-shot}}   & GAT      & -                     & -                     & -                     & -                     & -                     & -                     & -                     & -                     & -                     & -                     & -              \\
                                               & OFA      & 20.62 ± 0.34          & 21.22 ± 0.56          & 20.10 ± 0.32          & 20.16 ± 0.21          & 20.25 ± 0.36          & 20.39 ± 0.50          & 20.13 ± 0.14          & 20.21 ± 0.25          & 19.90 ± 0.30          & 20.59 ± 0.22          & 20.36          \\
                                               & GraphMAE & 21.15 ± 0.44          & 21.03 ± 0.39          & 21.73 ± 0.23          & 21.60 ± 0.53          & 19.73 ± 0.41          & 20.38 ± 0.28          & 20.62 ± 0.22          & 20.51 ± 0.27          & 19.63 ± 0.43          & 21.38 ± 0.35          & 20.78          \\ \cmidrule{2-13}
                                               & GIT - G  & 21.81 ± 0.29          & 21.94 ± 0.23          & 20.78 ± 0.31          & 20.61 ± 0.40          & 20.73 ± 0.37          & 20.33 ± 0.56          & 20.90 ± 0.32          & 20.57 ± 0.46          & 20.58 ± 0.33          & 20.28 ± 0.31          & 20.85          \\
                                               & GIT - S  & \textbf{25.21 ± 0.53} & \textbf{24.21 ± 0.43} & \textbf{23.43 ± 0.44} & \textbf{22.41 ± 0.14} & \textbf{21.83 ± 0.59} & \textbf{21.65 ± 0.33} & \textbf{21.41 ± 0.50} & \textbf{21.76 ± 0.57} & \textbf{21.72 ± 0.22} & \textbf{21.72 ± 0.50} & \textbf{22.54} \\
            \bottomrule
        \end{tabular}
    }
\end{table}

\section{Comparison to Domain Experts}

\subsection{Domain: Knowledge Base}

In addition to comparing GIT to standard baselines applicable for all graphs, we also evaluate it against \textsc{Ultra}, a foundation model specifically designed for knowledge graphs. As a domain expert, \textsc{Ultra} is compared to Expert GIT (pretrained on all KGs) and Specialized GIT (pretrained on default datasets and fine-tuned on \texttt{FB15K237}), with the results presented in Table \ref{tab:compare ultra}. We find that the two domain experts, \textsc{Ultra} and Expert GIT, achieve comparable performance, though \textsc{Ultra} significantly outperforms Expert GIT in certain settings. This may be due to \textsc{Ultra} learning more fine-grained relational information within KGs. Notably, Specialized GIT also performs comparably to both domain experts, highlighting the potential of specialization. We believe this is because the distributions of KGs are more similar to each other compared to graphs from other domains.

\begin{table}[!h]
    \centering
    \caption{Comparison between GIT and \textsc{Ultra}, a foundation model designed for knowledge graphs. The Expert GIT is pretrained on all KGs used in the paper.}
    \label{tab:compare ultra}
    \resizebox{\linewidth}{!}{
        \begin{tabular}{lccc|ccc}
            \toprule
                               & \multicolumn{3}{c}{\textbf{3-shot}} & \multicolumn{3}{c}{\textbf{Finetune}}                                                                                                            \\ \cmidrule(lr){2-4}\cmidrule(lr){5-7}
                               & \textsc{Ultra}                      & Expert GIT                            & Specialized GIT          & \textsc{Ultra}        & Expert GIT               & Specialized GIT            \\ \midrule
            \texttt{WN18RR   } & \textbf{60.69 ± 0.82}               & 55.83 ± 0.44                          & \underline{57.90 ± 0.97} & \textbf{96.35 ± 0.22} & 95.12 ± 0.05             & \underline{95.15 ± 0.07}   \\
            \texttt{Codex-S  } & \textbf{82.45 ± 0.53}               & 76.07 ± 0.41                          & \underline{77.19 ± 0.32} & 98.27 ± 0.36          & \underline{99.14 ± 0.07} & \textbf{99.19 ± 0.04}      \\
            \texttt{Codex-M  } & \textbf{74.35 ± 0.23}               & \underline{73.54 ± 0.46}              & 72.14 ± 0.84             & 96.90 ± 0.11          & \underline{97.90 ± 0.06} & \textbf{97.92 ± 0.04}      \\
            \texttt{Codex-L  } & 75.98 ± 0.48                        & \textbf{78.13 ± 0.36}                 & \underline{76.99 ± 0.72} & 96.22 ± 0.04          & \textbf{96.84 ± 0.04}    & \underline{96.83 ± 0.04}   \\
            \texttt{NELL995  } & \underline{90.22 ± 0.46}            & 89.99 ± 0.24                          & \textbf{90.80 ± 0.51}    & 89.46 ± 0.28          & \underline{90.55 ± 0.59} & \textbf{91.28 ± 0.41}      \\
            \texttt{GDELT    } & 33.89 ± 0.33                        & \textbf{34.92 ± 0.55}                 & \underline{34.85 ± 0.69} & 14.63 ± 0.02          & \textbf{14.91 ± 0.10}    & \underline{14.89 ± 0.05}   \\
            \texttt{ICEWS1819} & 41.37 ± 0.53                        & \textbf{42.42 ± 0.64}                 & \underline{42.02 ± 0.65} & \textbf{35.95 ± 0.03} & \underline{33.62 ± 0.13} & 33.61 ± 0.10               \\
            \texttt{FB15K237 } & 89.29 ± 0.40                        & \textbf{90.83 ± 0.30}                 & \underline{90.49 ± 0.32} & \textbf{82.28 ± 0.08} & 80.18 ± 0.29             & \underline{80.32   ± 0.07} \\ \midrule
            \textbf{Average}   & \textbf{68.53}                      & 67.72                                 & \underline{67.80}        & \textbf{76.26}        & 76.03                    & \underline{76.15}          \\
            \bottomrule
        \end{tabular}
    }
\end{table}

\subsection{Domain: Molecule}

In addition to general GNN baselines applicable across various graphs, we compare our GIT model to domain experts specifically designed for molecules, including KVPLM \citep{zeng2022deep}, MoMu \citep{su2022molecular}, Galactica \citep{taylor2022galactica}, and the recent SOTA model, GIMLET \citep{zhao2023gimlet}. The results are presented in Table \ref{tab:compare expert molecule}. We find that the general model pretrained on large-scale graphs generally underperforms compared to these domain experts. However, after specialization, the specialized model surpasses 3 out of 4 domain experts on average and outperforms the best expert model, GIMLET, on 4 out of 8 datasets. This observation demonstrates that post-training the general model with a reasonable number of domain-specific instances can enable it to match or even surpass expert models designed for that domain. These results strongly support the effectiveness of task-trees in designing graph foundation models.

\begin{table}[!h]
    \centering
    \caption{Comparison between our GIT and domain experts of molecule graphs in zero-shot setting.}
    \label{tab:compare expert molecule}
    \resizebox{\linewidth}{!}{
        \begin{tabular}{lccccccccc}
            \toprule
                               & \texttt{HIV}   & \texttt{BBBP}  & \texttt{BACE}  & \texttt{TOXCAST} & \texttt{CYP450} & \texttt{TOX21} & \texttt{MUV}   & \texttt{PCBA}  & \textbf{Avg.}  \\ \midrule
            KVPLM$^*$          & 61.20          & 60.20          & 51.26          & 50.96            & 59.22           & 49.17          & 61.72          & 48.11          & 55.23          \\
            MoMu$^*$           & 50.26          & 49.81          & 66.56          & 52.38            & 57.98           & 57.57          & 60.51          & 51.50          & 55.82          \\
            Galactica-1.3B$^*$ & 33.85          & 53.94          & 56.48          & 51.23            & 46.86           & 49.46          & 57.15          & 52.02          & 50.12          \\
            GIMLET$^*$         & \textbf{66.24} & \textbf{59.39} & \textbf{69.57} & \textbf{59.04}   & \textbf{71.25}  & \textbf{61.19} & \textbf{64.39} & \textbf{62.11} & \textbf{64.15} \\ \midrule
            GIT - G            & 56.76          & 54.76          & 33.66          & 51.55            & 63.21           & 56.83          & 53.71          & 56.25          & 53.34          \\
            GIT - S            & \textbf{66.14} & \textbf{62.16} & \textbf{52.27} & \textbf{58.30}   & \textbf{69.75}  & \textbf{63.45} & \textbf{65.32} & \textbf{65.26} & \textbf{62.83} \\
            \bottomrule
        \end{tabular}
    }

    \footnotesize{$^*$ indicates the results from paper \citep{zhao2023gimlet}.}
\end{table}

\section{Ablation on Specialization (SFT)}

\noindent\textbf{Ablation Study on SFT Data used for Specialization.} We also evaluate the impact of the SFT dataset used for specialization. The model's zero-shot performance is reported in Table \ref{tab:ablation sft data molecule}, comparing the default SFT dataset \texttt{PCBA} with another SFT dataset, \texttt{HIV}. We find that the model performance with \texttt{HIV} as the SFT data is lower than with \texttt{PCBA}. We hypothesize that this is due to \texttt{HIV} having fewer graphs and tasks, which may provide less information for reducing the distribution discrepancy. Nevertheless, \texttt{HIV} still improves the model's performance over the general model on 5 out of 8 datasets.

\begin{table}[!h]
    \centering
    \caption{The impact of SFT datasets in zero-shot setting.}
    \label{tab:ablation sft data molecule}
    \resizebox{\linewidth}{!}{
        \begin{tabular}{lccccccccc}
            \toprule
            \textbf{SFT Data} & \texttt{HIV} & \texttt{BBBP} & \texttt{BACE} & \texttt{TOXCAST} & \texttt{CYP450} & \texttt{TOX21} & \texttt{MUV} & \texttt{PCBA} & \textbf{Avg.} \\ \midrule
            \texttt{PCBA}     & 66.14        & 62.16         & 52.27         & 58.30            & 69.75           & 63.45          & 65.32        & 65.26         & 62.83         \\ \midrule
            \texttt{HIV}      & 66.28        & 45.97         & 43.35         & 52.78            & 64.50           & 57.86          & 53.46        & 46.57         & 53.85         \\
            \bottomrule
        \end{tabular}
    }
\end{table}

\noindent\textbf{Ablation Study on SFT Data used for Specialization.} We analyze the impact of SFT data in the experiments, as shown in Figure \ref{fig:academia sft data ablation}. The results show that changes in SFT data do not significantly affect model performance, particularly in fine-tuning and in-context learning settings. Even when the SFT and downstream data are the same, the model does not necessarily outperform models fine-tuned on other SFT datasets. This observation supports the motivation behind our proposed specialization method, which aims to shift the pretraining distribution $\gP$ toward the distribution of target domains. It also highlights the importance of designing an instance selection method to identify the most effective SFT data.

\begin{figure}[!h]
    \centering

    \subfloat[Finetune]{\includegraphics[width=0.32\linewidth]{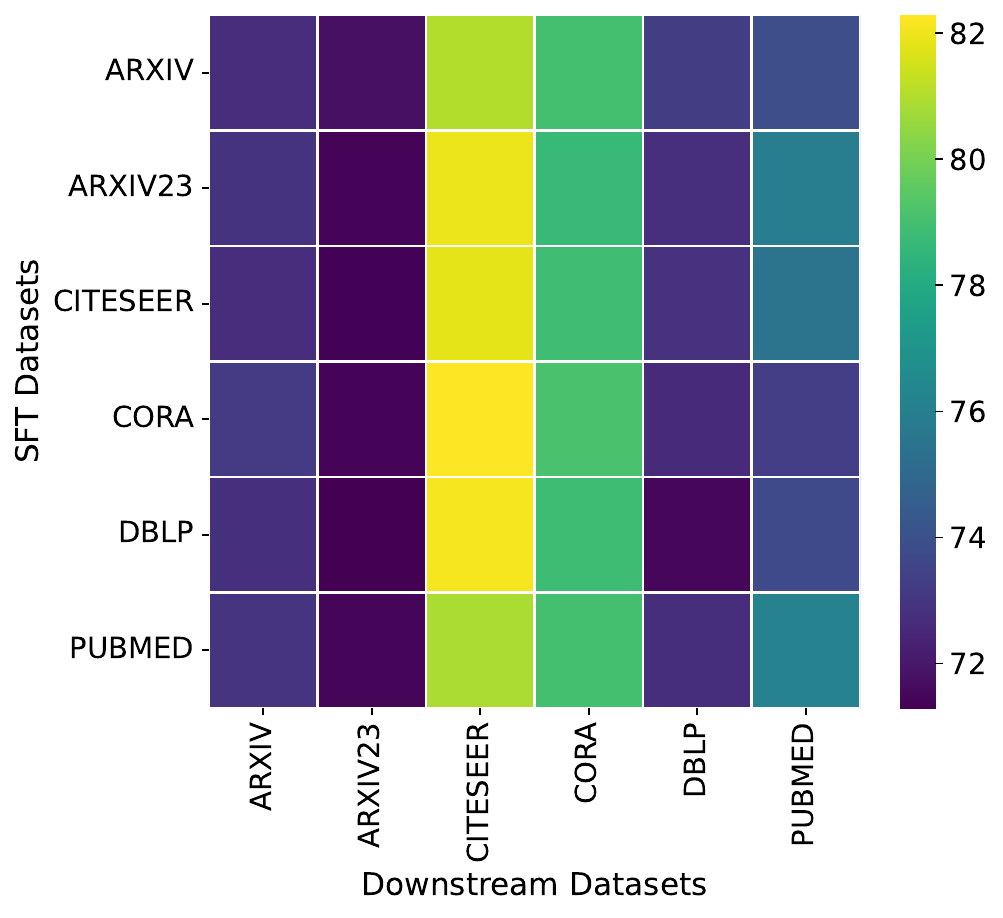}}
    \subfloat[In-context]{\includegraphics[width=0.32\linewidth]{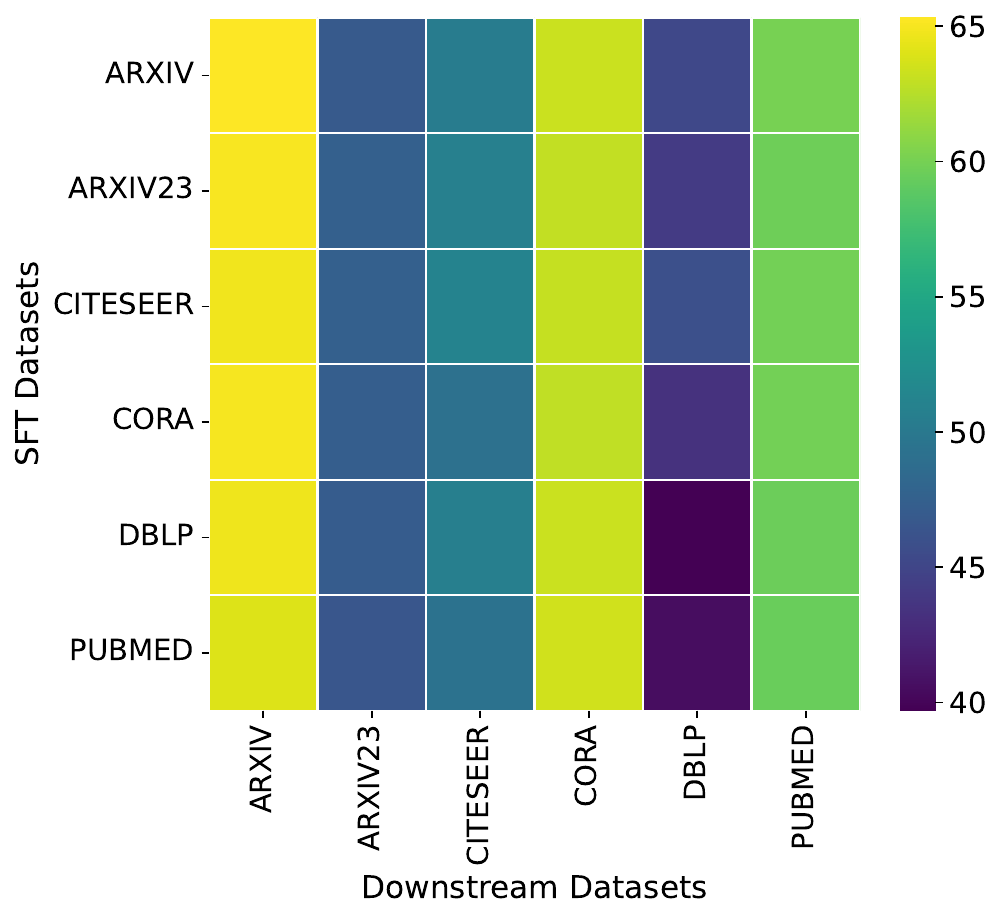}}
    \subfloat[Zero-shot]{\includegraphics[width=0.32\linewidth]{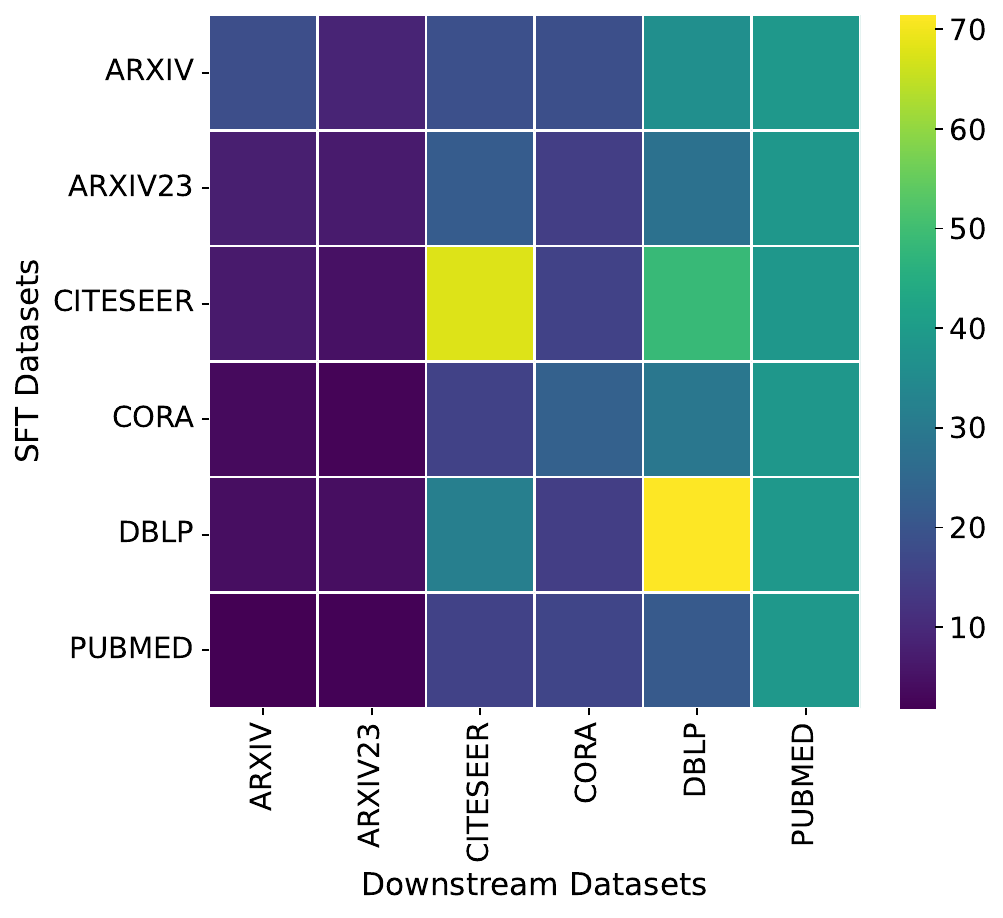}}

    \caption{The impact of different SFT datasets used for specialization in academic networks.}
    \label{fig:academia sft data ablation}
\end{figure}

\section{Efficiency Analysis on Industry-scale Data}
\label{app:efficiency}

In order to further evaluate the efficiency of task-trees over subgraphs, we conduct experiments on a synthetic graph, termed as \textbf{ogbn-products x50}. This dataset is derived from the existing ogbn-products graph, which contains 316,513 nodes and 19,337,745 edges (we use a subset of original ogbn-products in our experiments). Specifically, we replicate the graph 50 times, reindex the nodes in each replicated graph, and combine them into a single dataset. This synthetic dataset contains 15,825,650 nodes (there are 111,059,956 nodes in ogbn-papers100M) and 966,887,250 edges (there are 1,615,685,872 edges in ogbn-papers100M). To ensure connectivity among these replicated graphs, we randomly flip 5\% of the edges to connect the individual components, creating a more realistic large-scale graph structure.

We evaluated the time consumption per epoch on the ogbn-products x50 dataset and compared the efficiency of our proposed method, GIT-task-tree, against GIT-SubG (a variation that replaces task-trees with subgraphs). With a batch size of 512—selected to avoid out-of-memory issues in subgraph-based methods—the results are as follows:

\begin{table}[!h]
    \centering
    \caption{Efficiency comparison between subgraphs and task-trees in ogbn-products x50 (15,825,650 nodes and 966,887,250 edges).}
    \label{tab:efficiency}
    \resizebox{0.5\linewidth}{!}{
        \begin{tabular}{lcc}
            \toprule
                                     & GIT-SubG & GIT-Task-Tree \\ \midrule
            Sampling                 & 546s     & 546s          \\
            Subgraph/Tree Extraction & 948s     & 0s            \\
            Encoding                 & 5,717s   & 2,429s        \\
            All                      & 7,211s   & 2,975s        \\
            \bottomrule
        \end{tabular}
    }
\end{table}

These results underscore the significant efficiency advantages of our method. Notably, the efficiency gap between GIT-task-tree and GIT-SubG on this dataset is larger than what we observed during pretraining (i.e., 208 seconds per epoch for GIT-task-tree versus 280 seconds per epoch for GIT-SubG). We attribute this to the substantial overlap between subgraphs in this dataset, which increases the space and time required for processing.

\section{General Reasoning Capability of Specialized Models}

We further analyze the performance of specialized models on general reasoning tasks beyond their specific domains. We assess the model's performance on other domains as a measure of its general reasoning ability. For example, if a model is specialized for academic networks, its general reasoning capability refers to its performance on graphs from other domains, such as e-commerce networks, knowledge graphs, and molecular graphs. The results are presented in Table \ref{tab:general cap in-context}. We report in-context learning performance rather than basic fine-tuning due to computational efficiency. Additionally, we include the performance of the pretrained general model without specialization as a baseline. If the specialized model performs worse than the general model, it suggests that specialization may diminish GIT's general reasoning capability. From the table, it is clear that while specialized models excel in their specific domains, they struggle in other domains. This degradation of general inference capability, often referred to as the \textit{specialization tax}, is a common challenge in building specialized large language models. The specialization tax can limit the model's practicality in scenarios requiring both domain-specific knowledge and the ability to handle general tasks. Thus, balancing domain-specific performance with maintaining general reasoning capability is an important research direction.

\begin{table}[!h]
    \centering
    \caption{The in-context learning performance of GIT with different specialization datasets (SFT Data) on four domains. The results of each domain is the average of all datasets within the domain.}
    \label{tab:general cap in-context}
    \resizebox{0.8\linewidth}{!}{
        \begin{tabular}{llcccc}
            \toprule
                                                   & SFT Data                  & Academia          & E-commerce        & KG                & Molecule          \\ \midrule
            \textbf{General Model}                 & -                         & \underline{54.00} & \underline{57.22} & \underline{67.55} & \underline{55.96} \\ \midrule
            \multirow{4}{*}{\bf Specialized Model} & \texttt{Arxiv} (academia) & \textbf{55.18}    & 57.63             & 66.80             & 54.42             \\
                                                   & \texttt{Products} (E-com) & 50.09             & \textbf{58.01}    & 65.06             & 55.75             \\
                                                   & \texttt{FB15K237} (KG)    & 54.70             & 56.57             & \textbf{67.80}    & 55.37             \\
                                                   & \texttt{PCBA} (Mol)       & 50.49             & 56.87             & 61.49             & \textbf{62.82}    \\
            \bottomrule
        \end{tabular}
    }
\end{table}

\section{More Parameters Enhance Model Performance}
\label{app:model size full}

We present comprehensive results of general GIT with different hidden dimensions in Figure \ref{fig:scaling law model size full}. For computational efficiency, we does not report results on datasets needing intensive computing resources. We observe that increasing the number of model parameters consistently improves performance across both basic fine-tuning and in-context learning settings. Notably, the performance improvement is more pronounced in in-context learning as the model size increases. This may be because, in in-context learning, the model is not fine-tuned on downstream tasks, making the knowledge retained in the original model more crucial. Larger hidden dimensions allow the model to preserve more knowledge. However, when the model is fine-tuned on downstream tasks, the pretraining knowledge is adapted to the specific task, reducing the reliance on the original model's knowledge and leading to a relatively smaller performance gap.

\begin{figure}[!h]
    \centering

    \subfloat[In-context Learning]{\includegraphics[width=\linewidth]{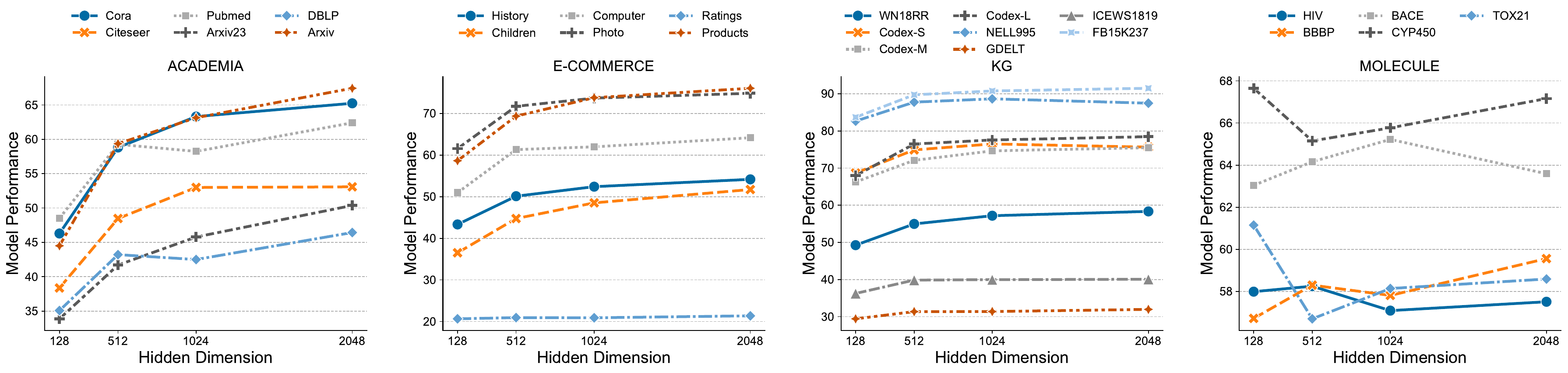}} \\
    \subfloat[Finetune]{\includegraphics[width=\linewidth]{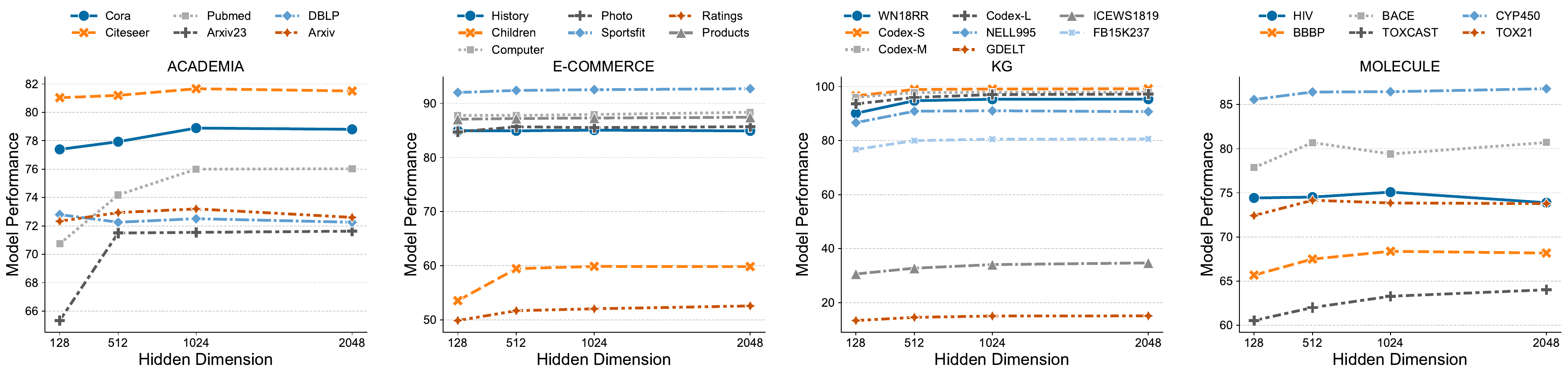}}

    \caption{The comprehensive results of the impact of hidden dimensions on model performance.}
    \label{fig:scaling law model size full}
\end{figure}

\section{Comparison to Few-Shot Learning Methods}

We compare the performance of GIT with methods designed for few-shot learning on \texttt{Arxiv} in Table \ref{tab:arxiv few-shot}. We include a wide range of baselines: GPN \citep{ding2020graph} is a framework that leverages GNN and meta-learning to address few-shot node classification by learning a transferable metric space. TENT \citep{wang2022task} introduces three levels of adaptation—node-level, class-level, and task-level—to mitigate task variance and improve the model's generalization performance across different meta-tasks. GLITTER \citep{wang2022graph} enhances few-shot node classification by learning task-specific graph structures for each meta-task using node influence and mutual information. TLP \citep{tan2022transductive} transfers pretrained node embeddings fine-tunes a simple linear classifier on novel classes. Prodigy \citep{huang2023prodigy} enables in-context learning over graphs by designing graph prompt learning template. It is important to note that we implement an in-context version of few-shot learning without fine-tuning GIT on the few-shot task, whereas the other methods require fine-tuning. Despite this, GIT achieves the second-best performance across all settings, outperforming 5 out of 6 few-shot learning methods. This highlights GIT's strong potential in scenarios with limited instances and labels.

\begin{table}[!h]
    \centering
    \caption{The few-shot performance on \texttt{Arxiv}, comparing to few-shot learning methods.}
    \label{tab:arxiv few-shot}
    \resizebox{\linewidth}{!}{
        \begin{tabular}{lcccccc}
            \toprule
                      & \multicolumn{3}{c}{\bf 5-way} & \multicolumn{3}{c}{\bf 3-way}                                                                                                 \\ \cmidrule(lr){2-4} \cmidrule(lr){5-7}
                      & 5-shot                        & 3-shot                        & 1-shot                & 5-shot                & 3-shot                & 1-shot                \\ \midrule
            GPN       & 50.53 ± 3.07                  & 48.32 ± 3.80                  & 38.58 ± 1.61          & 62.25 ± 4.94          & 58.52 ± 3.00          & 48.45 ± 5.60          \\
            TENT      & 60.83 ± 7.45                  & 56.03 ± 8.90                  & 45.62 ± 10.70         & 74.20 ± 9.93          & 70.48 ± 11.50         & 59.38 ± 13.55         \\
            GLITTER   & 56.00 ± 4.40                  & 57.44 ± 4.90                  & 47.12 ± 2.73          & 62.13 ± 10.85         & 60.93 ± 12.12         & 59.20 ± 5.48          \\
            TLP-BGRL  & 50.13 ± 8.78                  & 46.21 ± 7.92                  & 35.81 ± 8.58          & 62.93 ± 11.74         & 58.37 ± 11.34         & 46.30 ± 10.83         \\
            TLP-SURGL & 77.89 ± 6.46                  & 74.19 ± 7.55                  & 61.75 ± 10.07         & 86.27 ± 7.54          & 83.75 ± 8.86          & 73.46 ± 12.68         \\ \midrule
            Prodigy   & 61.09 ± 5.85                  & 58.64 ± 5.84                  & 48.23 ± 6.18          & 73.64 ± 6.93          & 71.43 ± 7.28          & 61.59 ± 8.53          \\ \midrule
            GIT - G   & 70.50 ± 0.47                  & \textbf{64.07 ± 0.50}         & 49.18 ± 0.56          & 80.20 ± 0.67          & 74.65 ± 0.54          & 61.93 ± 0.18          \\
            GIT - S   & \textbf{70.70 ± 0.28}         & 64.06 ± 0.58                  & \textbf{50.94 ± 0.57} & \textbf{80.51 ± 0.68} & \textbf{76.05 ± 0.53} & \textbf{63.42 ± 0.46} \\ \bottomrule
        \end{tabular}
    }
\end{table}

\end{document}